\documentclass{article} 
\usepackage{iclr2025_conference,times}
\usepackage{url}

\usepackage{amsmath,amsfonts,bm}









\def\eqref#1{equation~\ref{#1}}









\def\1{\bm{1}}










\DeclareMathAlphabet{\mathsfit}{\encodingdefault}{\sfdefault}{m}{sl}
\SetMathAlphabet{\mathsfit}{bold}{\encodingdefault}{\sfdefault}{bx}{n}













\DeclareMathOperator*{\argmax}{arg\,max}
\DeclareMathOperator*{\argmin}{arg\,min}

\usepackage{hyperref}
\usepackage{url}
\usepackage[utf8]{inputenc} 
\usepackage[T1]{fontenc}    
\usepackage{booktabs}       
\usepackage{amsfonts}       
\usepackage{nicefrac}       
\usepackage{microtype}      
\usepackage{xcolor}         
\usepackage{graphicx}
\usepackage{subcaption}

\usepackage{verbatim} 
\usepackage{enumerate}
\usepackage{bbm}
\usepackage{commath}
\usepackage{amsthm}
\usepackage{amsmath}  

\newcommand{\indep}{\perp \!\!\! \perp}

\usepackage{wrapfig}
\usepackage{amsbsy}
\usepackage{float}
\usepackage{tabularx}
\usepackage[inline]{enumitem}
\setlist[enumerate,1]{label=\textit{\roman*)}}
\usepackage{tikz}
\usepackage{tikz-cd}
\usepackage{wrapfig}

\usepackage{algorithm}
\usepackage[noend]{algpseudocode}
\algdef{SE}[SUBALG]{Indent}{EndIndent}{}{\algorithmicend\ }%
\algtext*{Indent}
\algtext*{EndIndent}

\newcommand{\RNum}[1]{\uppercase\expandafter{\romannumeral #1\relax}}

\tikzset{
  bold arrow/.style={
    ->,
    line width=0.8pt, 
    >=latex,
   every node/.style={draw=thick, circle, align=center, inner sep=2pt, line width=5pt}, 
    postaction={draw, line width=0.8pt, -{Stealth[length=6pt, width=5pt]}, fill=gray!20}
  }
}
\usetikzlibrary{fit}

\definecolor{papercolor}{HTML}{0668E1}
\definecolor{darkred}{rgb}{0.68,0.05,0.0}

\hypersetup{
    colorlinks,
    linkcolor={papercolor!75!black},
    citecolor={papercolor!75!black},
    urlcolor={papercolor!75!black}
}

\title{An Information Criterion for Controlled Disentanglement of Multimodal Data}


\author{Chenyu Wang\thanks{Equal contribution}$\;\,^{1,2}$, Sharut Gupta\footnotemark[1]$\;\,^{1}$, Xinyi Zhang$^{1,2}$, Sana Tonekaboni$^{2}$,  \\[5pt] \textbf{Stefanie Jegelka$^{1,3}$, Tommi Jaakkola$^{1}$, Caroline Uhler$^{1,2}$}
\\[8pt]
$^1$MIT \ $^2$Broad Institute of MIT and Harvard \ $^3$TU Munich
}

%
\newtheorem{definition}{Definition}
\newtheorem{prop}{Proposition}
\usepackage{xspace}

\newcommand{\algo}{\textsc{DisentangledSSL}\xspace}

\newcommand{\rebuttal}[1]{\textcolor{black}{{#1}}}

\iclrfinalcopy 
\begin{document}

\maketitle

\begin{abstract}
Multimodal representation learning seeks to relate and decompose information inherent in multiple modalities. By disentangling modality-specific information from information that is shared across modalities, we can improve interpretability and robustness and enable downstream tasks such as the generation of counterfactual outcomes. Separating the two types of information is challenging since they are often deeply entangled in many real-world applications. We propose \textbf{Disentangled} \textbf{S}elf-\textbf{S}upervised \textbf{L}earning (\algo), a novel self-supervised approach for learning disentangled representations. We present a comprehensive analysis of the optimality of each disentangled representation, particularly focusing on the scenario not covered in prior work where the so-called \textit{Minimum Necessary Information} (MNI) point is not attainable. We demonstrate that \algo successfully learns shared and modality-specific features on multiple synthetic and real-world datasets and consistently outperforms baselines on various downstream tasks, including prediction tasks for vision-language data, as well as molecule-phenotype retrieval tasks for biological data. The code is available at \url{https://github.com/uhlerlab/DisentangledSSL}.
\end{abstract}

\section{Introduction}
Humans understand and interact with the world using multiple senses, each providing unique and complementary information essential for forming a comprehensive mental representation of the environment. 
To emulate human-like perception, 
multimodal representation learning~\citep{bengio2013representation,liang2022foundations} seeks to decipher complex systems by combining information from multiple modalities into holistic representations, showcasing significant applications across fields from vision-language~\citep{yuan2021multimodal,lu2019vilbert,radford2021learning} to biology~\citep{yang2021multi,zhang2022graph,wang2023removing}.
Large multimodal representation learning models such as CLIP~\citep{radford2021learning}, trained through self-supervised learning, maximally capture the mutual information shared across multiple modalities.
\rebuttal{These models exploit the assumption of multi-view redundancy~\citep{tosh2021contrastive,sridharan2008information}, which indicates that shared information between modalities is exactly what is relevant for downstream tasks.}

\rebuttal{However, the misalignment between modalities restricts the application of these methods in real-world multimodal scenarios. Among various contributing factors, the modality gap, rooted in the inherent differences in representational nature and information content across modalities~\citep{liang2022mind, ramasinghe2024accept,huh2024platonic}, plays a significant role.}
\rebuttal{This highlights the need for a \textit{disentangled representation space} that captures both shared and modality-specific information:}
\begin{itemize}[leftmargin=*]
\vspace{-5pt}
    \item \textbf{Coverage: }Distinct modalities often contribute unique, complementary information crucial for specific tasks~\citep{liang2024factorized,liu2024focal}, making it essential to capture modality-specific features effectively. For instance, in multimodal sentiment analysis, text conveys explicit sentiment, while vocal tone and facial expressions provide nuanced emotional cues. 
    \item \textbf{Disentanglement: }As emphasized by \citet{zhang2024partially} in biological contexts, separating shared from modality-specific information is vital for interpretability and decision-making. However, maximizing mutual information across modalities can blur this separation
    ~\citep{fischer2020conditional}, reducing stability and robustness for tasks like cross-modal translation and counterfactual generation.
\end{itemize}

Disentangled representation learning in multimodal data traces back to seminal works in Variational Autoencoders and Generative Adversarial Networks~\citep{lee2021private, daunhawer2021self, denton2017unsupervised, gonzalez2018image}, aiming to isolate the underlying factors of variation within the data.
Recent studies have increasingly explored self-supervised learning methods to capture both shared and modality-specific information~\citep{ liu2024focal, li2024disentangled,zhang2024partially}. In particular, information-theoretic approaches model modality-specific information as the complement of shared information, learning these representations either sequentially~\citep{sanchez2020learning} or jointly~\citep{liang2024factorized,pan2021disentangled}. However, none of these methods provide rigorous definitions or theoretical guarantees on the optimality of disentangled representations, particularly under the multimodal scenario where the \textit{Minimum Necessary Information} (MNI) point between the two modalities, as introduced in \citet{fischer2020conditional}, is often unattainable.
In simple terms, the MNI criterion characterizes $Z$ as a representation of $X^1$ that perfectly captures all the important information needed to understand $X^2$, without containing any unnecessary details. 
Essentially, knowing $Z$ gives you the same understanding of $X^2$ as directly knowing $X^1$, and vice versa\footnote{Formally, {\small$I(Z;X^1)=I(Z;X^2)=I(X^1;X^2)$}~\citep{fischer2020conditional}, which we further explain in Section~\ref{sec.method.background}.}.
However, the shared and modality-specific information is deeply entangled in a wide range of real-world applications, i.e., the shared and modality-specific components 
are intertwined and result in similar observations, leading to unattainable MNI. Figure \ref{fig:mainfig} illustrates an example 
in the context of high-content drug screens. Specifically, the two modalities capture distinct but related aspects of drug mechanism and cancer cell viability, and extracting precise shared features from each modality is not feasible. 
We provide additional examples of unattainable MNI in Appendix~\ref{app.nonmni.example}.

\begin{figure}[!t]
    \centering
    \includegraphics[width=\textwidth]{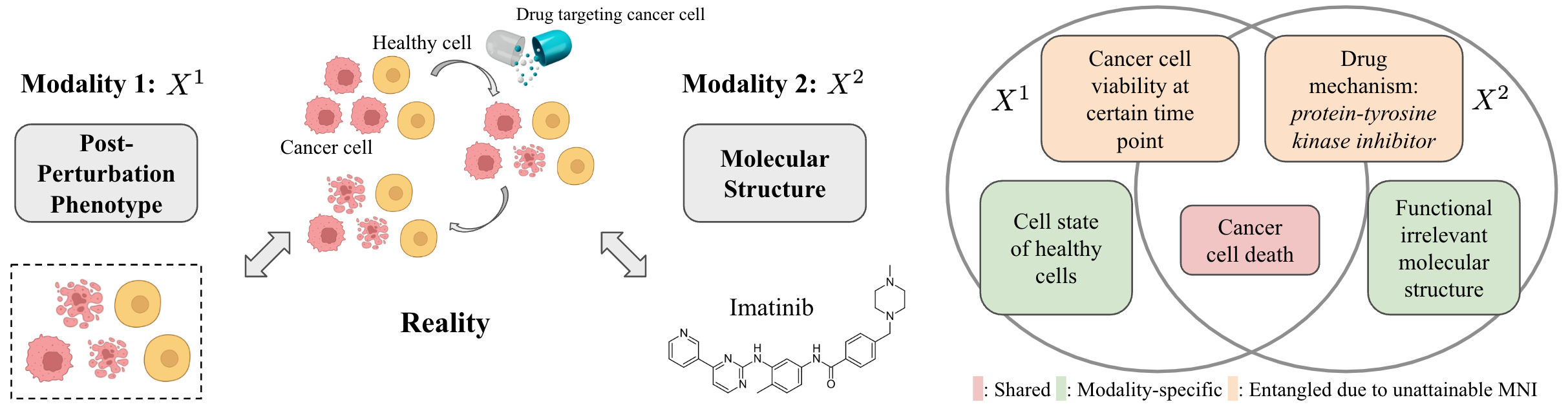}
    \caption{Post-perturbation phenotype ($X_1$) (i.e., cellular images or gene expression after the application of a drug to cells) and molecular structure ($X_2$) of an underlying drug perturbation system where cancer cells are targeted and killed while healthy cells remain unaffected. The Venn diagram illustrates shared and specific information between modalities \( X_1 \) and \( X_2 \): shared content is shown in red, modality-specific content in green, and entangled content due to unattainable MNI in orange. For example, for the drug mechanism, the molecular structure conveys full information, while the phenotype provides partial information (i.e. mechanisms causing cell death). Similarly, for the states of healthy cells, the phenotype specifies their cell states, whereas the molecular structure only indicates that the cells are unaffected without detailing their specific states.}
    \label{fig:mainfig}
    \vspace{-15pt}
\end{figure}

In this work, we propose \textbf{Disentangled} \textbf{S}elf \textbf{S}upervised \textbf{L}earning (\algo), a self-supervised representation learning approach for multimodal data that effectively separates shared and modality-specific information. Building on principles of information theory, we devise a step-by-step optimization strategy to learn these representations and maximize the variational lower bound on our proposed objectives during model training. Unlike existing works, we specifically address the challenging and frequently encountered scenarios where MNI is unattainable. We further articulate a formal analysis of the optimality of each disentangled representation and offer theoretical guarantees for our algorithm's ability to achieve optimal disentanglement regardless of the attainability of the MNI point. To the best of our knowledge, this establishes the first set of analyses on the disentangled representations under such settings. 
Empirically, we demonstrate that \algo successfully achieves both distinct coverage and disentanglement for representations on a suite of synthetic datasets and multiple real-world multimodal datasets. It consistently outperforms baselines on prediction tasks in the multimodal benchmarks proposed in~\citet{liang2021multibench}, as well as molecule-phenotype retrieval tasks in high-content drug screens datasets: LINCS gene expression profiles~\citep{subramanian2017next} and RXRX19a cell imaging profiles~\citep{cuccarese2020functional}.
To summarize, the main contributions of our work are:
\vspace{-3mm}
\begin{itemize}[leftmargin=*]
    \item We propose \algo, an information-theoretic framework for learning the disentangled shared and modality-specific representations of multimodal data. 
    \item We present a comprehensive theoretical framework to study the quality of disentanglement that generalizes to settings when the Minimum Necessary Information (MNI) is unattainable. We prove that \algo is guaranteed to learn the optimal disentangled representations.
    \item Empirically, we demonstrate the efficacy of \algo across diverse synthetic and real-world multimodal datasets and tasks, including prediction tasks for vision-language data and molecule-phenotype retrieval tasks for biological data.
\end{itemize}

\section{Method}
In this section, we detail our proposed method, \algo, for learning disentangled representations in multimodal data. 
We begin by outlining the graphical model that formalizes the problem in Section \ref{sec.method.graphical} and defining the key properties of ``desirable'' representations in Section \ref{sec.method.optdef}. Subsequently, we describe the \algo\ framework in Section \ref{sec:optimization} and provide theoretical guarantees on the optimality of the learned representations. Finally, in Section \ref{sec.method.trainobj}, we present the tractable training objectives that facilitate efficient learning for each term of our method.

\subsection{Multimodal Representation Learning with Disentangled Latent Space}
\label{sec.method.graphical}
\begin{wrapfigure}{r}{0.38\textwidth}
    \centering
    \vspace{-13pt}
    \includegraphics[width=0.38\textwidth]{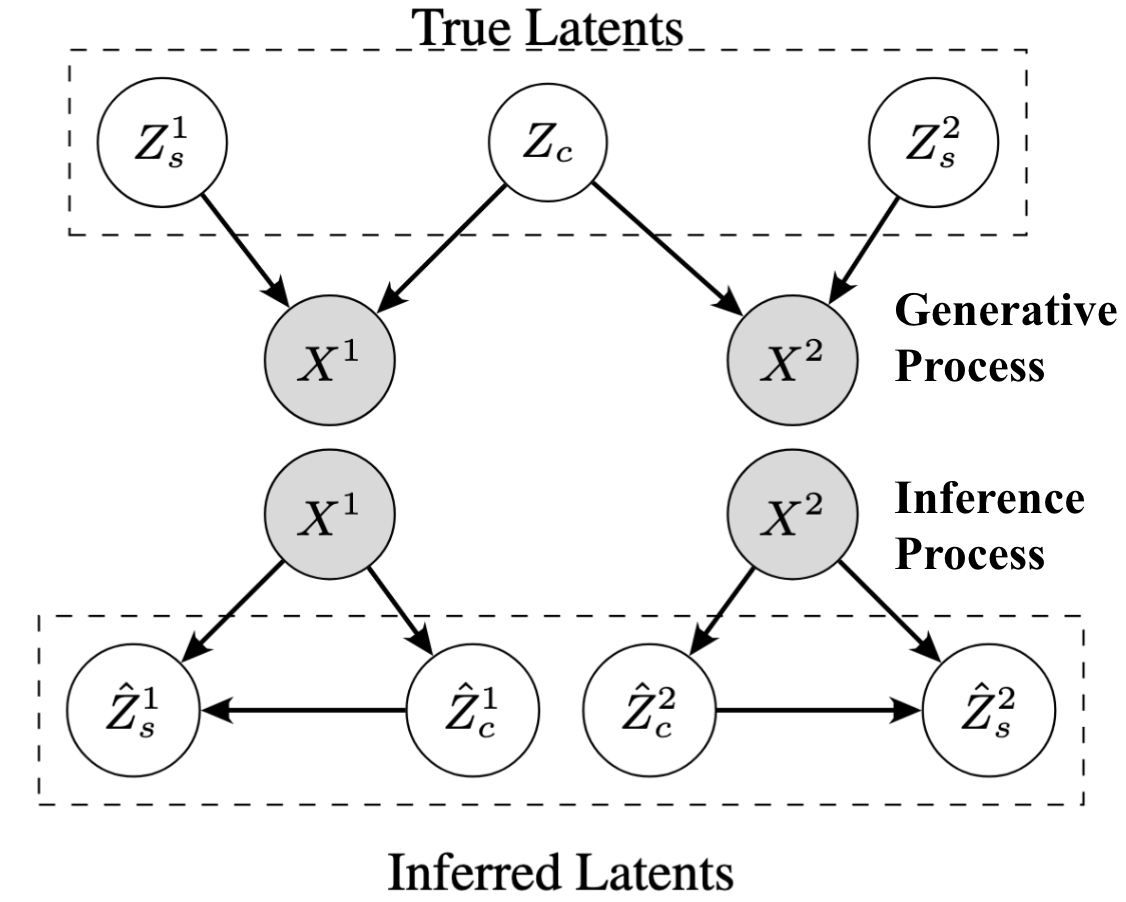}
    \vspace{-15pt}
    \caption{\rebuttal{Graphical model.}}
    \label{fig:gmodel}
    \vspace{-23pt}
\end{wrapfigure}
\algo learns disentangled representations in latent space, separating modality-specific information from shared factors across paired observations ($X^1$, $X^2$). This generative process is modeled in Figure \ref{fig:gmodel}. Each observation is generated from two distinct latent representations: 
the modality-specific representations ($Z_s^1$ and $Z_s^2$) that contain information exclusive to their respective modalities, and a shared representation ($Z_c$) that contains information common to both modalities. We refer to these as the true latents.   

\algo infers the shared representation from both modalities independently, i.e. $\hat{Z}^1_c \sim p(\cdot|X^1)$ and $\hat{Z}^2_c \sim p(\cdot|X^2)$. The modality-specific information for each modality is encoded by variables $\hat{Z}^1_s$ and $\hat{Z}^2_s$. Note that for the true latents, $Z_s^1$ and $Z_c$ are conditionally dependent on $X^1$ due to the V-structure in the graphical model. To preserve such dependencies in the inferred latents,
$\hat{Z}^1_s$ and $\hat{Z}^2_s$ are conditioned on both the respective observations and the inferred shared representations, with $\hat{Z}^1_s \sim p(\cdot|X^1,\hat{Z}^1_c)$ and $\hat{Z}^2_s\sim p(\cdot|X^2,\hat{Z}^2_c)$.

\subsection{Information Criteria for the Optimal Inferred Representations}
\label{sec.method.optdef}
We establish information-theoretic criteria to ensure the shared and modality-specific representations are informative and disentangled, capturing key features while minimizing redundancy.

\subsubsection{Information Bottleneck Principle and Minimum Necessary Information}
\label{sec.method.background}

The shared representations $\hat{Z}_c^1$ and $\hat{Z}_c^2$ should effectively balance compactness and expressivity, as studied by the information bottleneck (IB) principle in both supervised and self-supervised settings~\citep{tishby2000information, shwartz2017opening, shwartz2023compress}. 
The IB objective seeks to optimize the representation $Z^1$ of an observation $X^1$ in relation to a target variable $X^2$, following the Markov chain $Z^1 \leftarrow X^1 \leftrightarrow X^2$. It balances the trade-off between preserving relevant information about $X^2$, i.e. $I(Z^1; X^2)$, and compressing the representation, i.e. $I(Z^1; X^1)$ (see more details in Appendix~\ref{app.background}).
The optimal representation should be both \textbf{sufficient}, i.e. $I(Z^1;X^2)=I(X^1;X^2)$~\citep{achille2018information,shwartz2023compress}, and \textbf{minimal} \citep{achille2018information}.
Based on these criteria, $Z^1$ is said to capture \textbf{Minimum Necessary Information (MNI)} between $X^1$ and $X^2$ if the following holds~\citep{fischer2020conditional}:
{\small\begin{equation}
    I(X^1;X^2)=I(Z^1;X^2)=I(Z^1;X^1) \nonumber
\end{equation}}
\vspace{-15pt}

MNI characterizes an ideal scenario between $X^1$ and $X^2$, where $Z^1$ captures complete information about $X^2$ with no extraneous information, i.e. $I(Z^1;X^1|X^2)=0$\footnote{ {\small$I(Z^1;X^1|X^2)=I(Z^1;X^1,X^2)-I(Z^1;X^2)=I(Z^1;X^1)-I(Z^1;X^2)=0$}, where the first equality is due to the property of conditional mutual information, and the second is due to the Markov structure {\small$Z^1\text{-}X^1\text{-}X^2$}.}. It may not be attainable for an arbitrary joint distribution $p(X^1,X^2)$ (see Appendix~\ref{app.mnicond}), particularly in general multimodal self-supervised settings (e.g. Figure~\ref{fig:mainfig} and Appendix~\ref{app.nonmni.example}). Despite its significance, a comprehensive discussion of representation optimality when MNI is unattainable is often overlooked in prior work.

\subsubsection{Optimal Shared Representations: MNI Attainable or not}

We propose a definition of optimality for the shared representations that applies to both scenarios -- when MNI is attainable or not, as defined in Equation (\ref{eq:objective_zs})\footnote{\rebuttal{$\delta_c$ denotes a small tolerance to account for non-attainability of MNI for the modality-specific representation.}}:
{\small\begin{equation}
    \label{eq:objective_zs}
    \begin{split}
    \hat{Z}_c^{1*} &= \argmin_{Z^1} I(Z^1;X^1|X^2), \text{ s.t. } I(X^1;X^2) - I(Z^1;X^2) 
\leq \delta_c\\
    \hat{Z}_c^{2*} &= \argmin_{Z^2} I(Z^2;X^2|X^1), \text{ s.t. } I(X^1;X^2) - I(Z^2;X^1) 
\leq \delta_c
    \end{split}
\end{equation}}
\vspace{-10pt}

Formally, minimizing the conditional mutual information, $I(Z^1; X^1| X^2)$, ensures that the shared representation captures only the information that is truly common to both $X^1$ and $X^2$, while discarding modality-specific details unique to $X^1$. Compared with $I(Z^1;X^1)$ in IB, it provides a more precise measure of compression and a more robust objective.

\begin{wrapfigure}{r}{0.4\textwidth}
    \centering
    \vspace{-23pt}
    \includegraphics[width=0.38\textwidth]{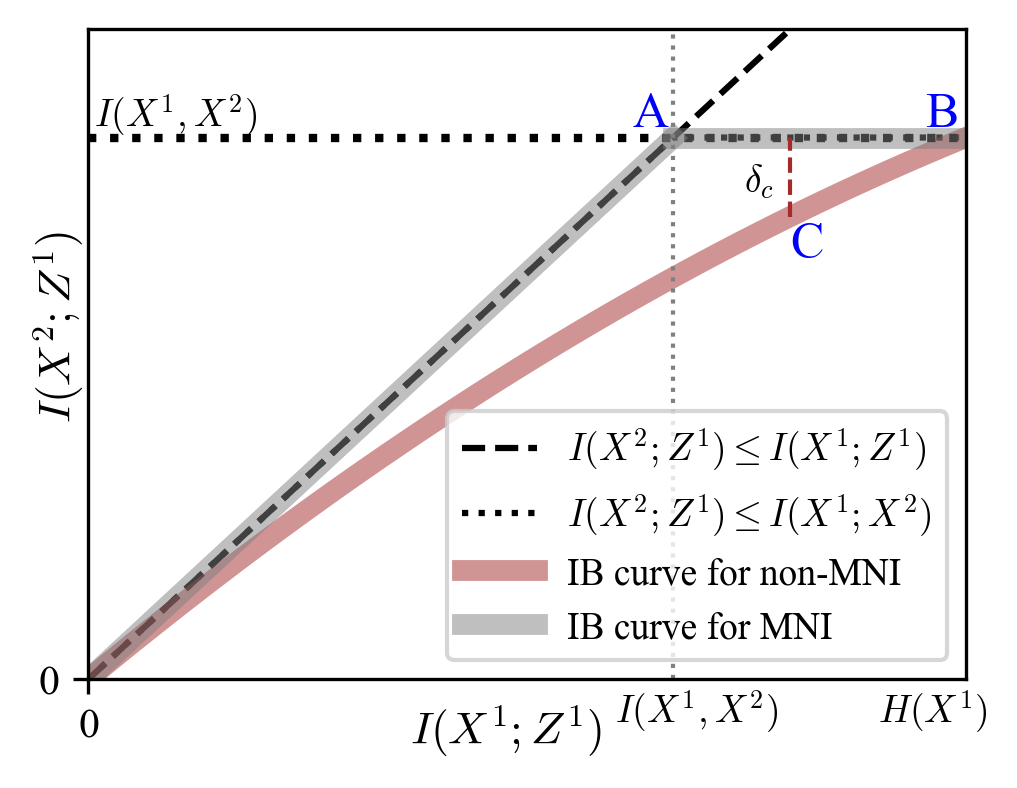}
    \vspace{-10pt}
    \caption{IB Curve}
    \label{fig:ibcurve}
    \vspace{-18pt}
\end{wrapfigure}

The constraint $I(X^1; X^2) - I(Z^1; X^2) \leq \delta_c$ ensures that $\hat{Z}_c^{1*}$ retains a substantial portion of the shared information between $X^1$ and $X^2$, controlling the difference within the limit $\delta_c$ and preventing significant information loss.
We utilize the \textbf{IB curve} $F(\delta)$\footnote{The IB curve is concave  \citep{gilad2003information}, monotonically non-decreasing, and upper bounded by line {\small$I(Z^1;X^2)=I(Z^1;X^1)$} and line {\small$I(Z^1;X^2)=I(X^1;X^2)$} (see details and proofs in Appendix \ref{app.ibprop}).} \citep{kolchinsky2018caveats,gilad2003information}, representing the maximum $I(Z^1;X^2)$ for a given compression level $I(X^1;Z^1) \leq \delta$, to illustrate the optimality in Figure~\ref{fig:ibcurve}. MNI is depicted as point A, and $\hat{Z}_c^{1*}$ corresponding to $\delta_c$ is shown as point C. When MNI is attainable, setting $\delta_c=0$ achieves MNI.

In contrast,  
\citet{achille2018information} formulated the optimization as {\small$Z^1 = \argmin_{Z^1: Z^1\text{-}X^1\text{-}X^2} I(X^1;Z^1), \textit{s.t. } I(Z^1;X^2)=I(X^1;X^2)$}, leading to MNI when attainable. This holds in supervised settings, assuming the data label $X^2$ is a deterministic function of $X^1$, as used by previous methods~\citep{kolchinsky2018caveats,fischer2020conditional,pan2021disentangled}. 
However, in general multimodal self-supervised scenarios where MNI is not attainable, this results in point B in Figure~\ref{fig:ibcurve}, which includes information of $X^1$ that has little relevance to $X^2$ to satisfy the equality constraint, causing a gap between the objective and the ideal representation.

\subsubsection{Optimal Specific Representations: Ensuring Coverage and Disentanglement}

Optimal modality-specific representations, $\hat{Z}_s^1$ and $\hat{Z}_s^2$, should capture information unique to each modality, being highly informative while minimizing redundancy with the shared representations. We hence define them via the optimization problems: 
{\small\begin{equation}
    \label{eq:objective_z1}
    \begin{split}
    \hat{Z_s^{1*}} &= \argmax_{Z^1} I(Z^1;X^1|X^2), \text{ s.t. } I(Z^1;\hat{Z_c^{1*}})\leq \delta_s \\
    \hat{Z_s^{2*}} &= \argmax_{Z^2} I(Z^2;X^2|X^1), \text{ s.t. } I(Z^2;\hat{Z_c^{2*}})\leq \delta_s   \end{split}
\end{equation}}
\vspace{-10pt}

Take $\hat{Z}_s^{1*}$ as an example. By maximizing the conditional mutual information, $I(Z^1; X^1|X^2)$, we seek to extract the unique information inherent to modality $X^1$ that is not redundant with what is shared with $X^2$. Note that the same term is minimized in the shared representations as described in Equation \ref{eq:objective_zs}. Integrating both the shared and the modality-specific representations together, the optimization objectives ensure full coverage of the entire information spectrum of each modality.

Meanwhile, the constraint $I(Z^1; \hat{Z}_c^{1*}) \leq \delta_s$ minimizes the overlap between modality-specific and shared representations, enhancing disentanglement by effectively separating the unique aspects of $X^1$ from the components shared with $X^2$.
\rebuttal{The parameter $\delta_s$ controls the trade-off between coverage and disentanglement.}

\subsection{\algo: a Step-by-Step Optimization Algorithm} \label{sec:optimization}
To achieve the optimal representations discussed in Section \ref{sec.method.optdef}, we introduce a two-step training procedure. The first step focuses on optimizing the shared latent representation, ensuring it captures the minimum necessary information as close as possible.
Building upon this, the second step utilizes the learned shared representations in step 1 to facilitate the learning of modality-specific representations. This sequential approach is formalized in the optimization objectives given in Equations \ref{eq:step1} and \ref{eq:step2}, \rebuttal{with the pseudocode provided in Appendix~\ref{sec.app.pseudo}}: 

\textbf{Step 1:} Learn the shared latent representations by encouraging the shared representation encoded from one modality to be highly informative about the other modality, while minimizing redundancy.
\begin{equation} \label{eq:step1}
    \begin{split}
        \hat{Z}_c^{1*} &= \argmax_{Z^1} L_c^1 = \argmax_{Z^1} I(Z^1;X^2) - \beta \cdot I(Z^1;X^1|X^2)\\ 
        \hat{Z}_c^{2*} &= \argmax_{Z^2} L_c^2 = \argmax_{Z^2} I(Z^2;X^1) - \beta \cdot I(Z^2;X^2|X^1)
    \end{split}
\end{equation}
\noindent \textbf{Step 2:} Learn the modality-specific latent representations based on the learned shared representations from step 1. 
\begin{equation} \label{eq:step2}
    \begin{split}
        \hat{Z}_s^{1*} &= \argmax_{Z^1} L_s^1 = \argmax_{Z^1} I(Z^1, \hat{Z}_c^{2*};X^1) - \lambda \cdot I(Z^1;\hat{Z}_c^{1*})\\
        \hat{Z}_s^{2*} &= \argmax_{Z^2} L_s^2 = \argmax_{Z^2} I(Z^2, \hat{Z}_c^{1*};X^2) - \lambda \cdot I(Z^2;\hat{Z}_c^{2*})
    \end{split}
\end{equation}
\vspace{-10pt}

The hyperparameters $\beta$ and $\lambda$ control the trade-off between relevance and redundancy for the shared and modality-specific representations respectively. We use the same values of $\beta$ and $\lambda$ for both modalities since they operate on similar information scales. Our sequential training approach, instead of a joint one, stems from the self-sufficient nature of each optimization procedure where one sub-optimal representation does not enhance the learning of the other. We offer a comprehensive analysis of the optimality guarantees of this step-by-step method as follows.

\subsubsection{Optimality Guarantee for the Learned Shared Representations}
This section explores how the step 1 objective, $L_c^1$, optimizes the shared representation between modalities by balancing expressivity and redundancy. We discuss its effectiveness in both scenarios--when MNI is attainable or not.

$L_c^1$ seeks representation $Z^1$ that maximizes the information shared between the modalities, i.e. $I(Z^1;X^2)$, while minimizing the information unique to each modality, i.e. $I(Z^1;X^1|X^2)$, to capture only the essential shared content. This aligns with the conditional entropy bottleneck (CEB) objective \citep{fischer2020conditional}\footnote{\rebuttal{Note that $L_c^1$ is the Lagrangian formulation of the constraint optimization in Equation \ref{eq:objective_zs}, among which the term $I(X^1;X^2)$ is irrelevant to the optimization target $Z_1$ and omitted from the objective.}}, an extension of the IB Lagrangian $L = I(Z^1; X^2) - \Tilde{\beta} I(Z^1; X^1)$ (see details in Appendix \ref{app.background}). While it serves as a robust objective for learning shared information, its optimality remains underexplored in \citet{fischer2020conditional}, particularly when MNI is unattainable. Additionally, \citet{fischer2020conditional} focuses on the supervised scenario where $X^2$ is the label of $X^1$, whereas we address the multimodal case with $X^1$ and $X^2$ being two data modalities, demonstrating its effectiveness in both attainable and unattainable MNI scenarios. 

\textbf{When MNI is attainable}, Proposition \ref{prop.mni.c} states that the step 1 optimization achieves MNI for any positive $\beta$. The proof is given in Appendix \ref{app.proof}.

\begin{prop}
\label{prop.mni.c}
    If MNI is attainable for random variable $X^1$ and $X^2$, maximizing $L_c^1=I(Z^1;X^2)-\beta I(Z^1;X^1|X^2)$ achieves MNI for any $\beta>0$, i.e. $I(\hat{Z}_c^{1*};X^1)=I(\hat{Z}_c^{1*};X^2)=I(X^1;X^2)$, where $\hat{Z}_c^{1*} := \argmax_{Z^1-X^1-X^2} L_c^1$.
\end{prop}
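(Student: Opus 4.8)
The plan is to reduce $L_c^1$ to a linear combination of the two unconditioned mutual informations and then squeeze it between data-processing bounds, so that the maximum value is pinned down exactly and any maximizer is forced onto the MNI point.

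First, I would invoke the Markov structure $Z^1\text{-}X^1\text{-}X^2$ to rewrite the conditional term exactly as in the paper's footnote, $I(Z^1;X^1\mid X^2)=I(Z^1;X^1)-I(Z^1;X^2)$, so that for every admissible $Z^1$ (one forming the chain $Z^1\text{-}X^1\text{-}X^2$),
\[
    L_c^1 = (1+\beta)\,I(Z^1;X^2) - \beta\, I(Z^1;X^1).
\]
Next I would establish the uniform upper bound $L_c^1\le I(X^1;X^2)$: applying the data-processing inequality along the chain gives $I(Z^1;X^1)\ge I(Z^1;X^2)$, hence (using $\beta>0$) $L_c^1\le I(Z^1;X^2)$, and a second application of the data-processing inequality gives $I(Z^1;X^2)\le I(X^1;X^2)$, which completes the bound.

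Then I would use the attainability hypothesis: it supplies a representation $Z^1$ with $I(Z^1;X^1)=I(Z^1;X^2)=I(X^1;X^2)$, for which $L_c^1=(1+\beta)I(X^1;X^2)-\beta I(X^1;X^2)=I(X^1;X^2)$. Hence the supremum of $L_c^1$ equals $I(X^1;X^2)$ and is attained, so $\hat Z_c^{1*}:=\argmax_{Z^1\text{-}X^1\text{-}X^2}L_c^1$ is well-defined. Finally I would run the bound in reverse: for any maximizer $\hat Z_c^{1*}$ we have $I(X^1;X^2)=L_c^1(\hat Z_c^{1*})\le I(\hat Z_c^{1*};X^2)\le I(X^1;X^2)$, so both inequalities are equalities. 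The second equality gives $I(\hat Z_c^{1*};X^2)=I(X^1;X^2)$ directly, and the first equality together with $\beta>0$ forces $I(\hat Z_c^{1*};X^1)=I(\hat Z_c^{1*};X^2)$; combining these yields exactly the MNI conditions.

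The argument is a two-sided squeeze and I do not expect a real obstacle. The only delicate points, which I would flag carefully, are: (i) the strict positivity $\beta>0$, which is precisely what lets the tightness of $L_c^1\le I(Z^1;X^2)$ be upgraded to the equality $I(\hat Z_c^{1*};X^1)=I(\hat Z_c^{1*};X^2)$; and (ii) the role of MNI attainability in guaranteeing the supremum is achieved, so that the $\argmax$ in the statement is not vacuous. The corresponding statement for $\hat Z_c^{2*}$ follows by swapping the roles of $X^1$ and $X^2$.
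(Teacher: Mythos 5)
Your proposal is correct and follows essentially the same route as the paper's proof: both use the Markov identity $I(Z^1;X^1\mid X^2)=I(Z^1;X^1)-I(Z^1;X^2)$ together with the data-processing inequality to squeeze $L_c^1\le I(X^1;X^2)$, exhibit the MNI point as a witness achieving this bound, and then read off from tightness of both inequalities (using $\beta>0$) that any maximizer satisfies the MNI conditions. Your explicit attention to existence of the maximizer and the role of $\beta>0$ is a welcome, if minor, sharpening of the same argument.
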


\textbf{When MNI is unattainable}, the inequalities $I(Z^1;X^1) \geq I(Z^1;X^2)$ and $I(X^1;X^2) \geq I(Z^1;X^2)$ cannot achieve equality simultaneously.
This indicates an inherent trade-off between capturing the entire shared information and avoiding the inclusion of modality-specific details. More precisely, under the condition of strict concavity of the $I(Z^1;X^2)-I(Z^1;X^1)$ information curve\footnote{When the IB curve is not strictly concave (i.e. partially linear), the properties hold except the linear sections. In these cases, methods like squared IB~\citep{kolchinsky2018caveats} can be used to achieve bijection mapping.}, such a trade-off is presented in Proposition \ref{prop.nonmni_c}. The proof is given in Appendix \ref{app.proof}.
\begin{prop}
\label{prop.nonmni_c}
    For random variables $X^1$ and $X^2$, when the IB curve $I(Z^1;X^2)=F(I(Z^1;X^1))$ is strictly concave,
    \\
    1) there exists a bijective mapping from $\beta$ in $L_c^1$ to the value of information constraint $\delta_c$ in the definition of optimal shared latent $\hat{Z}_c^{1*}$ in Equation \ref{eq:objective_zs};
    \\
    2) $\frac{\partial I(Z_{\beta}^{1*};X^1)}{\partial \beta} < 0, \ \frac{\partial I(Z_{\beta}^{1*};X^2)}{\partial \beta} < 0$, where $Z_{\beta}^{1*}$ is the optimal solution corresponding to a certain $\beta$. 
\end{prop}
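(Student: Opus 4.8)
\textbf{Proof proposal for Proposition \ref{prop.nonmni_c}.}

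The plan is to analyze the Lagrangian $L_c^1 = I(Z^1;X^2) - \beta I(Z^1;X^1|X^2)$ by rewriting the conditional mutual information via the Markov chain $Z^1\text{-}X^1\text{-}X^2$ as $I(Z^1;X^1|X^2) = I(Z^1;X^1) - I(Z^1;X^2)$, so that $L_c^1 = (1+\beta)I(Z^1;X^2) - \beta I(Z^1;X^1)$. First I would observe that for any fixed compression level $I(Z^1;X^1) = r$, maximizing $L_c^1$ forces $I(Z^1;X^2)$ to its maximal value $F(r)$, so the optimum lies on the IB curve. Hence maximizing $L_c^1$ over all $Z^1$ reduces to the scalar problem $\max_{r \ge 0} \big[(1+\beta)F(r) - \beta r\big]$. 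Since $F$ is strictly concave and differentiable on the relevant interval, the first-order condition gives $F'(r^*) = \frac{\beta}{1+\beta}$, which (by strict concavity, hence strict monotonicity of $F'$) has a unique solution $r^*(\beta)$. This pins down the optimal operating point $(I(Z_\beta^{1*};X^1), I(Z_\beta^{1*};X^2)) = (r^*(\beta), F(r^*(\beta)))$.

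For part 1), I would define the map $\beta \mapsto \delta_c$ by $\delta_c(\beta) := I(X^1;X^2) - I(Z_\beta^{1*};X^2) = I(X^1;X^2) - F(r^*(\beta))$, i.e. the vertical gap in Figure~\ref{fig:ibcurve} between the optimal point and the horizontal asymptote. To see this is a bijection onto the relevant range of $\delta_c$, note that $F'$ is a strictly decreasing continuous bijection from the interval of achievable compression levels onto an interval of slopes; the reparametrization $\beta \mapsto \frac{\beta}{1+\beta}$ is itself a strictly increasing continuous bijection from $(0,\infty)$ onto $(0,1)$; composing, $\beta \mapsto r^*(\beta)$ is a strictly decreasing continuous bijection, and since $F$ is strictly increasing, $\beta \mapsto F(r^*(\beta))$ is strictly decreasing, so $\beta \mapsto \delta_c(\beta)$ is strictly increasing, continuous, hence a bijection onto its image. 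I would also check that the constraint $I(X^1;X^2) - I(Z^1;X^2) \le \delta_c$ in Equation~\ref{eq:objective_zs} is active at the optimum (otherwise one could increase $I(Z^1;X^1|X^2)$ contradicting its minimality), so the constrained optimizer of Equation~\ref{eq:objective_zs} coincides with the point $(r^*(\beta), F(r^*(\beta)))$, which is exactly what makes the correspondence between $\beta$ and $\delta_c$ well-defined as a bijection between the two formulations.

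For part 2), having established $F'(r^*(\beta)) = \frac{\beta}{1+\beta}$, I would differentiate implicitly: $F''(r^*(\beta)) \cdot \frac{dr^*}{d\beta} = \frac{1}{(1+\beta)^2} > 0$. Strict concavity gives $F''(r^*) < 0$, so $\frac{dr^*}{d\beta} < 0$, i.e. $\frac{\partial I(Z_\beta^{1*};X^1)}{\partial \beta} < 0$. Then $\frac{\partial I(Z_\beta^{1*};X^2)}{\partial \beta} = F'(r^*(\beta)) \cdot \frac{dr^*}{d\beta} < 0$ since $F' > 0$ on the strictly increasing portion of the curve. This completes both parts.

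The main obstacle I anticipate is the regularity bookkeeping rather than any deep idea: I need to argue carefully that $F$ is differentiable (not merely concave) on the open interval of interest, identify precisely the sub-interval of compression levels where the slope $F'$ takes values in $(0,1)$ (so that a finite optimizer $r^*(\beta)$ exists for every $\beta>0$ and the endpoints — the MNI-unattainable regime — are handled correctly), and justify exchanging "maximize the Lagrangian over $Z^1$" with "maximize over points on the IB curve," which uses the definition of $F$ as the supremum of $I(Z^1;X^2)$ at fixed $I(Z^1;X^1)$ together with the fact that this supremum is attained. The strict concavity hypothesis (and the footnote's caveat about partially linear curves) is exactly what rules out a flat segment on which $F'$ is constant and the optimizer would fail to be unique, so I would flag where that hypothesis is used.
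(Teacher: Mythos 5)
Your proposal is correct and follows essentially the same route as the paper: restrict attention to the (strictly concave) information curve, use a first-order/slope condition to tie $\beta$ to the optimal operating point, and derive both the bijection with $\delta_c$ and the monotonicity in $\beta$ from strict concavity, with the same implicit smoothness assumptions on the curve that the paper also makes. The only difference is cosmetic: you work directly on the IB curve via $L_c^1=(1+\beta)F(r)-\beta r$ and the condition $F'(r^*)=\beta/(1+\beta)$, whereas the paper changes variables to the curve $f_{CIB}$ in the $\bigl(I(Z^1;X^1|X^2),\,I(Z^1;X^2)\bigr)$ plane and identifies $\tilde{\beta}=1/\beta$ as its inverse's slope — the two first-order conditions are equivalent.
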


The second property implies that as $\beta$ increases, the learned representation becomes less informative and redundant, demonstrating the trade-off between expressivity and redundancy when MNI is unattainable. Furthermore, the bijection mapping between $\beta$ and $\delta_c$ allows our step 1 optimization to navigate the information frontier across various $\beta$ values, facilitating soft control and the segmentation of shared information into different degrees of granularity.

\subsubsection{Optimality Guarantee for the Learned Modality-Specific Representations}

In this section, we demonstrate the step 2 objective, $L_s^1$, ensures optimal coverage and disentanglement by showing its equivalence (or nearly equivalence) to the Lagrangian of Equation~\ref{eq:objective_z1}.

$L_s^1$ in Equation \ref{eq:step2} learns the modality-specific representation $Z^1$ based on the optimal shared representations $\hat{Z}_c^{1*}$ and $\hat{Z}_c^{2*}$ learned from step 1. It maximizes the information coverage of the data $X^1$ through the combination of $Z^1$ and $\hat{Z}_c^{2*}$, i.e. $I(Z^1,\hat{Z}_c^{2*};X^1)$. Simultaneously, it promotes disentanglement by limiting overlap with the shared representation $\hat{Z}_c^{1*}$ of the same modality, indicated by $I(Z^1;\hat{Z}_c^{1*})$. This objective is a Lagrangian formulation of the constraint optimization in Equation \ref{eq:objective_z1}, however replacing $I(Z^1;X^1|X^2)$ with $I(Z^1,\hat{Z}_c^{2*};X^1)$. 

\textbf{When MNI is attainable}, as justified in Proposition \ref{prop.mni.s}, this substitution results in an equivalent objective, allowing $L_s^1$ to achieve optimal modality-specific representations (proof in Appendix \ref{app.proof}).

\begin{prop}
\label{prop.mni.s}
    If MNI is attainable for random variables $X^1$ and $X^2$, $$\argmax_{ Z^1-X^1-X^2}I(Z^1;X^1|X^2)=\argmax_{Z^1-X^1-X^2} I(Z^1,\hat{Z}_c^{2*};X^1)$$ where $\hat{Z}_c^{2*}$ is the representation of $X^2$ that satisfies MNI, i.e. {\small$I(\hat{Z}_c^{2*};X^1)=I(\hat{Z}_c^{2*};X^2)=I(X^1;X^2)$}.
\end{prop}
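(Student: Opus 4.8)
The plan is to show that both optimization problems have the same maximizer by proving that the two objective functions agree (up to an additive constant in $Z^1$) on the set of admissible $Z^1$ satisfying the Markov chain $Z^1\text{-}X^1\text{-}X^2$. The key identity I would establish is
\begin{equation}
I(Z^1,\hat{Z}_c^{2*};X^1) = I(Z^1;X^1\mid X^2) + I(X^1;X^2) \nonumber
\end{equation}
whenever $\hat{Z}_c^{2*}$ attains MNI. Since $I(X^1;X^2)$ does not depend on $Z^1$, this identity immediately gives the claimed equality of argmaxes.

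First I would expand $I(Z^1,\hat{Z}_c^{2*};X^1)$ using the chain rule for mutual information in a convenient order: $I(Z^1,\hat{Z}_c^{2*};X^1) = I(\hat{Z}_c^{2*};X^1) + I(Z^1;X^1\mid \hat{Z}_c^{2*})$. The first term equals $I(X^1;X^2)$ by the MNI hypothesis, so it remains to show $I(Z^1;X^1\mid \hat{Z}_c^{2*}) = I(Z^1;X^1\mid X^2)$. To do this I would use the fact that MNI forces $\hat{Z}_c^{2*}$ to be, informationally, a ``sufficient statistic'' of $X^2$ for $X^1$: from $I(\hat{Z}_c^{2*};X^1)=I(X^1;X^2)$ together with the Markov chain $\hat{Z}_c^{2*}\text{-}X^2\text{-}X^1$ (which holds because $\hat{Z}_c^{2*}$ is inferred from $X^2$), we get $I(X^2;X^1\mid \hat{Z}_c^{2*}) = 0$, i.e. $X^1 \indep X^2 \mid \hat{Z}_c^{2*}$. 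Combined with $X^1 \indep \hat{Z}_c^{2*} \mid X^2$ (the defining Markov structure of the inferred shared latent) and the data-processing/Markov relations among $Z^1, X^1, X^2, \hat{Z}_c^{2*}$, one can conclude that conditioning on $\hat{Z}_c^{2*}$ and conditioning on $X^2$ have the same effect on the dependence between $Z^1$ and $X^1$. Concretely, I would argue $I(Z^1; X^1 \mid \hat{Z}_c^{2*}, X^2) = I(Z^1;X^1\mid X^2)$ (since $Z^1\text{-}X^1\text{-}X^2\text{-}\hat{Z}_c^{2*}$ makes $\hat{Z}_c^{2*}$ conditionally independent of $Z^1$ given $X^2$, hence irrelevant), and separately $I(Z^1;X^1\mid \hat{Z}_c^{2*},X^2) = I(Z^1;X^1\mid \hat{Z}_c^{2*})$ using $X^1 \indep X^2 \mid \hat{Z}_c^{2*}$ to drop $X^2$ from the conditioning; chaining these two gives the needed equality.

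I expect the main obstacle to be making the conditional-independence bookkeeping fully rigorous — in particular, justifying which Markov chains hold among the four variables $Z^1, X^1, X^2, \hat{Z}_c^{2*}$ and in which direction, since $Z^1$ and $\hat{Z}_c^{2*}$ are inferred from the \emph{same} pair of observations but via different modalities. The cleanest route is probably to fix the joint law as $p(X^1,X^2)\,p(Z^1\mid X^1)\,p(\hat{Z}_c^{2*}\mid X^2)$ — so that $Z^1$ and $\hat{Z}_c^{2*}$ are conditionally independent given $(X^1,X^2)$, and in fact $Z^1 \indep (X^2,\hat{Z}_c^{2*}) \mid X^1$ — and then derive everything from this explicit factorization plus the MNI equality. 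A secondary subtlety is that the argmax is over a class of conditional distributions $p(Z^1\mid X^1)$, so I should note that the additive constant $I(X^1;X^2)$ is genuinely independent of that choice, which is immediate. Once the identity is in hand, the statement follows with no further work.
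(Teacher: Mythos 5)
Your proposal is correct and takes essentially the same route as the paper's proof: both reduce the claim to the identity $I(Z^1,\hat{Z}_c^{2*};X^1)=I(Z^1;X^1\mid X^2)+I(X^1;X^2)$, obtained by combining the MNI-derived fact $I(X^1;X^2\mid \hat{Z}_c^{2*})=0$ with the conditional independences implied by the factorization $p(x^1,x^2)\,p(z^1\mid x^1)\,p(\hat{z}_c^{2*}\mid x^2)$. The only difference is bookkeeping: the paper expands $I(Z^1,X^2,\hat{Z}_c^{2*};X^1)$ in two orders and kills the residual $I(X^1;X^2\mid Z^1,\hat{Z}_c^{2*})$ by a nonnegativity argument, whereas you prove the equivalent sufficiency statement $I(Z^1;X^1\mid \hat{Z}_c^{2*})=I(Z^1;X^1\mid X^2)$ (and note that your step dropping $X^2$ from the conditioning needs not only $I(X^1;X^2\mid \hat{Z}_c^{2*})=0$ but also $I(Z^1;X^2\mid X^1,\hat{Z}_c^{2*})=0$, which your stated factorization indeed supplies).
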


\textbf{When MNI is unattainable}, Proposition \ref{prop.nonmni.s} shows such substitution yields an almost equivalent objective, subject to the value of $\delta_c$ that corresponds to the $\beta$ used in step 1 optimization. Thus, optimizing $L_s^1$ results in nearly optimal modality-specific representations (proof in Appendix \ref{app.proof}).

\begin{prop}
\label{prop.nonmni.s}
    For random variables $X^1$ and $X^2$, 
    $$0 \leq I(Z^1,X^2;X^1) -I(Z^1,\hat{Z}_c^{2*};X^1) \leq \delta_c$$
    where $\hat{Z}_c^{2*}$ is the optimal representation of $X^2$ with respect to $\delta_c$ as defined in Equation \ref{eq:objective_zs}, i.e. {\small$\hat{Z}_c^{2*} = \argmin_{Z^2} I(Z^2;X^2|X^1), \text{ s.t. } I(X^1;X^2) - I(Z^2;X^1) \leq \delta_c$}.
\end{prop}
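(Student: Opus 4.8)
The plan is to reduce both mutual-information terms to simple combinations of pairwise mutual informations via the chain rule, and then read off the two bounds from (i) the data-processing inequality and (ii) the feasibility constraint that defines $\hat{Z}_c^{2*}$.

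First I would record the conditional-independence structure coming from the graphical model of Section~\ref{sec.method.graphical}: $Z^1$ is encoded from $X^1$ only and $\hat{Z}_c^{2*}$ from $X^2$ only, so the joint distribution factorizes as $p(X^1,X^2)\,p(Z^1\mid X^1)\,p(\hat{Z}_c^{2*}\mid X^2)$. This gives the Markov chain $Z^1 - X^1 - X^2 - \hat{Z}_c^{2*}$, and in particular $I(Z^1;X^2\mid X^1)=0$, $I(Z^1;\hat{Z}_c^{2*}\mid X^1)=0$, the chain $Z^1 - X^2 - \hat{Z}_c^{2*}$, and — conditionally on $Z^1$ — the chain $X^1 - X^2 - \hat{Z}_c^{2*}$ (since given $X^2$ the variable $\hat{Z}_c^{2*}$ is independent of $(X^1,Z^1)$).

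Next, applying the chain rule of mutual information twice and using the two vanishing conditional terms, I would rewrite
$$I(Z^1,X^2;X^1) = I(Z^1;X^1) + I(X^1;X^2) - I(Z^1;X^2),$$
$$I(Z^1,\hat{Z}_c^{2*};X^1) = I(Z^1;X^1) + I(X^1;\hat{Z}_c^{2*}) - I(Z^1;\hat{Z}_c^{2*}),$$
so that the quantity of interest becomes
$$I(Z^1,X^2;X^1) - I(Z^1,\hat{Z}_c^{2*};X^1) = \bigl[I(X^1;X^2) - I(X^1;\hat{Z}_c^{2*})\bigr] - \bigl[I(Z^1;X^2) - I(Z^1;\hat{Z}_c^{2*})\bigr].$$
For the upper bound I would bound the first bracket by $\delta_c$, which is exactly the feasibility constraint $I(X^1;X^2)-I(\hat{Z}_c^{2*};X^1)\le\delta_c$ satisfied by $\hat{Z}_c^{2*}$ as defined in Equation~\ref{eq:objective_zs}, and bound the second bracket below by $0$ via the data-processing inequality along $Z^1 - X^2 - \hat{Z}_c^{2*}$; hence the difference is $\le\delta_c$. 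For the lower bound I would instead observe, using the chain rule once more, that the same difference equals $I(X^1;X^2\mid Z^1) - I(X^1;\hat{Z}_c^{2*}\mid Z^1)$, and apply the conditional data-processing inequality to the Markov chain $X^1 - X^2 - \hat{Z}_c^{2*}$ that persists after conditioning on $Z^1$, giving non-negativity.

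The computations are routine; the only thing requiring care is bookkeeping the conditional independences so the chain-rule expansions collapse correctly, and checking that the $\delta_c$ in the statement is the same tolerance used in the constraint defining $\hat{Z}_c^{2*}$ in Equation~\ref{eq:objective_zs}. I do not expect any substantive obstacle beyond this.
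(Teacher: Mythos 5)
Your proposal is correct and is essentially the paper's own argument in different bookkeeping: your two brackets $I(X^1;X^2)-I(X^1;\hat{Z}_c^{2*})$ and $I(Z^1;X^2)-I(Z^1;\hat{Z}_c^{2*})$ are exactly the paper's $I(X^1;X^2|\hat{Z}_c^{2*})$ and $I(Z^1;X^2|\hat{Z}_c^{2*})$, and your lower-bound quantity $I(X^1;X^2|Z^1)-I(X^1;\hat{Z}_c^{2*}|Z^1)$ is the paper's $I(X^1;X^2|Z^1,\hat{Z}_c^{2*})$, with the same use of the Markov structure, the feasibility constraint, and nonnegativity of (conditional) mutual information via DPI.
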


\subsection{Tractable Training Objectives}
\label{sec.method.trainobj}
Four terms are involved in \algo, including maximizing the mutual information term $I(Z^1;X^2)$ and minimizing the conditional mutual information term $I(Z^1;X^1|X^2)$ for the shared representations in step 1, as well as maximizing the joint mutual information term $I(Z^1,\hat{Z}_c^{2*};X^1)$ and minimizing the mutual information term $I(Z^1;\hat{Z}_c^{1*})$ for the modality-specific representations in step 2.
In this section, we introduce the tractable training objectives for each term, with detailed formulations described in Appendix \ref{app.trainobj}.

For the inferred shared representations, we model the distributions $\hat{Z}^1_c \sim p(\cdot|X^1)$ and $\hat{Z}^2_c \sim p(\cdot|X^2)$ with neural network encoders. Following the common practice in \citet{radford2021learning}, we use the InfoNCE objective~\citep{oord2018representation} as an estimation of the mutual information term $I(Z^1;X^2)$ in $L_c^1$.
We implement the conditional mutual information term $I(Z^1;X^1|X^2)$ in $L_c^1$ using an upper bound developed in \citet{federici2019learning}, i.e. $I(Z^1;X^1|X^2) \leq D_{\text{KL}}(p(Z^1|X^1)||p(Z^2|X^2))$. While the conditional distributions of representations are modeled as the Gaussian distribution in \citet{federici2019learning}, we instead use the von Mises-Fisher (vMF) distribution for $p(Z^1|X^1)$ and $p(Z^2|X^2)$ to better align with the InfoNCE objective where the representations lie on the sphere space. Specifically, $\hat{Z}^1_c \sim \text{vMF}(\mu(X^1),\kappa), \hat{Z}^2_c \sim \text{vMF}(\mu(X^2),\kappa)$ where $\kappa$ is a hyperparameter controlling for the uncertainty of the representations. Leveraging the formulation of the KL divergence between two vMF distributions, the training objective of $L_c^1$ is to maximize:
$$    L_c^1 = -L_{\text{InfoNCE}}^c + \beta \cdot \mathbb{E}_{x^1,x^2} \left[\mu(x^1)^\top \mu(x^2)\right] \nonumber
$$
The inferred modality-specific representations are encoded as functions
that takes both data observations and the shared representations learned in step 1 as input to account for the dependence structure illustrated in Figure \ref{fig:gmodel}. The term $I(Z^1,\hat{Z}_c^{2*};X^1)$ in $L_s^1$ is optimized with the InfoNCE loss, where random augmentations of the data $X^1$ form the two views. 
For the mutual information term between representations, $I(Z^1;\hat{Z}_c^{1*})$ in $L_s^1$, we implement it as an orthogonal loss to encourage the marginal independence between the shared and modality-specific representations, where the marginal distribution is approximated across a training batch. 
\section{Experiments}
\label{sec.exp}
We conduct a simulation study and two real-world multimodal experiments to evaluate the efficacy of our proposed \algo.
We compare \algo against a disentangled variational autoencoder baseline, DMVAE~\citep{lee2021private}, as well as various multimodal contrastive learning methods, including CLIP~\citep{radford2021learning}, which aligns different modalities to learn the shared representations, and FOCAL~\citep{liu2024focal}, FactorCL~\citep{liang2024factorized} \rebuttal{and SimMMDG~\citep{dong2023simmmdg}}, which capture both shared and modality-specific information. Additionally, we evaluate a joint optimization variant of \algo, JointOpt~\citep{pan2021disentangled}, to highlight the advantages of our step-by-step approach. 

\subsection{Simulation Study}

\textbf{Synthetic data generation.}
We generate synthetic data based on the graphical model in Figure \ref{fig:gmodel}. We sample $d$-dimensional true latents {\small$Z_s^1$}, {\small$Z_s^2$}, and {\small$Z_c$} independently from {\small$\mathcal{N}(\mu_d, \Sigma^2_d)$}. Using fixed transformations {\small$T_1$} and {\small$T_2$}, we create {\small$X^1 = T_1 \cdot [Z_s^1, Z_c]$} and {\small$X^2 = T_2 \cdot [Z_s^2, Z_c]$}. To simulate unattainable MNI, we add Gaussian noise to ensure the distribution has full support. Binary labels {\small$Y_s^1$}, {\small$Y_s^2$}, and {\small$Y_c$} are constructed from the true latents and used to evaluate the information content of learned representations via linear probing accuracy. Denote {\small$\hat{Z}_c$} as the combination of the learned shared representations of {\small$X^1$} and {\small$X^2$}, i.e. {\small$\hat{Z}_c=[\hat{Z}_c^1,\hat{Z}_c^2]$}. Ideally, {\small$\hat{Z}_c$} should achieve high accuracy on {\small$Y_c$} and low on {\small$Y_s^1$} and {\small$Y_s^2$}, while the modality-specific representations {\small$\hat{Z}_s^1$} and {\small$\hat{Z}_s^2$} should show the opposite pattern. Additional details on experimental settings can be found in Appendix~\ref{app.results.syn}.

\textbf{Results.} We assess the performance of the learned shared and modality-specific representations for different values of $\beta$ and $\lambda$, as shown in Figure \ref{fig:side_by_side_figures}. For comparison, we also evaluate JointOpt, DMVAE, FOCAL, \rebuttal{and SimMMDG} across different hyperparameter settings. Specifically, for JointOpt, we vary both $a$ and $\lambda$, where $a$ controls the joint mutual information terms {\small$I(\hat{Z}_s^1,\hat{Z}_c^2;X^1)$} and {\small$I(\hat{Z}_s^2,\hat{Z}_c^1;X^2)$}, and $\lambda$ adjusts the mutual information term between representations, similar to \algo.

Figure \ref{fig:1} illustrates the performance of shared representation $\hat{Z}_c$ learned by \algo across different values of $\beta$. For lower values of $\beta$, $\hat{Z}_c$ captures both shared and specific features, as indicated by linear probing accuracy on $Y_s^1$, $Y_s^2$, and $Y_c$ exceeding 0.5. As $\beta$ increases, all accuracies decrease, reflecting the trade-off between expressivity and redundancy controlled by $\delta_c$ when MNI is unattainable. This trend aligns well with Figure~\ref{fig:ibcurve} and Proposition~\ref{prop.nonmni_c}. A comparison of the performance of the shared representation with baseline methods is provided in Appendix~\ref{app.results.syn}.
We further examine such trade-off on synthetic data with varying levels of multimodal entanglement, where some dimensions are aligned across modalities with MNI attainable and others remain entangled with MNI unattainable. Detailed results are provided in Appendix~\ref{app.results.syn}.

Given the shared representations $\hat{Z}_c^1$ and $\hat{Z}_c^2$ learned in step 1 for a fixed $\beta$, we then learn the corresponding modality-specific representations $\hat{Z}_s^1$ and $\hat{Z}_s^2$ with varying $\lambda$.  Figure \ref{fig:3} shows the performance of \algo in contrast to other baseline methods, where dots are connected according to a descending order of their corresponding $\lambda$ values\footnote{For FOCAL, we tune the hyperparameters {\small$a$} and {\small$\lambda$}, defined similarly to JointOpt, where $a$ controls the terms {\small$I(\hat{Z}_s^1;X^1)$} and {\small$I(\hat{Z}_s^2;X^2)$} and {\small$\lambda$} adjusts the orthogonal loss between shared and specific representations. For DMVAE, we tune {\small$\lambda$} which denotes the weight of the KL divergence term. We show the best-performing results of FOCAL and DMVAE across hyperparameters in Figure~\ref{fig:3}, with full results available in Appendix~\ref{app.results.syn}.}.
The ideal modality-specific representation \(\hat{Z}_s^1\) should maximize unique information from \(X^1\), shown by high accuracy on \(Y_s^1\), while minimizing shared information with \(X^2\), indicated by low accuracy on \(Y_c\). Therefore, a bottom-right point is preferred in Figure \ref{fig:3}. As illustrated in Figure \ref{fig:3}, \algo outperforms all other methods across various hyperparameter settings, especially JointOpt, demonstrating the effectiveness of the stepwise optimization procedure. Results on $\hat{Z}_s^2$ are provided in Appendix~\ref{app.results.syn}.

\begin{figure*}[ht]
    \centering
    \begin{subfigure}[b]{0.46\textwidth}
        \includegraphics[width=0.9\textwidth]{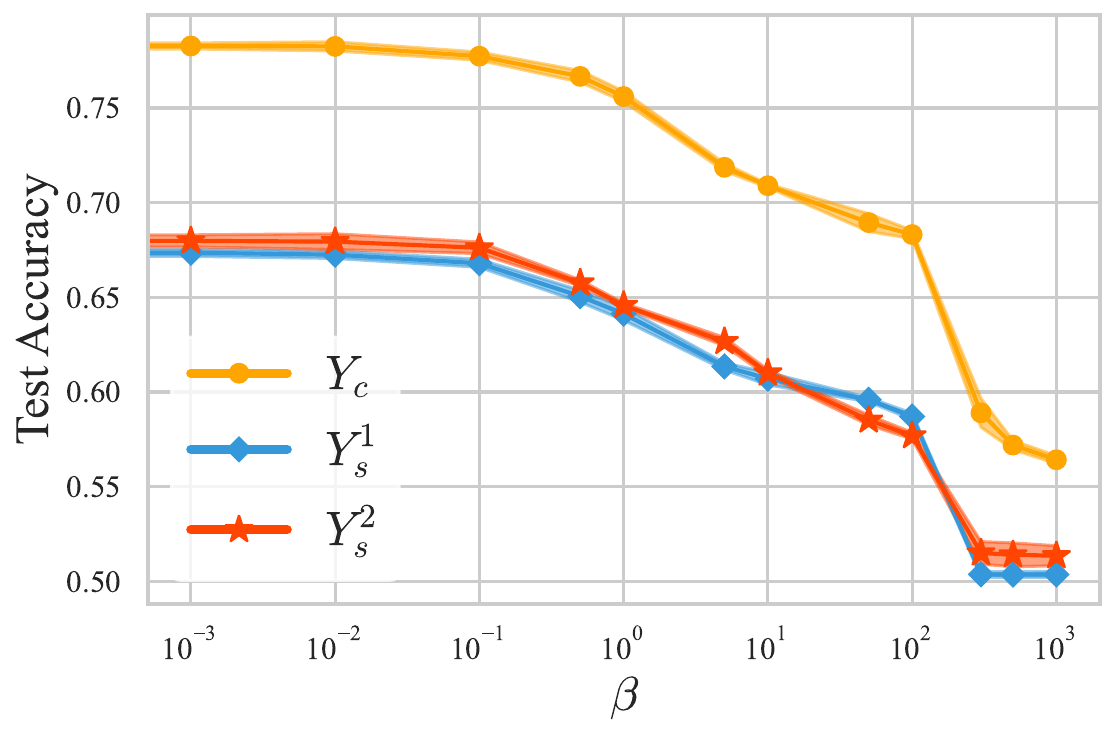}
        \caption{Performance of shared representation $\hat{Z}_c$ learned with different values of $\beta$.}
        \label{fig:1}
    \end{subfigure}
    \hfill 
    \begin{subfigure}[b]{0.53\textwidth}
        \includegraphics[width=0.9\textwidth]{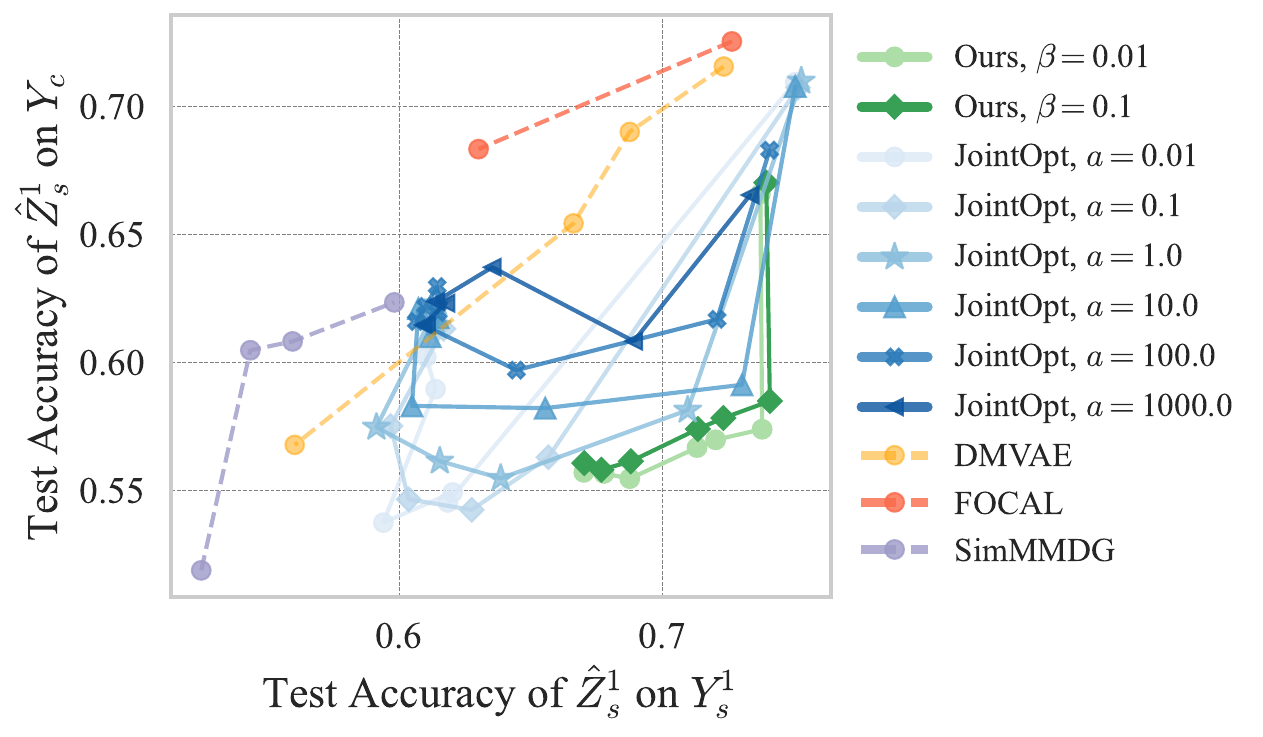}
        \centering
        \caption{\rebuttal{Performance of modality-specific representation $\hat{Z}_s^1$ learned with different $\beta$ and $\lambda$, in comparison to baselines.}}
        \label{fig:3}
    \end{subfigure}
    \caption{Simulation study results.}
    \label{fig:side_by_side_figures}
    \vspace{-15pt}
\end{figure*}

\subsection{MultiBench}
\vspace{-5pt}
\textbf{Dataset description.} We utilize the real-world multimodal benchmark from MultiBench~\citep{liang2021multibench}, which includes curated datasets across various modalities, such as text, images, and tabular data, along with a downstream label that we expect shared and specific information to have varying importance for. These datasets cover a variety of domains including healthcare, affective computing and multimedia research areas. We follow the same setting (dataset splitting, encoder architecture, pre-extracted features) as in \cite{liang2024factorized}. Additional details can be found in Appendix~\ref{app.results.multibench}.

\begin{wrapfigure}{r}{0.4\textwidth}
    \centering
    \vspace{-15pt}
    \includegraphics[width=0.39\textwidth]{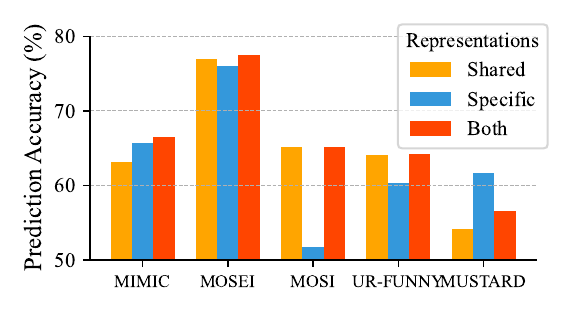}
    \vspace{-13pt}
    \caption{\rebuttal{Results of different representations learned by \algo.}}
    \label{fig:multibench}
    \vspace{-10pt}
\end{wrapfigure}

\textbf{Results.} We evaluate the linear probing accuracy of representations learned by each pretrained model, as shown in Table~\ref{table.multibench}. FactorCL-emb refers to the embeddings of FactorCL before projection heads, while FactorCL-proj uses the concatenation of all projection head outputs. 
All models use representations (or concatenation of them, if applicable) with the same dimensionality. 
\rebuttal{We present the best results among three representations--shared, modality-specific, and the concatenation of both--for all methods in Table~\ref{table.multibench}, with results for each representation in Appendix~\ref{app.results.multibench}.}
\algo consistently outperforms baselines across datasets, demonstrating its ability to capture valuable information for downstream tasks. \rebuttal{Further, we show the performance of each representation learned through \algo in Figure~\ref{fig:multibench}.} Combining shared and specific representations of \algo improves performance in most cases, showing that both contribute to label prediction, while in MOSI and MUSTARD, only shared or specific information contributes significantly. \rebuttal{For example, for MUSTARD, specific representations capture nuances of sarcasm, such as sardonic expressions or ironic tone, which are crucial for sarcasm prediction.}

\vspace{-5pt}
\begin{table*}[!htb]
    \centering
    \caption{\rebuttal{Prediction accuracy (\%) of the representations learned by different methods on MultiBench datasets and standard deviations over 3 random seeds.}}
    \label{table.multibench}
\vspace{-5pt}
    \resizebox{0.8\textwidth}{!}{%
\begin{tabular}{llllll}
\toprule
Dataset & MIMIC & MOSEI & MOSI & UR-FUNNY & MUSTARD \\ \midrule 
CLIP          & 64.97 (0.60) & 76.87 (0.45) & 64.24 (0.88) & 62.73 (0.92) & 56.04 (4.19) \\
FactorCL-emb    & 65.25 (0.45) & 71.80 (0.64) & 62.97 (0.81) & 63.29 (2.07) & 56.76 (4.66) \\
FactorCL-proj   & 59.43 (1.70) & 74.61 (1.65) & 56.02 (1.26) & 61.25 (0.47) & 55.80 (2.18) \\
FOCAL           & 64.42 (0.34) & 76.77 (0.51) & 63.65 (1.09) & 63.17 (0.96) & 58.21 (2.21) \\
JointOpt      & 66.11 (0.64) & 76.71 (0.14) & 65.02 (1.96) & 63.58 (1.45) & 57.73 (4.12) \\ 
\algo   & \textbf{66.44} (0.31) & \textbf{77.45} (0.06) & \textbf{65.16} (0.81) & \textbf{64.24} (1.54) & \textbf{61.60} (2.61) \\ \bottomrule
\end{tabular}
}
\end{table*}

\vspace{-5pt}
\subsection{High-Content Drug Screening}
\textbf{Dataset description.} As characterized in Figure~\ref{fig:mainfig}, we use two high-content drug screening datasets which provide phenotypic profiles after drug perturbation: RXRX19a~\citep{subramanian2017next} containing cell imaging profiles, and LINCS~\citep{cuccarese2020functional} containing L1000 gene expression profiles. For RXRX19a, we select data from the HRCE cell line under active SARS-CoV-2 condition, totaling 1,661 drugs and 10,117 profiles ($\sim 6$ replicates per drug). Hand-crafted image features from CellProfiler~\citep{mcquin2018cellprofiler} are used as encoder inputs. For LINCS, we use the data curated in \citet{wang2023removing}, selecting drugs with full observations across all 9 core cell lines and randomly sampling 3 replicates per drug to minimize selection bias, resulting in 3,315 drugs and 86,833 profiles. We conduct train-validation-test splitting according to molecules. Models are pretrained to learn representations of molecular structures and their corresponding phenotypes. We provide additional details on experimental settings in Appendix~\ref{app.results.drug}.

\vspace{-5pt}
\begin{table*}[!htb]
\centering
    \caption{Retrieving acccuracy and mean reciprocal rank (MRR) of molecule-phenotype retrieval.}
    \vspace{-5pt}
    \label{table.retrieve}\resizebox{0.95\textwidth}{!}{
\begin{tabular}{llrrrrrr}
\toprule
Dataset &Top N Acc (\%) & \multicolumn{1}{l}{N=1} & \multicolumn{1}{l}{N=5} & \multicolumn{1}{l}{N=10} & \multicolumn{1}{l}{N=20} & \multicolumn{1}{l}{N=30} & \multicolumn{1}{l}{MRR} \\ \midrule
& Random & 0.30 & 1.50  & 3.00  & 6.01  & 9.01 & -  \\
& CLIP & 3.30(0.40) & 8.33(0.52) & 11.59(0.20) & 16.38(0.56) & 20.22(0.67)  & 0.103(0.001) \\
& DMVAE & 3.85(0.36) & 8.76(0.30) & 11.84(0.32) & 16.20(0.83) & 20.17(0.85) & 0.106(0.002)  \\
RXRX19a & JointOpt & 3.41(0.49) & 8.54(0.14) & 11.64(0.14) & 16.79(0.27) & 20.71(1.11) & 0.110(0.002) \\
& FOCAL & 3.61(0.51) & 8.71(0.69) & 11.94(0.74) & 16.86(0.62) & 20.81(1.07) & 0.108(0.003) \\
& \algo ($\beta=0$) & 3.39(0.54) & 8.25(0.33) & 11.53(0.20) & 16.28(0.10) & 20.20(0.45) & 0.109(0.003) \\
& \algo & \textbf{4.03}(0.39) & \textbf{9.62}(0.20) & \textbf{13.12}(0.23) & \textbf{18.36}(0.42) & \textbf{22.82}(0.45) & \textbf{0.111}(0.001) 
\\ \midrule
& Random & 0.15  & 0.75  & 1.51  & 3.02  & 4.52 & - \\
& CLIP & 3.95(0.04) & 10.81(0.06) & 15.10(0.20) & 21.08(0.39) & 29.44(0.30) & 0.146(0.001) \\   
& DMVAE & 4.31(0.09) & 11.45(0.13) & 15.88(0.17) & 21.85(0.27) & 29.41(0.33) & 0.156(0.001)  \\
LINCS & JointOpt & \textbf{4.67}(0.09) & 11.60(0.11) & 16.02(0.15) & 22.01(0.21) & 30.08(0.39) & 0.161(0.001) \\
& FOCAL & 4.34(0.17) & 11.24(0.19) & 15.74(0.26) & 21.48(0.08) & 29.32(0.08) & 0.157(0.002) \\
& \algo ($\beta=0$) & 4.36(0.13) & 11.27(0.39) & 15.81(0.47) & 21.71(0.48) & 29.65(0.54) & 0.158(0.001) \\
& \algo & 4.48(0.21) & \textbf{11.70}(0.40) & \textbf{16.39}(0.39) & \textbf{22.68}(0.58) & \textbf{30.84}(0.66) & \textbf{0.163}(0.002) \\
\bottomrule
\end{tabular}
}
\end{table*}

\textbf{Molecule-phenotype retrieval using shared representations.} 
We evaluate the shared representations in the molecule-phenotype retrieval task, where the goal is to identify molecules from the whole test set that are most likely to induce a specific phenotype. The shared information, which connects the molecular structure and phenotype, plays a key role in this task. 
We tune $\beta$ according to validation set performance and show results of top N accuracy (N=1,5,10,20,30) and mean reciprocal rank (MRR) on the test set in Table~\ref{table.retrieve}. 
\algo consistently outperforms baselines on both datasets, effectively capturing relevant shared features while excluding irrelevant modality-specific information. Notably, compared to the variant without the information bottleneck constraint, i.e. \algo ($\beta=0$), the full \algo model preserves critical shared features, achieving superior performance in this retrieval task, where shared information is essential.

\textbf{Disentanglement measurement.} To assess the effectiveness of learning modality-specific representations, we introduce the Reconstruction Gain (RG) metric, which quantifies the disentanglement and complementariness between shared and modality-specific representations. Specifically, we train 2-layer MLP decoders on the training set to reconstruct the original data from the shared and modality-specific representations, both individually and jointly, and compute the reconstruction \( R^2 \) on the test set for each case. A higher gain in \( R^2 \) when using combined representations in comparison to individual ones indicates lower redundancy and better disentanglement. 

\begin{wraptable}{r}{0.68\textwidth} 
\centering
\vspace{-3pt}
 \caption{Reconstruction gain (RG) in $R^2$ of representations for each modality (molecular structure/phenotype).}
    \resizebox{0.68\textwidth}{!}{
\begin{tabular}{lllll} \toprule
Dataset   & \multicolumn{2}{c}{RXRX19a}                             & \multicolumn{2}{c}{LINCS}                                 \\
Metric    & RG-molecule      & RG-phenotype  & RG-molecule      & RG-phenotype   \\ \midrule
FOCAL     & 0.117(0.006) & 0.547(0.001) & 0.122(0.001) & 0.618(0.002) \\
DMVAE & 0.123(0.002) & 0.545(0.003) & 0.139(0.002) & 0.605(0.003) \\
JointOpt & 0.130(0.001) & 0.524(0.001) & 0.103(0.000) & 0.604(0.002) \\
\algo & \textbf{0.153}(0.003) & \textbf{0.591}(0.001) & \textbf{0.143}(0.000) & \textbf{0.622}(0.002) \\ \bottomrule
\end{tabular}
}
\vspace{-8pt}
\end{wraptable}

As shown in Table~\ref{table.disen}, \algo achieves the highest RG scores across both modalities and datasets, demonstrating superior disentanglement. In contrast, other methods show redundancy due to insufficient constraints for disentanglement during pretraining. 
We also test the representations from the pretrained model on the counterfactual generation task, with detailed results provided in Appendix~\ref{app.results.drug}.

\vspace{-5pt}
\section{Related Work}
\vspace{-2pt}
\textbf{Disentangled representations in VAEs and GANs.} Disentangled representation learning originated from works on Variational Autoencoders (VAEs) and Generative Adversarial Networks (GANs), focusing on isolating underlying data variations linked to or independent of labels (e.g., digit identity vs. writing style in MNIST) \citep{gonzalez2018image, mathieu2016disentangling, bouchacourt2018multi,daunhawer2021self}. This concept extended to multimodal data like text and image \citep{lee2021compressive} and content and pose in video \citep{denton2017unsupervised}, typically using self- and cross-reconstruction with adversarial loss to learn shared and specific components. However, these methods often address simple cases with attainable MNI, lacking a comprehensive analysis for complex real-world multimodal scenarios where MNI is not attainable.
\vspace{-2pt}

\textbf{Information bottleneck and its variants.} The information bottleneck (IB) principle has been used to analyze and optimize deep neural networks and the learned representations from the information theory perspective in both supervised and self-supervised settings~\citep{tishby2000information,gilad2003information,shwartz2017opening,kolchinsky2018caveats,rodriguez2020convex,tsai2020self,wang2023visual,shwartz2023compress}, with extensions like the conditional entropy bottleneck (CEB) \citep{fischer2020conditional, lee2021compressive} being developed to better refine the information plane. IB has been applied to learn molecular representations from drug screening data \citep{liu2024learning}. However, these approaches primarily focus on extracting shared features while minimizing specific features, limiting their practical use. Other methods \citep{pan2021disentangled, sanchez2020learning} learn disentangled representations by optimizing mutual information and using adversarial loss but are confined to single-modality scenarios and do not address complex multimodal settings with unattainable MNI.
\vspace{-2pt}

\textbf{Multimodal disentanglement in self-supervised learning.} Recent studies have explored disentangled representation learning in multimodal self-supervised contexts. \citet{liang2024factorized} factorizes and optimizes mutual information bounds to capture the union of shared and unique features, while not penalizing for redundancy between latents. \citet{zhang2024partially} learns disentangled representations of cell state for multimodal data in biological contexts. \citet{liu2024focal} and \citet{li2024disentangled} use mutual information optimization with orthogonal/MMD regularizations for disentangling multimodal time-series signals and T-cell receptor segments, respectively. \rebuttal{\citet{li2022modeling} and \citet{dong2023simmmdg} address the separation of shared and specific information, either among views of images or in the domain generalization setting.}
However, these methods lack a general framework for understanding information content, particularly when data modalities are deeply entangled in real-world applications.

\vspace{-5pt}
\section{Conclusions}
\vspace{-2pt}
We present \algo, a self-supervised approach for learning disentangled multimodal representations that effectively separates shared and modality-specific features. Built on a comprehensive theoretical framework, \algo addresses the challenging scenarios where the Minimum Necessary Information (MNI) is unattainable, enhancing its applications to complex real-world multimodal settings and improving interpretability and robustness. Our empirical results show that \algo outperforms baselines across various synthetic and real-world datasets, including vision-language prediction tasks and molecule-phenotype retrieval for high-content drug screens, demonstrating its ability to learn optimal disentangled representations for diverse applications.

\newpage
\section{Reproducibility Statement}
For the theoretical results presented in the paper, we provide complete proofs in Appendix~\ref{app.proof}. For the experimental results, we provide detailed explanations of the algorithm implementations and experimental setup in Section~\ref{sec.exp}, Appendix~\ref{app.trainobj}, and Appendix~\ref{app.exp}. For the datasets used in the experiments, we utilize publicly available datasets and follow the data processing procedures in prior works, with additional details elaborated in Section~\ref{sec.exp} and Appendix~\ref{app.exp}. The code can be found in \url{https://github.com/uhlerlab/DisentangledSSL}.

\section*{Acknowledgement}
XZ, ST and CU were partially supported by the Eric and Wendy Schmidt Center at the Broad Institute, NCCIH/NIH (1DP2AT012345), ONR (N00014-22-1-2116) and DOE (DE-SC0023187). SG and SJ acknowledge the support of the NSF Award CCF-2112665 (TILOS AI Institute), an Alexander von Humboldt fellowship, and Office of Naval Research grant N00014-20-1-2023 (MURI ML-SCOPE). CW and TJ acknowledge support from the Machine Learning for Pharmaceutical Discovery and Synthesis (MLPDS) consortium, the DTRA Discovery of Medical Countermeasures Against New and Emerging (DOMANE) threats program, and the NSF Expeditions grant (award 1918839) Understanding the World Through Code.

\newpage
\bibliography{iclr2025_conference}

\begin{thebibliography}{47}
\providecommand{\natexlab}[1]{#1}
\providecommand{\url}[1]{\texttt{#1}}
\expandafter\ifx\csname urlstyle\endcsname\relax
  \providecommand{\doi}[1]{doi: #1}\else
  \providecommand{\doi}{doi: \begingroup \urlstyle{rm}\Url}\fi

\bibitem[Achille \& Soatto(2018)Achille and Soatto]{achille2018information}
Alessandro Achille and Stefano Soatto.
\newblock Information dropout: Learning optimal representations through noisy computation.
\newblock \emph{IEEE transactions on pattern analysis and machine intelligence}, 40\penalty0 (12):\penalty0 2897--2905, 2018.

\bibitem[Bengio et~al.(2013)Bengio, Courville, and Vincent]{bengio2013representation}
Yoshua Bengio, Aaron Courville, and Pascal Vincent.
\newblock Representation learning: A review and new perspectives.
\newblock \emph{IEEE transactions on pattern analysis and machine intelligence}, 35\penalty0 (8):\penalty0 1798--1828, 2013.

\bibitem[Bouchacourt et~al.(2018)Bouchacourt, Tomioka, and Nowozin]{bouchacourt2018multi}
Diane Bouchacourt, Ryota Tomioka, and Sebastian Nowozin.
\newblock Multi-level variational autoencoder: Learning disentangled representations from grouped observations.
\newblock In \emph{Proceedings of the AAAI Conference on Artificial Intelligence}, volume~32, 2018.

\bibitem[Cuccarese et~al.(2020)Cuccarese, Earnshaw, Heiser, Fogelson, Davis, McLean, Gordon, Skelly, Weathersby, Rodic, et~al.]{cuccarese2020functional}
Michael~F Cuccarese, Berton~A Earnshaw, Katie Heiser, Ben Fogelson, Chadwick~T Davis, Peter~F McLean, Hannah~B Gordon, Kathleen-Rose Skelly, Fiona~L Weathersby, Vlad Rodic, et~al.
\newblock Functional immune mapping with deep-learning enabled phenomics applied to immunomodulatory and covid-19 drug discovery.
\newblock \emph{Biorxiv}, pp.\  2020--08, 2020.

\bibitem[Daunhawer et~al.(2021)Daunhawer, Sutter, Marcinkevi{\v{c}}s, and Vogt]{daunhawer2021self}
Imant Daunhawer, Thomas~M Sutter, Ri{\v{c}}ards Marcinkevi{\v{c}}s, and Julia~E Vogt.
\newblock Self-supervised disentanglement of modality-specific and shared factors improves multimodal generative models.
\newblock In \emph{Pattern Recognition: 42nd DAGM German Conference, DAGM GCPR 2020, T{\"u}bingen, Germany, September 28--October 1, 2020, Proceedings 42}, pp.\  459--473. Springer, 2021.

\bibitem[Denton et~al.(2017)]{denton2017unsupervised}
Emily~L Denton et~al.
\newblock Unsupervised learning of disentangled representations from video.
\newblock \emph{Advances in neural information processing systems}, 30, 2017.

\bibitem[Dong et~al.(2023)Dong, Nejjar, Sun, Chatzi, and Fink]{dong2023simmmdg}
Hao Dong, Ismail Nejjar, Han Sun, Eleni Chatzi, and Olga Fink.
\newblock Simmmdg: A simple and effective framework for multi-modal domain generalization.
\newblock \emph{Advances in Neural Information Processing Systems}, 36:\penalty0 78674--78695, 2023.

\bibitem[Federici et~al.(2019)Federici, Dutta, Forr{\'e}, Kushman, and Akata]{federici2019learning}
Marco Federici, Anjan Dutta, Patrick Forr{\'e}, Nate Kushman, and Zeynep Akata.
\newblock Learning robust representations via multi-view information bottleneck.
\newblock In \emph{International Conference on Learning Representations}, 2019.

\bibitem[Fischer(2020)]{fischer2020conditional}
Ian Fischer.
\newblock The conditional entropy bottleneck.
\newblock \emph{Entropy}, 22\penalty0 (9):\penalty0 999, 2020.

\bibitem[Gilad-Bachrach et~al.(2003)Gilad-Bachrach, Navot, and Tishby]{gilad2003information}
Ran Gilad-Bachrach, Amir Navot, and Naftali Tishby.
\newblock An information theoretic tradeoff between complexity and accuracy.
\newblock In \emph{Learning Theory and Kernel Machines: 16th Annual Conference on Learning Theory and 7th Kernel Workshop, COLT/Kernel 2003, Washington, DC, USA, August 24-27, 2003. Proceedings}, pp.\  595--609. Springer, 2003.

\bibitem[Gonzalez-Garcia et~al.(2018)Gonzalez-Garcia, Van De~Weijer, and Bengio]{gonzalez2018image}
Abel Gonzalez-Garcia, Joost Van De~Weijer, and Yoshua Bengio.
\newblock Image-to-image translation for cross-domain disentanglement.
\newblock \emph{Advances in neural information processing systems}, 31, 2018.

\bibitem[Huh et~al.(2024)Huh, Cheung, Wang, and Isola]{huh2024platonic}
Minyoung Huh, Brian Cheung, Tongzhou Wang, and Phillip Isola.
\newblock The platonic representation hypothesis.
\newblock \emph{arXiv preprint arXiv:2405.07987}, 2024.

\bibitem[Jaeger et~al.(2018)Jaeger, Fulle, and Turk]{jaeger2018mol2vec}
Sabrina Jaeger, Simone Fulle, and Samo Turk.
\newblock Mol2vec: unsupervised machine learning approach with chemical intuition.
\newblock \emph{Journal of chemical information and modeling}, 58\penalty0 (1):\penalty0 27--35, 2018.

\bibitem[Kolchinsky et~al.(2018)Kolchinsky, Tracey, and Van~Kuyk]{kolchinsky2018caveats}
Artemy Kolchinsky, Brendan~D Tracey, and Steven Van~Kuyk.
\newblock Caveats for information bottleneck in deterministic scenarios.
\newblock \emph{arXiv preprint arXiv:1808.07593}, 2018.

\bibitem[Lee et~al.(2021)Lee, Arnab, Guadarrama, Canny, and Fischer]{lee2021compressive}
Kuang-Huei Lee, Anurag Arnab, Sergio Guadarrama, John Canny, and Ian Fischer.
\newblock Compressive visual representations.
\newblock \emph{Advances in Neural Information Processing Systems}, 34:\penalty0 19538--19552, 2021.

\bibitem[Lee \& Pavlovic(2021)Lee and Pavlovic]{lee2021private}
Mihee Lee and Vladimir Pavlovic.
\newblock Private-shared disentangled multimodal vae for learning of latent representations.
\newblock In \emph{Proceedings of the ieee/cvf conference on computer vision and pattern recognition}, pp.\  1692--1700, 2021.

\bibitem[Li et~al.(2022)Li, Qiang, Zheng, Su, Razzak, Wen, and Xiong]{li2022modeling}
Jiangmeng Li, Wenwen Qiang, Changwen Zheng, Bing Su, Farid Razzak, Ji-Rong Wen, and Hui Xiong.
\newblock Modeling multiple views via implicitly preserving global consistency and local complementarity.
\newblock \emph{IEEE Transactions on Knowledge and Data Engineering}, 35\penalty0 (7):\penalty0 7220--7238, 2022.

\bibitem[Li et~al.(2024)Li, Guo, Grazioli, Gerstein, and Min]{li2024disentangled}
Tianxiao Li, Hongyu Guo, Filippo Grazioli, Mark Gerstein, and Martin~Renqiang Min.
\newblock Disentangled wasserstein autoencoder for t-cell receptor engineering.
\newblock \emph{Advances in Neural Information Processing Systems}, 36, 2024.

\bibitem[Liang et~al.(2021)Liang, Lyu, Fan, Wu, Cheng, Wu, Chen, Wu, Lee, Zhu, et~al.]{liang2021multibench}
Paul~Pu Liang, Yiwei Lyu, Xiang Fan, Zetian Wu, Yun Cheng, Jason Wu, Leslie Chen, Peter Wu, Michelle~A Lee, Yuke Zhu, et~al.
\newblock Multibench: Multiscale benchmarks for multimodal representation learning.
\newblock \emph{arXiv preprint arXiv:2107.07502}, 2021.

\bibitem[Liang et~al.(2022{\natexlab{a}})Liang, Zadeh, and Morency]{liang2022foundations}
Paul~Pu Liang, Amir Zadeh, and Louis-Philippe Morency.
\newblock Foundations and trends in multimodal machine learning: Principles, challenges, and open questions.
\newblock \emph{arXiv preprint arXiv:2209.03430}, 2022{\natexlab{a}}.

\bibitem[Liang et~al.(2024)Liang, Deng, Ma, Zou, Morency, and Salakhutdinov]{liang2024factorized}
Paul~Pu Liang, Zihao Deng, Martin~Q Ma, James~Y Zou, Louis-Philippe Morency, and Ruslan Salakhutdinov.
\newblock Factorized contrastive learning: Going beyond multi-view redundancy.
\newblock \emph{Advances in Neural Information Processing Systems}, 36, 2024.

\bibitem[Liang et~al.(2022{\natexlab{b}})Liang, Zhang, Kwon, Yeung, and Zou]{liang2022mind}
Victor~Weixin Liang, Yuhui Zhang, Yongchan Kwon, Serena Yeung, and James~Y Zou.
\newblock Mind the gap: Understanding the modality gap in multi-modal contrastive representation learning.
\newblock \emph{Advances in Neural Information Processing Systems}, 35:\penalty0 17612--17625, 2022{\natexlab{b}}.

\bibitem[Liu et~al.(2024{\natexlab{a}})Liu, Seal, Arevalo, Liang, Carpenter, Jiang, and Singh]{liu2024learning}
Gang Liu, Srijit Seal, John Arevalo, Zhenwen Liang, Anne~E Carpenter, Meng Jiang, and Shantanu Singh.
\newblock Learning molecular representation in a cell.
\newblock \emph{arXiv preprint arXiv:2406.12056}, 2024{\natexlab{a}}.

\bibitem[Liu et~al.(2024{\natexlab{b}})Liu, Kimura, Liu, Wang, Li, Diggavi, Srivastava, and Abdelzaher]{liu2024focal}
Shengzhong Liu, Tomoyoshi Kimura, Dongxin Liu, Ruijie Wang, Jinyang Li, Suhas Diggavi, Mani Srivastava, and Tarek Abdelzaher.
\newblock Focal: Contrastive learning for multimodal time-series sensing signals in factorized orthogonal latent space.
\newblock \emph{Advances in Neural Information Processing Systems}, 36, 2024{\natexlab{b}}.

\bibitem[Lu et~al.(2019)Lu, Batra, Parikh, and Lee]{lu2019vilbert}
Jiasen Lu, Dhruv Batra, Devi Parikh, and Stefan Lee.
\newblock Vilbert: Pretraining task-agnostic visiolinguistic representations for vision-and-language tasks.
\newblock \emph{Advances in neural information processing systems}, 32, 2019.

\bibitem[Mathieu et~al.(2016)Mathieu, Zhao, Zhao, Ramesh, Sprechmann, and LeCun]{mathieu2016disentangling}
Michael~F Mathieu, Junbo~Jake Zhao, Junbo Zhao, Aditya Ramesh, Pablo Sprechmann, and Yann LeCun.
\newblock Disentangling factors of variation in deep representation using adversarial training.
\newblock \emph{Advances in neural information processing systems}, 29, 2016.

\bibitem[McQuin et~al.(2018)McQuin, Goodman, Chernyshev, Kamentsky, Cimini, Karhohs, Doan, Ding, Rafelski, Thirstrup, et~al.]{mcquin2018cellprofiler}
Claire McQuin, Allen Goodman, Vasiliy Chernyshev, Lee Kamentsky, Beth~A Cimini, Kyle~W Karhohs, Minh Doan, Liya Ding, Susanne~M Rafelski, Derek Thirstrup, et~al.
\newblock Cellprofiler 3.0: Next-generation image processing for biology.
\newblock \emph{PLoS biology}, 16\penalty0 (7):\penalty0 e2005970, 2018.

\bibitem[Oord et~al.(2018)Oord, Li, and Vinyals]{oord2018representation}
Aaron van~den Oord, Yazhe Li, and Oriol Vinyals.
\newblock Representation learning with contrastive predictive coding.
\newblock \emph{arXiv preprint arXiv:1807.03748}, 2018.

\bibitem[Pan et~al.(2021)Pan, Niu, Zhang, and Zhang]{pan2021disentangled}
Ziqi Pan, Li~Niu, Jianfu Zhang, and Liqing Zhang.
\newblock Disentangled information bottleneck.
\newblock In \emph{Proceedings of the AAAI Conference on Artificial Intelligence}, volume~35, pp.\  9285--9293, 2021.

\bibitem[Radford et~al.(2021)Radford, Kim, Hallacy, Ramesh, Goh, Agarwal, Sastry, Askell, Mishkin, Clark, et~al.]{radford2021learning}
Alec Radford, Jong~Wook Kim, Chris Hallacy, Aditya Ramesh, Gabriel Goh, Sandhini Agarwal, Girish Sastry, Amanda Askell, Pamela Mishkin, Jack Clark, et~al.
\newblock Learning transferable visual models from natural language supervision.
\newblock In \emph{International conference on machine learning}, pp.\  8748--8763. PMLR, 2021.

\bibitem[Ramasinghe et~al.(2024)Ramasinghe, Shevchenko, Avraham, and Thalaiyasingam]{ramasinghe2024accept}
Sameera Ramasinghe, Violetta Shevchenko, Gil Avraham, and Ajanthan Thalaiyasingam.
\newblock Accept the modality gap: An exploration in the hyperbolic space.
\newblock In \emph{Proceedings of the IEEE/CVF Conference on Computer Vision and Pattern Recognition}, pp.\  27263--27272, 2024.

\bibitem[Rodr{\'\i}guez~G{\'a}lvez et~al.(2020)Rodr{\'\i}guez~G{\'a}lvez, Thobaben, and Skoglund]{rodriguez2020convex}
Borja Rodr{\'\i}guez~G{\'a}lvez, Ragnar Thobaben, and Mikael Skoglund.
\newblock The convex information bottleneck lagrangian.
\newblock \emph{Entropy}, 22\penalty0 (1):\penalty0 98, 2020.

\bibitem[Sanchez et~al.(2020)Sanchez, Serrurier, and Ortner]{sanchez2020learning}
Eduardo~Hugo Sanchez, Mathieu Serrurier, and Mathias Ortner.
\newblock Learning disentangled representations via mutual information estimation.
\newblock In \emph{Computer Vision--ECCV 2020: 16th European Conference, Glasgow, UK, August 23--28, 2020, Proceedings, Part XXII 16}, pp.\  205--221. Springer, 2020.

\bibitem[Shwartz-Ziv \& LeCun(2023)Shwartz-Ziv and LeCun]{shwartz2023compress}
Ravid Shwartz-Ziv and Yann LeCun.
\newblock To compress or not to compress--self-supervised learning and information theory: A review.
\newblock \emph{arXiv preprint arXiv:2304.09355}, 2023.

\bibitem[Shwartz-Ziv \& Tishby(2017)Shwartz-Ziv and Tishby]{shwartz2017opening}
Ravid Shwartz-Ziv and Naftali Tishby.
\newblock Opening the black box of deep neural networks via information.
\newblock \emph{arXiv preprint arXiv:1703.00810}, 2017.

\bibitem[Sridharan \& Kakade(2008)Sridharan and Kakade]{sridharan2008information}
Karthik Sridharan and Sham~M Kakade.
\newblock An information theoretic framework for multi-view learning.
\newblock In \emph{COLT}, pp.\  403--414, 2008.

\bibitem[Subramanian et~al.(2017)Subramanian, Narayan, Corsello, Peck, Natoli, Lu, Gould, Davis, Tubelli, Asiedu, et~al.]{subramanian2017next}
Aravind Subramanian, Rajiv Narayan, Steven~M Corsello, David~D Peck, Ted~E Natoli, Xiaodong Lu, Joshua Gould, John~F Davis, Andrew~A Tubelli, Jacob~K Asiedu, et~al.
\newblock A next generation connectivity map: L1000 platform and the first 1,000,000 profiles.
\newblock \emph{Cell}, 171\penalty0 (6):\penalty0 1437--1452, 2017.

\bibitem[Tishby et~al.(2000)Tishby, Pereira, and Bialek]{tishby2000information}
Naftali Tishby, Fernando~C Pereira, and William Bialek.
\newblock The information bottleneck method.
\newblock \emph{arXiv preprint physics/0004057}, 2000.

\bibitem[Tosh et~al.(2021)Tosh, Krishnamurthy, and Hsu]{tosh2021contrastive}
Christopher Tosh, Akshay Krishnamurthy, and Daniel Hsu.
\newblock Contrastive learning, multi-view redundancy, and linear models.
\newblock In \emph{Algorithmic Learning Theory}, pp.\  1179--1206. PMLR, 2021.

\bibitem[Tsai et~al.(2020)Tsai, Wu, Salakhutdinov, and Morency]{tsai2020self}
Yao-Hung~Hubert Tsai, Yue Wu, Ruslan Salakhutdinov, and Louis-Philippe Morency.
\newblock Self-supervised learning from a multi-view perspective.
\newblock In \emph{International Conference on Learning Representations}, 2020.

\bibitem[Wang et~al.(2023{\natexlab{a}})Wang, Gupta, Uhler, and Jaakkola]{wang2023removing}
Chenyu Wang, Sharut Gupta, Caroline Uhler, and Tommi~S Jaakkola.
\newblock Removing biases from molecular representations via information maximization.
\newblock In \emph{The Twelfth International Conference on Learning Representations}, 2023{\natexlab{a}}.

\bibitem[Wang \& Isola(2020)Wang and Isola]{wang2020understanding}
Tongzhou Wang and Phillip Isola.
\newblock Understanding contrastive representation learning through alignment and uniformity on the hypersphere.
\newblock In \emph{International conference on machine learning}, pp.\  9929--9939. PMLR, 2020.

\bibitem[Wang et~al.(2023{\natexlab{b}})Wang, Rudner, and Wilson]{wang2023visual}
Ying Wang, Tim~GJ Rudner, and Andrew~Gordon Wilson.
\newblock Visual explanations of image-text representations via multi-modal information bottleneck attribution.
\newblock In \emph{Thirty-seventh Conference on Neural Information Processing Systems}, 2023{\natexlab{b}}.

\bibitem[Yang et~al.(2021)Yang, Belyaeva, Venkatachalapathy, Damodaran, Katcoff, Radhakrishnan, Shivashankar, and Uhler]{yang2021multi}
Karren~Dai Yang, Anastasiya Belyaeva, Saradha Venkatachalapathy, Karthik Damodaran, Abigail Katcoff, Adityanarayanan Radhakrishnan, GV~Shivashankar, and Caroline Uhler.
\newblock Multi-domain translation between single-cell imaging and sequencing data using autoencoders.
\newblock \emph{Nature Communications}, 12\penalty0 (1):\penalty0 31, 2021.

\bibitem[Yuan et~al.(2021)Yuan, Lin, Kuen, Zhang, Wang, Maire, Kale, and Faieta]{yuan2021multimodal}
Xin Yuan, Zhe Lin, Jason Kuen, Jianming Zhang, Yilin Wang, Michael Maire, Ajinkya Kale, and Baldo Faieta.
\newblock Multimodal contrastive training for visual representation learning.
\newblock In \emph{Proceedings of the IEEE/CVF Conference on Computer Vision and Pattern Recognition}, pp.\  6995--7004, 2021.

\bibitem[Zhang et~al.(2022)Zhang, Wang, Shivashankar, and Uhler]{zhang2022graph}
Xinyi Zhang, Xiao Wang, GV~Shivashankar, and Caroline Uhler.
\newblock Graph-based autoencoder integrates spatial transcriptomics with chromatin images and identifies joint biomarkers for {Alzheimer’s} disease.
\newblock \emph{Nature Communications}, 13\penalty0 (1):\penalty0 7480, 2022.

\bibitem[Zhang et~al.(2024)Zhang, Shivashankar, and Uhler]{zhang2024partially}
Xinyi Zhang, GV~Shivashankar, and Caroline Uhler.
\newblock Partially shared multi-modal embedding learns holistic representation of cell state.
\newblock \emph{Under revision, preprint available at \url{https://biorxiv.org/cgi/content/short/2024.10.01.615977v1}}, 2024.

\end{thebibliography}
\bibliographystyle{iclr2025_conference}

\newpage
\appendix
\section{Additional Examples of Unattainable MNI}
\label{app.nonmni.example}

In this section, we present two additional examples where the Minimum Necessary Information (MNI) is unattainable. These examples offer a clear understanding of how unattainable MNI manifests in data, providing practical insights and making the concept more accessible.

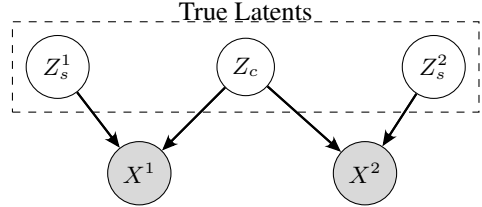
\begin{wrapfigure}{r}{0.5\textwidth}
\vspace{-35pt}
  \centering
    \begin{tikzpicture}[node distance=1.5cm, every node/.style={draw, circle, align=center}]

  \node (Zs1) {{\small$Z_s^1$}};
  \node[right of=Zs1, node distance=2.5cm] (Zc) {{\small$Z_c$}};
  \node[right of=Zc, node distance=2.5cm] (Zs2) {{\small$Z_s^2$}};
  \node[below left of=Zc, node distance=2cm, fill=gray!30] (X1) {{\small$X^1$}};
  \node[right of=X1, node distance=3cm, fill=gray!30] (X2) {{\small$X^2$}};
  \node[draw, fit=(Zs1)(Zc)(Zs2), inner sep=5pt, rectangle, dashed, label={[anchor=south, yshift=-0.9cm]above:True Latents}] (Box) {};

  \draw[bold arrow] (Zs1) -- (X1);
  \draw[bold arrow] (Zc) -- (X1);
  \draw[bold arrow] (Zc) -- (X2);
  \draw[bold arrow] (Zs2) -- (X2);
\end{tikzpicture}
\caption{Example of unattainble MNI.}
\label{fig:app.example}
\vspace{-5pt}
\end{wrapfigure}

Firstly, we show an example in a simple mathematical case. As illustrated in Figure~\ref{fig:app.example}, we have true latent variables are independently drawn from a Bernoulli distribution, i.e. $Z_s^1, Z_c, Z_s^2\sim \text{Bernoulli}(0.5)$, and observations $X^1$ and $X^2$ are generated according to $X^1=\text{OR}(Z_c, Z_s^1)$ and $X^2=\text{OR}(Z_c, Z_s^2)$. In this scenario, if we observe $X^1=1$, we are not able to distinguish whether this result is due to $Z_c=1$, or $Z_s^1=1$, or both. Thus, extracting purely shared features from the observations is infeasible.

Further, we provide an example in 3D geometry, as illustrated in Figure~\ref{fig:app.mainfig}. For information such as the height of the cone and the number of spheres, one modality conveys the complete information while the other conveys partial information. 
This ambiguity makes it challenging to classify the information as purely shared or modality-specific, placing it in an uncertain category.
\begin{figure}[!ht]
    \centering
    \includegraphics[width=\textwidth]{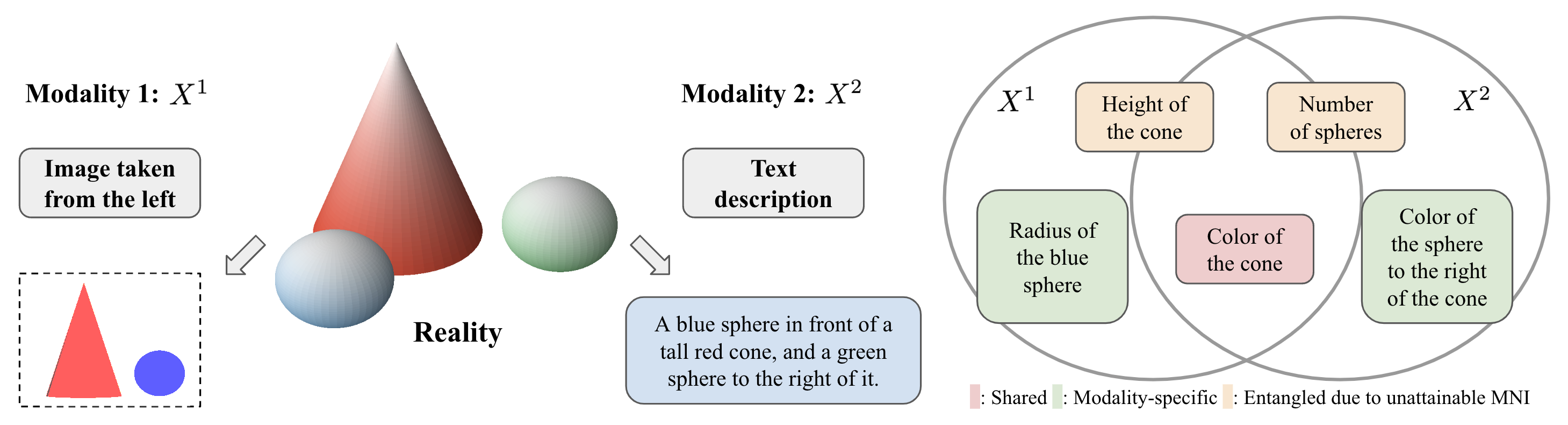}
    \caption{Image modality ($X_1$) and text modality ($X_2$) of an underlying reality. The Venn diagram illustrates shared and specific information between modalities \( X_1 \) and \( X_2 \): shared content is shown in red, modality-specific content in green, and entangled content due to unattainable MNI in orange. For example, for the cone's height, the image conveys full information, while the text (i.e. "tall red cone") provides partial information. Similarly, for the number of spheres, the text specifies it completely, whereas the image indicates there is at least one sphere.}
    \label{fig:app.mainfig}
\end{figure}

\section{Background on Information Bottleneck}
\label{app.background}
The Information Bottleneck (IB) principle provides a powerful theoretical foundation for our method. IB objective seeks to find a representation $Z^1$ of a random variable $X^1$ that optimally trades off the preservation of relevant information with compression \citep{tishby2000information}.  Relevance is defined as the mutual information, $I(Z^1; X^2)$, between the representation $Z^1$ and another target variable $X^2$. Compression is enforced by constraining the mutual information between the representation and the original data, $I(X^1; Z^1)$, to lie below a specified threshold. This is can be formalize in terms of the following constrained optimization problem:
\begin{equation}\label{eq:IB_obj}
    \argmax_{Z^1} I(Z^1; X^2), \textit{s.t. } I(Z^1;X^1) \leq \delta
\end{equation}

where $Z^1$ is from a set of random variables that obey the Markov chain $Z^1 \leftrightarrow X^1 \leftrightarrow X^2$. In practice, Equation \ref{eq:IB_obj} can be optimized by minimizing the IB Lagrangian in Equation \ref{eq:IB_lagrange}.
\begin{equation} \label{eq:IB_lagrange}
    \mathcal{L} = I(Z^1; X^1) - \beta \rebuttal{I(Z^1; X^2)} 
\end{equation}

Here, the Lagrangian multiplier $\beta$ controls the emphasis placed on compression versus expressiveness.

\citet{fischer2020conditional} extends IB to the conditional entropy bottleneck (CEB) objective, which utilizes the conditional mutual information term $I(Z^1;X^1|X^2)$ in place of the mutual information term $I(Z^1;X^1)$ in Equation~\ref{eq:IB_lagrange}:
\begin{equation}
    \mathcal{L} = I(Z^1; X^1|X^2) - \beta I(Z^1; Z^2) \nonumber
\end{equation}

While being equivalent to the IB Lagrangian in terms of the information criterion, CEB rectifies the information plane to precisely measure compression using the conditional mutual information, and offers a clear assessment of how close the compression is to being optimal. 

Note that although CEB aligns with our step 1 objective in Equation~\ref{eq:step1}, its optimality has not been fully examined in \citet{fischer2020conditional}, particularly under scenarios where MNI is unattainable. Furthermore, \citet{fischer2020conditional} primarily addresses a supervised learning context, where $X^2$ serves as the label for $X^1$. In contrast, our study tackles the more complex multimodal setting, where $X^1$ and $X^2$ are two distinct data modalities. We extend the analysis to both attainable and unattainable MNI cases, demonstrating the efficacy of our approach in capturing shared and modality-specific information under these challenging scenarios. This broadens the applicability of CEB beyond traditional label-based supervised settings to multimodal data with complex modality entanglements.

\section{Properties of the IB curve}
\label{app.ibprop}
The \textbf{information plane} is a helpful visualization of the information bottleneck principle, which utilizes $I(X^1;Z^1)$ and $I(X^2;Z^1)$ as its coordinates that represents the trade-off between compression and prediction \citep{tishby2000information}. The frontier of all possible $Z^1$s is called the \textbf{IB curve} or $F(r)$ \citep{kolchinsky2018caveats} as follows:
\begin{equation}
    F(r) := \max_{Z^1: Z^1-X^1-X^2} I(Z^1;X^2) \ \text{s.t.} \ I(X^2;Z^1) \leq r \nonumber
\end{equation}

The IB curve has the following properties:
\begin{itemize}[leftmargin=*]
    \item The IB curve is concave (Lemma 5 in \cite{gilad2003information}) and monotonically non-decreasing.
    \item The IB curve is upper bounded by line $I(X^2;Z^1)=I(X^1;Z^1)$ and line $I(X^2;Z^1)=I(X^1;X^2)$, according to the Markov relationship $Z^1-X^1-X^2$.
    \item When the MNI point is attainable for random variables $X^1$ and $X^2$, the IB curve has the following formula \citep{rodriguez2020convex, kolchinsky2018caveats}: 
    \begin{equation}
    I(Z^1;X^2)=\left\{
\begin{aligned}
    I(Z^1;X^1), \ & \quad \text{if}\ I(Z^1;X^1) \leq I(X^1;X^2) \\\nonumber
    I(X^1;X^2), \ & \quad \text{if}\ I(X^1;X^2) < I(Z^1;X^1) \leq  H(X^1)
\end{aligned}
\right.
    \end{equation}
    and the Gateaux derivative of $I(X^2;Z^1)$ with respect to $p(z^1|x^1)$ (as used in~\citep{pan2021disentangled}) doesn't exist at the MNI point (see details in Appendix~\ref{app.diff}).

\end{itemize}

\section{Proofs}\label{sec:proofs}
\label{app.proof}
\setcounter{prop}{0}
\subsection{Proof of Proposition~\ref{prop.mni.c}}
\begin{prop}
    If MNI is attainable for random variable $X^1$ and $X^2$, maximizing $L_c^1=I(Z^1;X^2)-\beta I(Z^1;X^1|X^2)$ achieves MNI for any $\beta>0$, i.e. $I(\hat{Z}_c^{1*};X^1)=I(\hat{Z}_c^{1*};X^2)=I(X^1;X^2)$, where $\hat{Z}_c^{1*} := \argmax_{Z^1-X^1-X^2} L_c^1$.
\end{prop}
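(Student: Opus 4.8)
The plan is to reduce everything to two elementary consequences of the Markov constraint $Z^1-X^1-X^2$ followed by an equality-case analysis. First I would rewrite the objective: since $Z^1-X^1-X^2$ is a Markov chain, the chain rule for mutual information gives $I(Z^1;X^1|X^2) = I(Z^1;X^1,X^2) - I(Z^1;X^2) = I(Z^1;X^1) - I(Z^1;X^2)$ (the identity used in the footnote, but without setting it to zero). Substituting into $L_c^1$ yields
\[
L_c^1 = I(Z^1;X^2) - \beta\bigl(I(Z^1;X^1) - I(Z^1;X^2)\bigr) = (1+\beta)\,I(Z^1;X^2) - \beta\,I(Z^1;X^1).
\]
Then I invoke the two inequalities valid for any admissible $Z^1$: the data-processing inequality $I(Z^1;X^2) \le I(X^1;X^2)$ and, from the Markov chain read in reverse, $I(Z^1;X^2) \le I(Z^1;X^1)$. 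Using the second together with $\beta>0$ gives $-\beta I(Z^1;X^1) \le -\beta I(Z^1;X^2)$, hence $L_c^1 \le (1+\beta)I(Z^1;X^2) - \beta I(Z^1;X^2) = I(Z^1;X^2) \le I(X^1;X^2)$. So $I(X^1;X^2)$ is a uniform upper bound on the objective over all admissible $Z^1$.

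Next I would exhibit a representation attaining the bound. By MNI-attainability there is some $Z^{\mathrm{MNI}}-X^1-X^2$ with $I(Z^{\mathrm{MNI}};X^1)=I(Z^{\mathrm{MNI}};X^2)=I(X^1;X^2)$; plugging it into the rewritten objective gives $L_c^1 = (1+\beta)I(X^1;X^2) - \beta I(X^1;X^2) = I(X^1;X^2)$. Therefore the maximum value of $L_c^1$ is exactly $I(X^1;X^2)$ and it is attained, so $\hat Z_c^{1*}=\argmax_{Z^1-X^1-X^2}L_c^1$ is well defined and $L_c^1(\hat Z_c^{1*}) = I(X^1;X^2)$. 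Finally I read off the equality conditions in the chain $I(X^1;X^2) = (1+\beta)I(\hat Z_c^{1*};X^2) - \beta I(\hat Z_c^{1*};X^1) \le I(\hat Z_c^{1*};X^2) \le I(X^1;X^2)$: the two ends coincide, so both inequalities are tight, forcing $I(\hat Z_c^{1*};X^2) = I(X^1;X^2)$ from the last step and $I(\hat Z_c^{1*};X^1) = I(\hat Z_c^{1*};X^2)$ from the first (where $\beta>0$ is needed to cancel the multiplier). This yields $I(\hat Z_c^{1*};X^1) = I(\hat Z_c^{1*};X^2) = I(X^1;X^2)$, i.e. MNI.

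There is no substantive obstacle here; the points needing care are (i) converting the conditional mutual information into a difference via the Markov identity rather than treating it as zero a priori, (ii) ensuring the supremum is actually attained — which is precisely what MNI-attainability supplies, legitimizing the argmax — and (iii) tracking where $\beta>0$ is used, namely in both the upper-bound step and the final equality deduction (at $\beta=0$ the objective degenerates to $I(Z^1;X^2)$ and the claim about $I(\hat Z_c^{1*};X^1)$ would no longer follow). I would also note that the argument only uses the two bounding lines of the IB curve and not its concavity, which is why it remains valid without the strict-concavity hypothesis required later in Proposition~\ref{prop.nonmni_c}.
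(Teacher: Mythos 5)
Your proof is correct and follows essentially the same route as the paper's: both bound $L_c^1$ by $I(X^1;X^2)$ via the data-processing inequality together with the Markov identity $I(Z^1;X^1|X^2)=I(Z^1;X^1)-I(Z^1;X^2)$, and then read off the equality conditions. The only difference is cosmetic --- you make the attainment of the bound explicit by plugging in an MNI representation, a step the paper leaves implicit.
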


\begin{proof}
    Based on data processing inequality and the Markov relationship $\hat{Z}_c^1 \leftarrow X^1 \leftrightarrow X^2$, $I(\hat{Z}_c^1;X^2) \leq I(X^1;X^2)$. Based on the non-negativity of conditional mutual information, $I(\hat{Z}_c^1;X^1|X^2) \geq 0$. Thus, for $\beta > 0$,
    \begin{equation}
        L_c^1 = I(\hat{Z}_c^1;X^2) - \beta \cdot I(\hat{Z}_c^1;X^1|X^2) \leq I(X^1;X^2), \ \forall \ \hat{Z}_c^1 \nonumber
    \end{equation}
    where the equality holds when $I(\hat{Z}_c^1;X^2) = I(X^1;X^2)$ and $I(\hat{Z}_c^1;X^1|X^2) = 0$

    \noindent Meanwhile,
    \begin{equation}
        I(\hat{Z}_c^1;X^1|X^2)=I(\hat{Z}_c^1;X^1,X^2)-I(\hat{Z}_c^1;X^2)=I(\hat{Z}_c^1;X^1)-I(\hat{Z}_c^1;X^2)\nonumber
    \end{equation}
    where the second equality is due to the conditional independence $\hat{Z}_c^1 \indep X^2 |X^1$ in the Markov relationship.
    
    \noindent Therefore, $L_c^1$ achieves maximality $I(X^1;X^2)$ \emph{iff} $I(\hat{Z}_c^1;X^2) = I(X^1;X^2)=I(\hat{Z}_c^1;X^1)$, i.e. $\hat{Z}_c^1$ achieves MNI.

\end{proof}

\subsection{Proof of Proposition~\ref{prop.nonmni_c}}
\begin{prop}
    For random variables $X^1$ and $X^2$, when the IB curve $I(Z^1;X^2)=F(I(Z^1;X^1))$ is strictly concave,
    \\
    1) there exists a bijective mapping from $\beta$ in $L_c^1$ to the value of information constraint $\delta_c$ in the definition of optimal shared latent $\hat{Z}_c^{1*}$ in Equation \ref{eq:objective_zs};
    \\
    2) $\frac{\partial I(Z_{\beta}^{1*};X^1)}{\partial \beta} < 0, \ \frac{\partial I(Z_{\beta}^{1*};X^2)}{\partial \beta} < 0$, where $Z_{\beta}^{1*}$ is the optimal solution corresponding to a certain $\beta$. 
\end{prop}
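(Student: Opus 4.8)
The plan is to leverage the variational/Lagrangian structure of $L_c^1$ together with the strict concavity of the IB curve $F$. First I would rewrite $L_c^1$ using the identity $I(Z^1;X^1|X^2) = I(Z^1;X^1) - I(Z^1;X^2)$ (valid under the Markov chain $Z^1\text{-}X^1\text{-}X^2$, as established in the proof of Proposition~\ref{prop.mni.c}), so that
\begin{equation}
L_c^1 = I(Z^1;X^2) - \beta\bigl(I(Z^1;X^1) - I(Z^1;X^2)\bigr) = (1+\beta)\,I(Z^1;X^2) - \beta\, I(Z^1;X^1). \nonumber
\end{equation}
Maximizing this over all $Z^1$ obeying the Markov constraint is equivalent, after dividing by $(1+\beta)>0$, to maximizing $I(Z^1;X^2) - \frac{\beta}{1+\beta} I(Z^1;X^1)$, which is exactly the standard IB Lagrangian with multiplier $s := \frac{\beta}{1+\beta} \in (0,1)$ for $\beta\in(0,\infty)$. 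Since the optimal representations for a given compression level lie on the IB curve $I(Z^1;X^2) = F(r)$ where $r = I(Z^1;X^1)$, the optimization reduces to the scalar problem $\max_{r} \bigl[F(r) - s\, r\bigr]$.

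For part (2): by strict concavity of $F$, the map $r \mapsto F(r) - s r$ has a unique maximizer $r^*(s)$ characterized by the first-order condition $F'(r^*(s)) = s$ (at an interior point; I would note that for $s$ in the relevant range the maximizer is interior, lying strictly between $0$ and the MNI abscissa). Strict concavity makes $F'$ strictly decreasing, hence $r^*(s)$ is a strictly decreasing function of $s$, and since $s = \beta/(1+\beta)$ is strictly increasing in $\beta$, we get $r^*$ strictly decreasing in $\beta$; that is, $\frac{\partial I(Z_\beta^{1*};X^1)}{\partial\beta} < 0$. Then $\frac{\partial I(Z_\beta^{1*};X^2)}{\partial\beta} = \frac{\partial F(r^*)}{\partial\beta} = F'(r^*)\cdot\frac{\partial r^*}{\partial\beta} < 0$ because $F'(r^*) = s > 0$ and $\frac{\partial r^*}{\partial\beta} < 0$. (If one wants to avoid assuming differentiability of $F'$, the strict monotonicity of $r^*$ in $\beta$ can be obtained directly from a standard strict-convexity-of-maximizers argument using strict concavity of $F$, and then the monotonicity of $I(Z_\beta^{1*};X^2)$ follows since $F$ itself is strictly increasing on the relevant interval.)

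For part (1): the bijection between $\beta$ and $\delta_c$. The optimal shared latent $\hat Z_c^{1*}$ in Equation~\ref{eq:objective_zs} is defined by minimizing $I(Z^1;X^1|X^2)$ subject to $I(X^1;X^2) - I(Z^1;X^2) \le \delta_c$; on the IB curve, minimizing $I(Z^1;X^1|X^2) = I(Z^1;X^1) - I(Z^1;X^2)$ for a fixed (active) value of $I(Z^1;X^2)$ means taking the smallest $r$ with $F(r) = I(Z^1;X^2)$, and strict concavity/monotonicity of $F$ makes $r$ a strictly increasing function of $I(Z^1;X^2)$, so the constraint binds and $\delta_c \mapsto I(Z^1;X^2) = I(X^1;X^2) - \delta_c \mapsto r(\delta_c)$ is a strictly monotone correspondence; thus each $\delta_c$ (in the feasible range $[0, I(X^1;X^2) - \lim_{r\to 0}F(r)]$ or wherever the non-MNI regime lives) picks out a unique point on the curve. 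On the other hand, part (2) showed each $\beta$ picks out a unique point $r^*(\beta)$ on the curve, and as $\beta$ ranges over $(0,\infty)$, $s$ ranges over $(0,1)$ and $r^*$ ranges over the portion of the curve where the slope $F'$ lies in $(0,1)$ — which, because $F$ is upper bounded by the line $I(Z^1;X^2) = I(Z^1;X^1)$ (slope $1$) and by the horizontal line $I(Z^1;X^2) = I(X^1;X^2)$ (slope $0$), is exactly the non-trivial portion corresponding to non-attained MNI. Composing the two monotone correspondences $\beta \mapsto r^*(\beta)$ and $r \mapsto \delta_c(r)$ gives the desired bijection $\beta \leftrightarrow \delta_c$.

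I expect the main obstacle to be the careful handling of endpoints and the range of validity: one must argue that for $\beta \in (0,\infty)$ the maximizer of $F(r) - sr$ is genuinely interior (so that the first-order condition applies and the constraint in Equation~\ref{eq:objective_zs} is active), and pin down precisely which interval of $\delta_c$ values the bijection covers — namely the regime where MNI is not attained. The strict concavity hypothesis is what makes both the uniqueness of the maximizer and the strict monotonicity of $F'$ (equivalently, of the maximizer as a function of the multiplier) go through; the differentiability-in-$\beta$ statements in part (2) should be read as holding wherever the curve is differentiable, with the strict concavity footnote in the paper covering the degenerate linear case.
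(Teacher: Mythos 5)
Your proposal is correct and proves both parts, but it takes a noticeably different route from the paper's proof. The paper constructs an auxiliary curve $f_{CIB}$ relating $I(Z^1;X^2)$ to $I(Z^1;X^1|X^2)$, proves that this curve is increasing and strictly concave, replaces the inequality in Equation~\ref{eq:objective_zs} by an equality, and applies the Lagrange multiplier theorem there, identifying $\tilde{\beta}=1/\beta$ with the slope of $f_{CIB}^{-1}$ at the constrained point; part (2) is then deduced from strict convexity of $f_{CIB}^{-1}$. You instead absorb the conditional term via $I(Z^1;X^1|X^2)=I(Z^1;X^1)-I(Z^1;X^2)$ and renormalize, so maximizing $L_c^1$ becomes the standard IB Lagrangian with multiplier $s=\beta/(1+\beta)\in(0,1)$, and everything collapses to the scalar problem $\max_r\,[F(r)-sr]$ with first-order condition $F'(r^*)=s$. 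The two are equivalent computations (the paper's condition $1/F'(r)-1=1/\beta$ is exactly $F'(r^*)=\beta/(1+\beta)$), but your parametrization buys a shorter argument: uniqueness of the optimizer and the strict monotonicity of $r^*(\beta)$ and of $F(r^*(\beta))$ follow from strict concavity of $F$ alone, with no need to establish monotonicity and concavity of the auxiliary CIB curve, while the paper's framing stays closer to the constraint as literally written in Equation~\ref{eq:objective_zs} in terms of $I(Z^1;X^1|X^2)$. Two small things to tighten if you write this up: for the binding-constraint step in part (1) you should state explicitly that $g(y)=F^{-1}(y)-y$ is increasing along the frontier (this needs the slope of $F$ to be below one on the relevant region, which is also where the paper argues $dF/dr<1$), since monotonicity of $r(y)$ alone does not give it; and the endpoint/range caveat you flag (interior maximizer, matching ranges of $\beta$ and $\delta_c$ on the non-MNI portion of the curve) is glossed in the paper's proof as well, so it is a shared informality rather than a gap specific to your argument.
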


\begin{proof}
Since the IB curve $I(Z^1;X^2)=f_{IB}(I(Z^1;X^1))$ is monotonically non-decreasing and strictly concave, it is monotonically increasing. When $Z^1=X^1$, $f_{IB}$ achieves the maximum point $(H(X^1), I(X^1;X^2))$. Thus, $I(X^1;X^2) \geq I(X^1;X^2)-I(\hat{Z}_c^{1*};X^2)\geq 0$ and is monotonically decreasing. Then there is a bijective mapping between $\delta=I(X^1;X^2)-I(\hat{Z}_c^{1*};X^2) \in [0, I(X^1;X^2)]$ and points in the IB curve.

\noindent Since $I(Z^1;X^1|X^2)=I(Z^1;X^1)-I(Z^1;X^2)$, we have 
$$I(Z^1;X^1|X^2) = f_{IB}^{-1}(I(Z^1;X^2))-I(Z^1;X^2) := f_{CIB}^{-1} (I(Z^1;X^2))$$ where $f_{CIB}(r) = (f_{IB}^{-1}(r) - r)^{-1}$. Since $f_{IB}(r) \leq r$, the equality holds when $r=0$, and $f_{IB}$ is strictly concave, we have $\frac{d f_{IB}}{d r} \leq \frac{d f_{IB}}{d r}|_{r=0} < 1$. Thus $\frac{d f_{IB}^{-1}}{d r}>1$ and
$$\frac{d f_{CIB}}{d r} = (\frac{d f_{IB}^{-1}}{d r}-1)^{-1}>0$$
Furthermore, since $\frac{d f^2_{IB}}{d r^2}<0$ (strict concavity of $f_{IB}$), we have $$\frac{d f^2_{CIB}}{d r^2} = -\frac{1}{(\frac{d f_{IB}^{-1}}{d r}-1)^2} \cdot (-\frac{\frac{d f^2_{IB}}{d r^2}}{(\frac{d f_{IB}}{d r})^2}) < 0$$ 
Therefore, $f_{CIB}$ is also monotonically increasing and strictly concave. Then there is a bijective mapping between $\delta=I(X^1;X^2)-I(\hat{Z}_c^{1*};X^2) \in [0, I(X^1;X^2)]$ and points in the CIB curve $f_{CIB}$.

\noindent Since $f_{CIB}$ is increasing, the inequality constraint can be replaced by the equality constraint, i.e. 
\begin{equation}
    \hat{Z}_c^{1*} = \argmin_{I(Z^1;X^2) \geq I(X^1;X^2)-\delta} I(Z^1;X^1|X^2) = \argmin_{I(Z^1;X^2) = I(X^1;X^2)-\delta} I(Z^1;X^1|X^2)\nonumber
\end{equation}

\noindent The corresponding Lagrangian function is
\begin{align}
    L &= I(Z^1;X^1|X^2) - \tilde{\beta} \cdot (I(Z^1;X^2)-I(X^1;X^2)+\delta) \nonumber\\ &= f^{-1}_{CIB}(I(Z^1;X^2)) - \tilde{\beta} \cdot I(Z^1;X^2) + \tilde{\beta} \cdot (I(X^1;X^2)-\delta)\nonumber
\end{align}

Based on the Lagrangian multiplier theorem, the optimal $\hat{Z}_c^{1*}$ is achieved when 
\begin{equation}
    \frac{d L}{d I(Z^1;X^2)} = \frac{d f^{-1}_{CIB}(I(Z^1;X^2))}{d I(Z^1;X^2)} - \tilde{\beta} = 0 \nonumber
\end{equation}
\begin{equation}
    \frac{d L}{d \tilde{\beta}} = -I(Z^1;X^2)+I(X^1;X^2)-\delta=0 \nonumber
\end{equation}

Thus,
\begin{equation}
\label{eq.lagbeta}
    \tilde{\beta} = \frac{d f^{-1}_{CIB}(I(Z^1;X^2))}{d I(Z^1;X^2)}|_{I(Z^1;X^2)=I(X^1;X^2)-\delta}
\end{equation}
i.e. $\tilde{\beta}$ is the slope of $f^{-1}_{CIB}$ when $I(Z^1;X^2)=I(X^1;X^2)-\delta$. Therefore, there is a bijective mapping between $\beta = \tilde{\beta}^{-1}$ in $L_c^1$ and points in $f_{CIB}$, and thus $\delta$.

\noindent For any $\beta_1,\beta_2>0$ and $\beta_1>\beta_2$, we have $\tilde{\beta}_1=\beta_1^{-1} < \beta_2^{-1}=\tilde{\beta}_2$. According to formula \ref{eq.lagbeta}, 
\begin{equation}
    \tilde{\beta}_1 = \frac{d f^{-1}_{CIB}(I(Z^1;X^2))}{d I(Z^1;X^2)}|_{I(Z^1;X^2)=I(Z_{\beta_1}^*;X^2)}, \ \tilde{\beta}_2 = \frac{d f_{CIB}^{-1}(I(Z^1;X^2))}{d I(Z^1;X^2)}|_{I(Z^1;X^2)=I(Z_{\beta_2}^*;X^2)}\nonumber
\end{equation}

Since $\tilde{\beta}_1<\tilde{\beta}_2$ and $f_{CIB}$ is a strictly concave function, $f^{-1}_{CIB}$ is strictly convex and we have $I(Z_{\beta_1}^*;X^2) < I(Z_{\beta_2}^*;X^2)$. Furthermore, since $f_{IB}$ is monotonically increasing, we have $I(Z_{\beta_1}^*;X^1) < I(Z_{\beta_2}^*;X^1)$. Therefore,
\begin{equation}
    \frac{\partial I(Z_{\beta}^*;X^2)}{\partial \beta} < 0, \ \frac{\partial I(Z_{\beta}^*;X^1)}{\partial \beta} < 0\nonumber
\end{equation}

\end{proof}

\subsection{Proof of Proposition~\ref{prop.mni.s}}
\begin{prop}
    If MNI is attainable for random variables $X^1$ and $X^2$, 
    \begin{equation}
        \argmax_{ Z^1-X^1-X^2}I(Z^1;X^1|X^2)=\argmax_{Z^1-X^1-X^2} I(Z^1,\hat{Z}_c^{2*};X^1)\nonumber
    \end{equation}
    where $\hat{Z}_c^{2*}$ is the representation based on $X^2$ that satisfies MNI, i.e. $I(\hat{Z}_c^{2*};X^1)=I(\hat{Z}_c^{2*};X^2)=I(X^1;X^2)$.
\end{prop}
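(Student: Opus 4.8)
The plan is to show the two $\argmax$ sets coincide by proving that, under attainable MNI, the objective $I(Z^1,\hat{Z}_c^{2*};X^1)$ differs from $I(Z^1;X^1\mid X^2)$ only by a constant (independent of $Z^1$), so that maximizing one is the same as maximizing the other over the Markov family $Z^1\text{-}X^1\text{-}X^2$. First I would use the chain rule for mutual information to write
\begin{equation}
I(Z^1,\hat{Z}_c^{2*};X^1) = I(\hat{Z}_c^{2*};X^1) + I(Z^1;X^1\mid \hat{Z}_c^{2*}). \nonumber
\end{equation}
The first term equals $I(X^1;X^2)$ by the MNI hypothesis, hence is a constant with respect to $Z^1$. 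So it remains to relate $I(Z^1;X^1\mid \hat{Z}_c^{2*})$ to $I(Z^1;X^1\mid X^2)$.

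Next I would argue that conditioning on $\hat{Z}_c^{2*}$ is equivalent to conditioning on $X^2$ as far as $X^1$ and $Z^1$ are concerned. The key fact is that $\hat{Z}_c^{2*}$, being an MNI representation of $X^2$, satisfies $I(\hat{Z}_c^{2*};X^1\mid X^2)=0$ (it is a function-like summary of $X^2$ with no extra info, exactly the MNI consequence $I(Z;X^1\mid X^2)=0$ noted for $\hat{Z}_c^{2*}$ in the footnote/discussion after the MNI definition); combined with the Markov structure $\hat{Z}_c^{2*}\text{-}X^2\text{-}(X^1,Z^1)$ and $I(\hat{Z}_c^{2*};X^1)=I(X^1;X^2)$ (so $X^2$ carries no shared information beyond $\hat{Z}_c^{2*}$), one gets that $X^1$ and $X^2$ are conditionally independent given $\hat{Z}_c^{2*}$ restricted to the shared part, and more precisely $I(Z^1;X^1\mid \hat{Z}_c^{2*}) = I(Z^1;X^1\mid X^2)$. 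Concretely I would prove this by a two-sided inequality: the data-processing/Markov relations give $I(Z^1;X^1\mid \hat{Z}_c^{2*}) \geq I(Z^1;X^1\mid X^2)$ since $\hat{Z}_c^{2*}$ is coarser than $X^2$, while the MNI conditions force the reverse, using that $\hat{Z}_c^{2*}$ already captures all of $I(X^1;X^2)$ so conditioning further on $X^2$ removes nothing additional that is correlated with $X^1$. Chaining these, $I(Z^1,\hat{Z}_c^{2*};X^1) = I(X^1;X^2) + I(Z^1;X^1\mid X^2)$, and since the first summand does not depend on $Z^1$, the two maximizers agree.

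The main obstacle I anticipate is the rigorous justification of the step $I(Z^1;X^1\mid \hat{Z}_c^{2*}) = I(Z^1;X^1\mid X^2)$: one has to be careful about which Markov/independence relations among $Z^1, X^1, X^2, \hat{Z}_c^{2*}$ are actually available. We know $Z^1\text{-}X^1\text{-}X^2$ and $\hat{Z}_c^{2*}\text{-}X^2\text{-}X^1$ (so $\hat{Z}_c^{2*}\text{-}X^2\text{-}(X^1,Z^1)$), but the claim that MNI makes $\hat{Z}_c^{2*}$ "as good as $X^2$" for predicting $X^1$ needs the identity $I(\hat{Z}_c^{2*};X^1)=I(X^1;X^2)$ to be converted into a statement about conditional mutual informations; I would do this via $I(X^1;X^2\mid \hat{Z}_c^{2*}) = I(X^1;X^2,\hat{Z}_c^{2*}) - I(X^1;\hat{Z}_c^{2*}) = I(X^1;X^2) - I(X^1;X^2) = 0$ (the middle equality using $\hat{Z}_c^{2*}\text{-}X^2\text{-}X^1$), which says $X^1 \perp X^2 \mid \hat{Z}_c^{2*}$. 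That conditional independence, together with $Z^1\text{-}X^1\text{-}X^2$, is exactly what is needed to collapse the conditioning variable from $X^2$ to $\hat{Z}_c^{2*}$ (and back) inside $I(Z^1;X^1\mid\cdot)$, and I would spell out that manipulation carefully as the crux of the argument.
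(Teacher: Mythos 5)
Your proposal is correct and follows essentially the same route as the paper's proof: both arguments reduce the claim to showing that the two objectives differ by the constant $I(X^1;X^2)$, and both hinge on the same key consequence of MNI, namely $I(X^1;X^2\mid\hat{Z}_c^{2*})=0$, derived by the identical chain-rule computation. The only difference is bookkeeping --- the paper expands $I(Z^1,X^2,\hat{Z}_c^{2*};X^1)$ in two ways and shows $I(X^1;X^2\mid Z^1,\hat{Z}_c^{2*})=0$, whereas you collapse the conditioning variable inside $I(Z^1;X^1\mid\cdot)$; these are equivalent rearrangements of the same identities.
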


\begin{proof}
    Based on the graphical model,  $\hat{Z}_c^2 \indep X^1 |X^2,Z^1$, we have $I(\hat{Z}_c^2;X^1|Z^1,X^2)=0$.
    Thus,
    \begin{align}
    \label{prop.mni2.eq1}
        I(Z^1,X^2;X^1) &= I(Z^1,X^2;X^1) + I(\hat{Z}_c^2;X^1|Z^1,X^2) \nonumber\\
        &= I(Z^1,X^2,\hat{Z}_c^2;X^1) = I(Z^1,\hat{Z}_c^2;X^1) + I(X^1;X^2|Z^1,\hat{Z}_c^2)
    \end{align}
    Meanwhile,
    \begin{equation}
    \label{prop.mni2.eq3}
        I(X^1;X^2|\hat{Z}_c^2)-I(X^1;X^2|Z^1,\hat{Z}_c^2) = I(X^2;Z^1|\hat{Z}_c^2) - I(X^2;Z^1|X^1,\hat{Z}_c^2) = I(X^2;Z^1|\hat{Z}_c^2)\nonumber
    \end{equation}
    where the second equality is due to the conditional independence $X^2 \indep Z^1|X^1,\hat{Z}_c^2$ in the graphical model.

    \noindent When $\hat{Z}_c^2$ is the MNI point $\hat{Z}_c^{2*}$, $I(X^1;\hat{Z}_c^{2*})=I(X^1;X^2)$. Thus,
    \begin{equation}
        I(X^1;X^2|\hat{Z}_c^{2*}) = I(X^1;X^2,\hat{Z}_c^{2*}) - I(X^1;\hat{Z}_c^{2*}) = I(X^1;X^2) - I(X^1;\hat{Z}_c^{2*}) = 0\nonumber
    \end{equation}
    where the first equality is due to the Markov relationship $X^1 \leftrightarrow X^2 \rightarrow \hat{Z}_c^{2*}$.

    \noindent Then we have $-I(X^1;X^2|Z^1,\hat{Z}_c^{2*}) = I(X^2;Z^1|\hat{Z}_c^{2*})$. Due to the non-negativity of conditional mutual information, 
    \begin{equation}
    \label{prop.mni2.eq2}
    I(X^1;X^2|Z^1,\hat{Z}_c^{2*}) = I(X^2;Z^1|\hat{Z}_c^{2*})=0
    \end{equation}
    Based on formula \ref{prop.mni2.eq1} and \ref{prop.mni2.eq2}, 
    \begin{equation}
        I(Z^1,X^2;X^1)=I(Z^1,\hat{Z}_c^{2*};X^1)\nonumber
    \end{equation}
    Since $I(Z^1;X^1|X^2)=I(Z^1,X^2;X^1) - I(X^1;X^2)$ and $I(X^1;X^2)$ is a constant value irrelevant to $Z^1$, maximizing $I(Z^1;X^1|X^2)$ is equivalent to maximizing $I(Z^1,X^2;X^1)=I(Z^1,\hat{Z}_c^{2*};X^1)$.
    
\end{proof}

\subsection{Proof of Proposition~\ref{prop.nonmni.s}}
\begin{prop}
    For random variables $X^1$ and $X^2$, 
    \begin{equation}
        0 \leq I(Z^1,X^2;X^1) -I(Z^1,\hat{Z}_c^{2*};X^1) \leq \delta_c \nonumber
    \end{equation}
    where $\hat{Z}_c^{2*}$ is the optimal representation based on $X^2$ with respect to $\delta_c$ as defined in Equation \ref{eq:objective_zs}, i.e. $\hat{Z}_c^{2*} = \argmin_{Z^2} I(Z^2;X^2|X^1), \text{ s.t. } I(X^1;X^2) - I(Z^2;X^1) \leq \delta_c$.
\end{prop}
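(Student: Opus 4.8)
The plan is to collapse the difference $I(Z^1,X^2;X^1)-I(Z^1,\hat{Z}_c^{2*};X^1)$ into a single conditional mutual information term and then squeeze that term between $0$ and $\delta_c$. The first step is to read off from the graphical model (Figure~\ref{fig:gmodel}) that $\hat{Z}_c^{2*}$ is generated from $X^2$ alone, so $\hat{Z}_c^{2*}\indep(X^1,Z^1)\mid X^2$ and in particular $I(\hat{Z}_c^{2*};X^1\mid Z^1,X^2)=0$. Hence $I(Z^1,X^2;X^1)=I(Z^1,X^2,\hat{Z}_c^{2*};X^1)$, and expanding this joint information the other way gives the identity
\[
I(Z^1,X^2;X^1)-I(Z^1,\hat{Z}_c^{2*};X^1)=I(X^1;X^2\mid Z^1,\hat{Z}_c^{2*}).
\]
The lower bound is then immediate from non-negativity of conditional mutual information (equivalently, a data-processing argument, since $(Z^1,\hat{Z}_c^{2*})$ is a stochastic map of $(Z^1,X^2)$).

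For the upper bound I would argue in three moves. First, conditioned on $\hat{Z}_c^{2*}$ the chain $Z^1-X^1-X^2$ is still Markov, because $Z^1\indep X^2\mid X^1$ together with ``$\hat{Z}_c^{2*}$ is a function of $X^2$'' gives $Z^1\indep X^2\mid X^1,\hat{Z}_c^{2*}$; the conditional data-processing inequality then yields $I(X^1;X^2\mid Z^1,\hat{Z}_c^{2*})\le I(X^1;X^2\mid\hat{Z}_c^{2*})$. Second, since $X^1-X^2-\hat{Z}_c^{2*}$ is Markov we have $I(X^1;\hat{Z}_c^{2*}\mid X^2)=0$, so $I(X^1;X^2\mid\hat{Z}_c^{2*})=I(X^1;X^2,\hat{Z}_c^{2*})-I(X^1;\hat{Z}_c^{2*})=I(X^1;X^2)-I(X^1;\hat{Z}_c^{2*})$. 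Third, the constraint in the definition of $\hat{Z}_c^{2*}$ in Equation~\ref{eq:objective_zs} is exactly $I(X^1;X^2)-I(\hat{Z}_c^{2*};X^1)\le\delta_c$. Composing the three bounds gives $I(X^1;X^2\mid Z^1,\hat{Z}_c^{2*})\le\delta_c$, which combined with the identity above completes the proof.

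I expect the only delicate point to be the first move of the upper bound: the naive worry is that conditioning on the extra variable $Z^1$ might \emph{increase} the mutual information between $X^1$ and $X^2$. I would therefore spell it out by expanding $I(X^1;X^2,Z^1\mid\hat{Z}_c^{2*})$ two ways and using $I(Z^1;X^2\mid X^1,\hat{Z}_c^{2*})=0$, which gives $I(X^1;X^2\mid Z^1,\hat{Z}_c^{2*})=I(X^1;X^2\mid\hat{Z}_c^{2*})-\bigl(I(X^1;Z^1\mid\hat{Z}_c^{2*})-I(X^1;Z^1\mid X^2)\bigr)$; a parallel expansion of $I(Z^1;X^1,X^2\mid\hat{Z}_c^{2*})$ shows the bracketed quantity equals $I(Z^1;X^2\mid\hat{Z}_c^{2*})\ge0$, so subtracting it can only decrease the value and the inequality follows. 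Everything else is routine chain-rule bookkeeping with the conditional independences of Figure~\ref{fig:gmodel}.
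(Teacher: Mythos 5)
Your proposal is correct and follows essentially the same route as the paper: the same chain-rule identity $I(Z^1,X^2;X^1)-I(Z^1,\hat{Z}_c^{2*};X^1)=I(X^1;X^2\mid Z^1,\hat{Z}_c^{2*})$ obtained from $I(\hat{Z}_c^{2*};X^1\mid Z^1,X^2)=0$, the same reduction $I(X^1;X^2\mid Z^1,\hat{Z}_c^{2*})=I(X^1;X^2\mid\hat{Z}_c^{2*})-I(Z^1;X^2\mid\hat{Z}_c^{2*})$ via $Z^1\indep X^2\mid X^1,\hat{Z}_c^{2*}$ (which is exactly how the paper resolves the ``conditioning might increase information'' worry), and the same final step using $I(X^1;X^2\mid\hat{Z}_c^{2*})=I(X^1;X^2)-I(X^1;\hat{Z}_c^{2*})\le\delta_c$ from the Markov chain $X^1\leftrightarrow X^2\rightarrow\hat{Z}_c^{2*}$ and the constraint. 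No gaps; this matches the paper's argument.
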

\begin{proof}
    According to formula \ref{prop.mni2.eq1} and \ref{prop.mni2.eq3},
    \begin{equation}
        I(Z^1,X^2;X^1)=I(Z^1,\hat{Z}_c^2;X^1)+I(X^1;X^2|Z^1,\hat{Z}_c^2)\nonumber
    \end{equation}
    \begin{equation}
        I(X^1;X^2|Z^1,\hat{Z}_c^2)=I(X^1;X^2|\hat{Z}_c^2)-I(Z^1;X^2|\hat{Z}_c^2)\nonumber
    \end{equation}
    Since $\hat{Z}_c^{2*}$ is the optimal latent under $\delta$, $I(X^1;X^2)-I(\hat{Z}_c^{2*};X^1) \leq \delta$. Thus,
    \begin{equation}
        I(X^1;X^2|\hat{Z}_c^{2*})=I(X^1;X^2)-I(X^1;\hat{Z}_c^{2*}) \leq \delta \nonumber
    \end{equation}
    Meanwhile, $I(Z^1;X^2|\hat{Z}_c^2) \geq 0$. Then we have $I(X^1;X^2|Z^1,\hat{Z}_c^2) \leq \delta - I(Z^1;X^2|\hat{Z}_c^2) \leq \delta$.
    Therefore, 
    \begin{equation}
        0 \leq I(Z^1,X^2;X^1)-I(Z^1,\hat{Z}_c^2;X^1)=I(X^1;X^2|Z^1,\hat{Z}_c^2)\leq \delta \nonumber
    \end{equation}
    i.e. $|I(Z^1,X^2;X^1)-I(Z^1,\hat{Z}_c^2;X^1)|\leq \delta$
    
    \noindent Since $I(Z^1;X^1|X^2)=I(Z^1,X^2;X^1) - I(X^1;X^2)$ and $I(X^1;X^2)$ is a constant value irrelevant to $Z^1$, maximizing $I(Z^1;X^1|X^2)$ is equivalent to maximizing $I(Z^1,X^2;X^1)\in [I(Z^1,\hat{Z}_c^{2*};X^1), I(Z^1,\hat{Z}_c^{2*};X^1)+\delta]$.
    
\end{proof}

\section{Sufficient Conditions for MNI}
\label{app.mnicond}
In this section, we explore the conditions for both MNI attainable and unattainable cases. Here, we use $X$ and $Y$ to represent the two modalities, rather than $X^1$ and $X^2$ in other sections. Although prior works have not provided a comprehensive necessary and sufficient condition for the attainability of MNI, and such analysis is beyond the scope of this paper, we provide several sufficient conditions for both MNI attainable and unattainable scenarios.

We first present the sufficient conditions under which MNI is attainable. As outlined in Proposition \ref{prop.mni.xydeter}, Proposition~\ref{prop.mni.yxdeter}, and Proposition~\ref{prop.mni.strong}, MNI is attainable when the relationships between $X$ and $Y$ are either entirely deterministic or completely independent across each sub-domain.
While the deterministic mapping might hold true when $Y$ is the data label, it generally does not hold when $X$ and $Y$ are high-dimensional data modalities. In such cases, a deterministic relationship between the two modalities would imply that one can be fully inferred from the other, leaving no room for modality-specific information in one of the modalities.

\begin{prop}
    For random variables $X$, $Y$, and representation $Z$ derived from $X$ (i.e., the Markov chain $Z \leftarrow X \leftrightarrow Y$ holds), a sufficient condition for MNI to be attainable is that $X\rightarrow Y$ mapping is deterministic~\citep{fischer2020conditional}.\label{prop.mni.xydeter}
\end{prop}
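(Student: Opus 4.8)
The plan is to exhibit an explicit representation that attains MNI, namely $Z := Y$. Since $Y = g(X)$ for a deterministic map $g$, the variable $Z = g(X)$ is a (degenerate, deterministic) function of $X$, so the Markov chain $Z \leftarrow X \leftrightarrow Y$ holds automatically and $Z$ is an admissible representation of $X$. It then suffices to verify the three equalities $I(Z;X) = I(Z;Y) = I(X;Y)$ that define the MNI point.

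The first equality is immediate: $I(Z;X) = I(Y;X) = I(X;Y)$ by the substitution $Z = Y$ together with the symmetry of mutual information, with no computation required. For the second, $I(Z;Y) = I(Y;Y)$, I would prove $I(Y;Y) = I(X;Y)$. In the discrete setting this is the chain $I(Y;Y) = H(Y) = H(Y) - H(Y\mid X) = I(X;Y)$, where $H(Y\mid X) = 0$ precisely because $Y$ is determined by $X$. For general random variables, where entropies may be infinite, I would instead apply the data processing inequality along the two trivial Markov chains $Y \to X \to Y$ and $X \to Y \to Y$ (both valid, since $Y$ being a deterministic function of $X$ makes the required conditional independences hold vacuously): the first gives $I(Y;Y) \le I(X;Y)$ and the second gives $I(X;Y) \le I(Y;Y)$, hence equality; equivalently, $Y$ is a sufficient statistic of $X$ for itself. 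Combining the two parts, $I(Z;X) = I(Z;Y) = I(X;Y)$, so the MNI point is attained by $Z = Y$ and the condition is indeed sufficient.

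I expect the only subtle point to be the fully general (non-discrete) formulation — arguing $I(Y;Y) = I(X;Y)$ without appealing to possibly-infinite differential entropies — which the data-processing / sufficiency argument above resolves cleanly. Everything else reduces to a one-line substitution plus the symmetry of mutual information, so the proof should be short.
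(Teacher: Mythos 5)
Your proposal is correct and follows essentially the same route as the paper: both take the representation $Z = Y = f(X)$ itself and verify $I(Z;X)=I(Z;Y)=I(X;Y)$, the paper via $I(X;Z)=I(X;Y)=H(Y)=I(Y;Z)$. Your additional data-processing argument for the non-discrete case is a minor refinement of the same idea, not a different approach.
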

\begin{proof}
    Denote $Y=f(X)$ as the deterministic $X\rightarrow Y$ mapping. If the encoder is powerful enough, it can learn to reproduce the deterministic function $f$, i.e. $Z=f(X)=Y$. Thus $I(X;Z)=I(X;Y)=H(Y)=I(Y;Z)$.
\end{proof}

While \citet{fischer2020conditional} identifies this as a necessary condition for attainable MNI, there are scenarios where $X \rightarrow Y$ is not deterministic, yet MNI is still attainable. We provide additional sufficient conditions for these more general cases in Proposition~\ref{prop.mni.yxdeter} and Proposition~\ref{prop.mni.strong}.
\begin{prop}
    For random variables $X$, $Y$, and representation $Z$ derived from $X$ (i.e., the Markov chain $Z \leftarrow X \leftrightarrow Y$ holds), a sufficient condition for MNI to be attainable is that $Y\rightarrow X$ mapping is deterministic.
    \label{prop.mni.yxdeter}
\end{prop}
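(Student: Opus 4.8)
The plan is to prove attainability by exhibiting an explicit representation that lands exactly at the MNI point. The key observation is that a deterministic $Y \to X$ map forces $X$ to be a (deterministic) function of $Y$, say $X = g(Y)$, so that $H(X \mid Y) = 0$ and hence $I(X;Y) = H(X) - H(X\mid Y) = H(X)$. In other words, when $X$ is a coarsening of $Y$, the shared information $I(X;Y)$ is as large as it could possibly be, namely the full entropy of $X$.

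Given this, the natural candidate is the identity encoder $Z = X$. First I would check admissibility: $Z = X$ trivially satisfies the Markov relation $Z \leftarrow X \leftrightarrow Y$, so it lies in the class of representations over which MNI is defined. Then $I(Z;X) = I(X;X) = H(X)$ and $I(Z;Y) = I(X;Y)$. Combining these with the identity $I(X;Y) = H(X)$ from the previous step gives $I(Z;X) = I(Z;Y) = I(X;Y)$, which is exactly the MNI condition. Hence MNI is attainable, witnessed by $Z = X$.

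I expect no genuine obstacle here; the argument is essentially a one-liner, and the only points that warrant a little care are (i) confirming that the deterministic/identity encoder $Z=X$ is a legitimate choice (it is, as noted above), and (ii) working in the regime where the relevant entropies and mutual informations are finite, e.g.\ discrete variables, consistent with the proof of Proposition~\ref{prop.mni.xydeter} which writes $I(X;Y)=H(Y)$. Conceptually, this mirrors Proposition~\ref{prop.mni.xydeter}: there the deterministic direction is $X \to Y$ and the encoder must learn the map $f$ to reach MNI, whereas here the deterministic direction is reversed, $X$ already contains only information that is ``about'' $Y$, and so $X$ itself is automatically both sufficient and minimal, meaning the encoder need do nothing at all.
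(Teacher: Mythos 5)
Your proof is correct, and it reaches the conclusion by a more direct route than the paper. You simply note that a deterministic map $Y\to X$ gives $I(X;Y)=H(X)$, exhibit the identity witness $Z=X$ (which trivially respects the Markov chain $Z\leftarrow X\leftrightarrow Y$), and check the three MNI equalities $I(Z;X)=I(Z;Y)=I(X;Y)=H(X)$ at once. The paper instead first proves a stronger intermediate fact: writing $X=f(Y)$, it shows $p(z\mid y)=p\bigl(z\mid X=f(y)\bigr)$ for \emph{every} admissible $Z$, hence $H(Z\mid Y)=H(Z\mid X)$ and $I(Z;Y)=I(Z;X)$ for all representations satisfying the Markov chain, and only then selects a $Z$ with $I(Z;X)=I(X;Y)=H(X)$ (which again is essentially $Z=X$). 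So your argument buys brevity, it needs no manipulation of conditional distributions or entropies beyond the identity $I(X;Y)=H(X)$, while the paper's argument buys the extra structural insight that under a deterministic $Y\to X$ map no admissible representation can carry excess information about $X$ beyond what it carries about $Y$ (every $Z$ sits on the diagonal of the information plane), of which attainability of MNI is then an immediate corollary. Your caveat about working in the discrete / finite-entropy regime is the right one and matches the paper's own implicit convention (its proof of the companion proposition likewise writes $I(X;Y)=H(Y)$); in a continuous setting the quantities involved would degenerate to infinities, and the paper does not treat that case either.
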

\begin{proof}
Denote $X=f(Y)$ as the deterministic $Y\rightarrow X$ mapping. Then, for any $Z$ with $Z \leftarrow X \leftrightarrow Y$, $$p(z|y) = \int_x p(z|x)p(x|y)dx = \int_x p(z|x)\mathbf{1}_{x=f(y)}dx = p(z|x=f(y))$$

More formally, $p(z|y)=p(z|X=f(y))$, and $$p(Y=y,Z=z)=p(Y=y,X=f(y),Z=z)=p(Y=y)\cdot p(z|X=f(y))$$ 
Thus the conditional entropy can be written as
$$H(Z|Y)=-\int_{Y,Z} p(y,z) \log p(z|y) dy dz=-\int_Y p(y) \int_Z p(z|X=f(y))\log p(z|X=f(y)) dy dz$$
$$H(Z|X)=-\int_X p(x) \int_Z p(z|x)\log p(z|x) dx dz$$
Thus, $H(Z|Y)=H(Z|X)$, i.e. $I(Y;Z)=I(X;Z)$. By selecting $Z$ such that $I(Z;X)=I(X;Y)=H(X)$, we achieve the MNI point.
\end{proof}

\begin{prop}
    For random variable $X,Y$ in $\mathcal{X} \times \mathcal{Y}$ with joint distribution $p(X,Y)$, a sufficient condition for the existence of MNI is:
    
    \noindent In any sub-domain $\mathcal{X}_s \times \mathcal{Y}_s$ of $\mathcal{X} \times \mathcal{Y}$ where $p(X,Y)$ has the full support, i.e. $\forall \ x,y \in \mathcal{X}_s \times \mathcal{Y}_s, p(x,y)>0$, $X$ and $Y$ are independent, i.e. $X \indep Y | \{(X,Y) \in \mathcal{X}_s \times \mathcal{Y}_s\}$.
    \label{prop.mni.strong}
\end{prop}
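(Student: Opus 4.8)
The plan is to exhibit an explicit representation $Z$ of $X$ that attains MNI, i.e. to construct $Z$ with $I(Z;X) = I(Z;Y) = I(X;Y)$. The structural hypothesis decomposes the joint support of $(X,Y)$ into a disjoint union of ``blocks'' $\gX_s \times \gY_s$ on each of which $X$ and $Y$ are conditionally independent. Intuitively, within a block the variables share no information, so \emph{all} the mutual information $I(X;Y)$ must come from knowing \emph{which block} the sample fell into. This suggests the natural candidate: let $S = S(X)$ be the index of the block containing the sample (which is a deterministic function of $X$, since the blocks partition the $X$-marginal support), and set $Z := S$.

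First I would make the block decomposition precise: argue that the hypothesis ``on every maximal full-support sub-rectangle $X \indep Y$'' yields a partition of the support of $p(X,Y)$ into rectangles $\{\gX_s \times \gY_s\}_{s \in \gS}$, so that $\{\gX_s\}$ partitions $\mathrm{supp}(p_X)$ and $\{\gY_s\}$ partitions $\mathrm{supp}(p_Y)$; hence $S(X)$ and $S(Y)$ are well-defined and $S(X) = S(Y)$ almost surely. Next I would verify the Markov chain $Z \leftarrow X \leftrightarrow Y$, which is immediate since $Z = S(X)$ is a function of $X$. Then compute the three mutual informations. For $I(Z;X)$: since $Z$ is a deterministic function of $X$, $I(Z;X) = H(Z) = H(S)$. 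For $I(Z;Y)$: since $S = S(Y)$ a.s., $Z$ is also a deterministic function of $Y$, so $I(Z;Y) = H(Z) = H(S)$. For $I(X;Y)$: use the chain rule $I(X;Y) = I(X;Y,S) = I(X;S) + I(X;Y \mid S) = H(S) + I(X;Y\mid S)$, where $I(X;S) = H(S)$ because $S$ is a function of $X$ (for the first equality, $S$ is a function of $Y$ so conditioning on $(Y,S)$ is the same as conditioning on $Y$). Finally, $I(X;Y \mid S) = \sum_s p(S=s)\, I(X;Y \mid S=s) = 0$ because, conditioned on $S = s$, the law of $(X,Y)$ is supported on $\gX_s \times \gY_s$ and the hypothesis gives $X \indep Y$ there. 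Combining, $I(X;Y) = H(S) = I(Z;X) = I(Z;Y)$, which is exactly MNI.

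The main obstacle is the first, measure-theoretic step: turning the informal phrase ``any sub-domain where $p$ has full support'' into an honest partition into independence blocks, especially for continuous or mixed distributions where ``full support'' and ``maximal rectangle'' need care, and where one must ensure $S(X)$ is a measurable function. I would handle this by defining an equivalence relation on $\mathrm{supp}(p_X)$ (e.g. $x \sim x'$ iff they lie in a common full-support rectangle, then take the transitive closure) and checking the hypothesis forces the resulting classes $\gX_s$ to pair off cleanly with classes $\gY_s$ on the $Y$-side; once the partition is in hand, every step above is a routine application of the chain rule and the functional-determinism identity $I(f(X);X) = H(f(X))$. I would also remark that one should restrict attention to the support throughout (sets of measure zero are harmless), and that if the partition is trivial (a single block) then $I(X;Y) = 0$ and MNI holds trivially with $Z$ constant, which is a reassuring sanity check.
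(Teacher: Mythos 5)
Your construction of $Z$ as the block index, and the information computations that follow from it, are fine \emph{granted} the block decomposition; the genuine gap is exactly the step you flag as "the main obstacle," and under the literal hypothesis that step is not just hard but false. Concretely, let $\mathcal{X}=\mathcal{Y}=\{1,2,3\}$ and put mass $1/6$ on each of $(1,1),(1,2),(2,2),(2,3),(3,3),(3,1)$ (a six-cycle support). Every rectangle $\mathcal{X}_s\times\mathcal{Y}_s$ on which $p>0$ everywhere is a single row or a single column (a $2\times2$ full-support rectangle would be a $4$-cycle inside the $6$-cycle, which does not exist), and conditioned on such a rectangle one of the two variables is almost surely constant, so the required independence holds vacuously; hence the hypothesis is satisfied. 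But your transitive-closure relation merges everything into one class, which is not a product set, and conditioned on it $I(X;Y)=\log(3/2)>0$ -- so the classes do not "pair off cleanly," and even your sanity check (a single class forces $I(X;Y)=0$) fails. Worse, MNI is genuinely unattainable here: $I(Z;Y)=I(X;Y)$ with $Z\text{--}X\text{--}Y$ forces $X\indep Y\mid Z$, and since the three conditionals $p(Y|X=x)$ are pairwise distinct, each $z$ pins down $x$, giving $I(Z;X)=H(X)=\log 3>\log(3/2)=I(X;Y)$. So no argument can recover the block structure from the hypothesis as literally stated; it has to be built into the hypothesis (the support is a disjoint union of full-support rectangles, on each of which independence holds) rather than derived from it.

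For comparison, the paper takes a different construction: $Z=p(Y|X)$, the conditional-distribution statistic, for which $I(Z;Y)=I(X;Y)$ holds automatically, and the hypothesis is invoked only to show $H(Z|Y)=0$, i.e.\ that $p(Y|X=x)$ is determined by any $y$ in its support. That is the same structural fact your partition needs (two $x$'s sharing a $y$ must have identical conditional laws, which then yields the blocks), and the paper obtains it by applying the rectangle condition to single-column rectangles $\{x:p(x|y)>0\}\times\{y\}$, where the conditional independence is vacuous -- the same six-cycle example (with non-uniform masses so that $p(Y{=}1|X{=}1)\neq p(Y{=}1|X{=}3)$) shows that inference also does not go through. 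So your route (block index) and the paper's (sufficient statistic) differ in the choice of $Z$, but both hinge on the same decomposition at the same point; if the blockwise reading of the hypothesis is adopted, your argument is complete and arguably the more transparent of the two.
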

\begin{proof}
    Construct the learned representation $Z$ as the conditional probability of $Y$ given $X$, i.e. $Z=p(Y|X)$. $Z$ is a deterministic function based o $X$. We will show that such $Z$ satisfies the MNI condition.

    \noindent First, we show that $I(Z;Y)=I(X;Y)$. Denote $z=p(Y|X=x)$. Since $z$ fully describes the conditional distribution $p(Y|X=x)$, we have $p(Y|X=x)=p(Y|Z=z)$. Thus, $H(Y|X=x) =  H(Y|Z=z)$. Therefore, 
    \begin{align}
        H(Y|X)&=\int p(x) H(Y|X=x) dx = \int p(x) H(Y|Z=z) dx\nonumber \\& = \int \left(\int_{\{x: p(Y|X=x)=z\}} p(x) dx\right) H(Y|Z=z) dz = \int p(z) H(Y|Z=z) dz=H(Y|Z) \nonumber
    \end{align}
    Thus $I(Z;Y)=H(Y)-H(Y|Z)=H(Y)-H(Y|X)=I(X;Y)$.

    \noindent Second, we show that $I(Z;X)=I(Z;Y)$. Since $I(Z;X)=H(Z)-H(Z|X)$, $I(Z;Y)=H(Z)-H(Z|Y)$, and $H(Z|X)=0$, $I(Z;X)=I(Z;Y)$ is equivalent to $H(Z|Y)=0$, i.e. $Z=p(Y|X)$ is determined by the value of $Y$.

    \noindent The condition is $\forall \ \mathcal{X}_s \times \mathcal{Y}_s, X \indep Y | \{(X,Y) \in \mathcal{X}_s \times \mathcal{Y}_s\}$, which indicates that 
    $$\forall \ \mathcal{X}_s \times \mathcal{Y}_s, \forall \ x,y \in \mathcal{X}_s \times \mathcal{Y}_s, \frac{p(y|x)}{p(\mathcal{Y}_s|\mathcal{X}_s)}=\frac{p(y)}{p(\mathcal{Y}_s)}$$
    For any $y \in \mathcal{Y}$, choose $\mathcal{Y}_s=\{y\}$, and $\mathcal{X}_s = \{x: p(x|y)>0\}$, we have $p(y|x)=\frac{p(\mathcal{Y}_s|\mathcal{X}_s)}{p(\mathcal{Y}_s)}p(y), \forall \ x \in \mathcal{X}_s$. Thus, $\forall \ x_1, x_2$ sampled from $p(X|Y=y)$, we have $p(y|x1)=p(y|x2)$. Therefore, $p(Y=y|X)$ is fully determined given the value of $y$ and is irrelevant to the value of $X$. Then we show that $I(Z;X)=I(Z;Y)$.

    \noindent It is easy to see that deterministic mapping between $X$ and $Y$ is a special case of this condition.
    
\end{proof}

Regarding the sufficient conditions for MNI being unattainable, as outlined in Lemma 6 of \citet{gilad2003information}, when variables $X,Y$ have full support, i.e. $p(x,y)>0, \ \forall \ x,y$, MNI is unattainable.

\section{Differentiability of the IB Curve}
\label{app.diff}
\begin{definition}
    Let $V$ and $W$ be Banach spaces, $\Omega$ an open set in $V$ and $F$ a function that maps $\Omega$ into $W$. Then the Gateaux derivative of $F$ at $x \in \Omega$ in the direction $h \in V$ is defined as
    \begin{equation}
        dF(x;h) = \left. \lim_{\epsilon \rightarrow 0} \frac{F(x+\epsilon h) - F(x)}{\epsilon} = \frac{d}{d\epsilon} F(x+\epsilon h) \right|_{\epsilon=0}\nonumber
    \end{equation}
    provided that this limit exists for all $h \in V$
\end{definition}

In this section, we use $X$ and $Y$ to represent the two modalities, rather than $X^1$ and $X^2$ in other sections.
In our setting, the domain is the Banach space corresponding to the product space representing the set of pairs of joint probability distributions i.e. $\mathcal{Z} \times \mathcal{Y}$. The functional $F = I(Z;Y): \Omega \rightarrow \mathbb{R}$, where $\Omega$ is the product space representing the set of pairs of joint probability distributions for which this mutual information is defined. Following the Markov structure in our graphical model, we have
\begin{align}
    I(Z;Y) &= \int_{y,z} p(z,y) \log \frac{p(z,y)}{p(z) p(y)} \nonumber\\
    &= \int_{x,y,z} p(z|x,y)p(x|y) \log \frac{\int_x p(z|x,y)p(x|y)}{\int_x p(z|x)p(x)} \nonumber\\
    &= \int_{x,y,z} p(z|x)p(x|y) \log \frac{\int_x p(z|x)p(x|y)}{\int_x p(z|x)p(x)} \nonumber\\
    &= F(f) \text{where $f=p(z|x)$}\nonumber
\end{align}

\begin{prop}
    If the MNI point exists, then the Gateaux derivative of $I(Y;Z)$ with respect to $p(z|x)$ (as used in~\citep{pan2021disentangled}) doesn't exist at the MNI point. 
\end{prop}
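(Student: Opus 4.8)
The plan is to show that at the MNI point the functional $F(f)=I(Z;Y)$ (with $f=p(z|x)$, as set up in Appendix~\ref{app.diff}) attains its \emph{global maximum} over all feasible $f$, and then to argue it fails to be Gateaux‑differentiable there because it reaches this maximum with a corner rather than smoothly. The global‑maximum claim is immediate from the data processing inequality along $Z\leftarrow X\leftrightarrow Y$: for every feasible $f$ we have $I(Z;Y)\le I(X;Y)$, while at the MNI point $f^*$ the definition gives $I(Z^*;Y)=I(X;Y)$; hence $F(f)\le F(f^*)$ for all $f$. A second fact, used in the alternative route below, is that taking the optimal $Z^*$ deterministic in $X$ we get $H(Z^*)=I(Z^*;X)=I(X;Y)=I(Z^*;Y)$, so $H(Z^*\,|\,Y)=0$, i.e. $Z^*$ is a deterministic function of $Y$.

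First I would exhibit a direction along which $F$ has a corner. Since at MNI $I(Z^*;X)=I(X;Y)>0$ (the nondegenerate case), $f^*$ is not a minimizer of the convex functional $f\mapsto I(Z;X)$, so there is a feasible direction $h$ along which $g(\epsilon):=I(Z_\epsilon;X)$ (with $Z_\epsilon$ the representation of $f^*+\epsilon h$) has strictly negative right derivative; after replacing $h$ by $-h$ if needed, write $g(\epsilon)=I(X;Y)+a\epsilon+o(\epsilon)$ with $a\neq 0$. By the data processing inequality, $F(f^*+\epsilon h)=I(Z_\epsilon;Y)\le \min\bigl(g(\epsilon),\,I(X;Y)\bigr)$. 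On the side of $\epsilon=0$ where $g(\epsilon)<I(X;Y)$ this forces $F(f^*+\epsilon h)-F(f^*)\le a\epsilon+o(\epsilon)$, so the difference quotient $\epsilon^{-1}\bigl(F(f^*+\epsilon h)-F(f^*)\bigr)$ stays bounded away from $0$ with sign opposite to $\epsilon$; on the other side the global‑maximum property gives $F(f^*+\epsilon h)-F(f^*)\le 0$, hence the difference quotient there has the opposite sign again. The one‑sided limits of the difference quotient cannot agree, so $dF(f^*;h)$ does not exist and $F$ is not Gateaux‑differentiable at the MNI point. The explicit IB‑curve formula $F_{IB}(r)=\min\bigl(r,I(X;Y)\bigr)$ recorded in Appendix~\ref{app.ibprop} makes this transparent: the MNI point sits exactly at the kink $r=I(X;Y)$, where the slope jumps from $1$ to $0$.

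An alternative, more computational route goes through the candidate formula for the derivative. A first‑variation computation starting from the expression for $F(f)$ in Appendix~\ref{app.diff} gives, for an admissible direction $h$ with $\int h(z|x)\,dz=0$,
\begin{equation}
dF(f^*;h)=\int p(x)\,h(z|x)\Big[\textstyle\sum_{y} p(y|x)\log p(z|y)-\log p(z)\Big]\,dx\,dz. \nonumber
\end{equation}
At MNI we showed $p(z|y)\in\{0,1\}$, so $\log p(z|y)=-\infty$ whenever $z$ is not the value of $Z^*$ determined by $Y$; choosing $h$ so that it moves mass of $p(\cdot|x)$ onto such $z$'s, with a single sign there, makes the bracketed term equal to $-\infty$ on a set of positive measure, so $dF(f^*;h)$ is not finite. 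Either way, the derivative fails to exist.

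The main obstacle is the boundary/degeneracy of $f^*$: because the MNI representation is deterministic, $f^*$ lies on the boundary of the set of conditional distributions, so only one sign of $\epsilon$ may be admissible, and one must ensure the blow‑up (of $-I(Z_\epsilon;Y)$ in the first route, or of the bracketed integrand in the second) is genuinely one‑signed rather than an indeterminate $\infty-\infty$. This is handled by choosing the support of $h$ appropriately — e.g. mixing the deterministic $f^*$ with a noise conditional, for which $I(Z_\epsilon;X)$ drops at the super‑linear rate $\Theta(\epsilon\log(1/\epsilon))$ — and by noting that a one‑sided derivative equal to $-\infty$, or disagreeing one‑sided limits, already violates the existence requirement in the definition of the Gateaux derivative.
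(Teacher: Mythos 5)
Your proof is correct, but it reaches the conclusion by a mechanism that differs from the paper's own argument in Appendix~\ref{app.diff}. The paper works directly with the piecewise-linear shape of the IB curve at the MNI kink: it picks a direction $h$ for which one sign of $\epsilon$ moves along the flat branch $I(Z;Y)=H(Y)$ (so $d_+F=0$) while the other sign moves along the diagonal branch $I(Z;Y)=I(Z;X)$, where a first-variation computation yields a nonzero one-sided derivative; the two finite one-sided limits disagree. You instead combine the data-processing inequality (so $F$ is globally maximized at the MNI point) with a concrete feasible perturbation --- mixing the deterministic MNI channel with a noise conditional --- for which $I(Z_\epsilon;X)$, and hence by DPI $F(f^*+\epsilon h)$, drops at rate $\Theta(\epsilon\log(1/\epsilon))$, so the one-sided difference quotient diverges to $-\infty$ and the defining limit cannot exist, without ever computing a derivative on the diagonal branch. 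Your route buys two things: it sidesteps the feasibility question the paper silently assumes (that the MNI point, which lies on the boundary of the simplex since $Z^*$ determines $Y$, can be perturbed linearly in both directions so as to realize both branches), and it is robust to the fact that the descent of $I(Z_\epsilon;X)$ need not be linear --- indeed the step ``write $g(\epsilon)=I(X;Y)+a\epsilon+o(\epsilon)$ with $a\neq 0$'' in your main route is not literally available for the mixing direction (the drop is superlinear), but your fallback correctly observes that a quotient bounded above by something tending to $-\infty$, or sign-disagreeing one-sided quotients, already rules out existence. The paper's route, in exchange, exhibits the geometric source of the failure (the slope jump from $1$ to $0$ at the kink) and produces finite, interpretable one-sided derivatives, at the cost of glossing over the boundary and $\log 0$ issues you treat explicitly. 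Your second, computational route --- the first-variation formula with $\log p(z|y)=-\infty$ off the support of the deterministic $Y$--$Z^*$ coupling --- is essentially the paper's $d_-F$ calculation repurposed to exhibit a blow-up rather than a finite value, and is sound provided it is anchored, as you do, to the rigorous quotient bound of your first route.
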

\begin{proof}
    Denote the MNI point between $X$ and $Y$ as $p$ where $I(X;Y)=I(Z;Y)=I(Z;X)$. When the MNI point exists, the IB curve can be represented as \citep{rodriguez2020convex, kolchinsky2018caveats}: 
    $$I(Z;Y)=\left\{
    \begin{aligned}
        I(Z;X), \ & \text{if}\ I(Z;X) \leq H(Y)\nonumber \\
        H(Y), \ & \text{if}\ H(Y) < I(Z;X) \leq  H(X)
    \end{aligned}
    \right.$$ or equivalently 
    $$p(z|y)=\left\{
    \begin{aligned}
        p(z|x), \ & \text{if}\ I(Z;X) \leq H(Y) \nonumber\\
        \mathbf{1}_{y=g(z)}, \  & \text{if}\ H(Y) < I(Z;X) \leq  H(X)
    \end{aligned}
    \right.$$
    
    Based on this characterization, we know that there exist a direction $h$ such that on perturbing $p(z|x)$ along this directions, we either have $I(Z;Y)=I(Z;X)<H(Y)$ or $I(Z;Y) = I(X;Y) = H(Y) < I(Z;X)$. Without loss of generality, assume that perturbing by $\epsilon h$, where $\epsilon <0$ corresponds to the former case and $\epsilon h$, where $\epsilon>0$ to the latter. Consider the right directional derivative of I(Y:Z) at $p$ in the direction $h$, we have
    \begin{align}
        d_+F(p;h) &= \lim_{\epsilon \rightarrow 0^+} \frac{F(p+\epsilon h) - F(p)}{\epsilon} = 0 \nonumber
    \end{align}
    Since we have $p(z|y)=p(z|x)$ when $p$ is perturbed in the direction by $\epsilon h$, where $\epsilon <0$, we can consider $I(Z;Y)$ as an identity function of $I(Z;X)$. In such a case, $F(p+\epsilon h_1) = H(Y)+\epsilon h$. As a consequence, perturbing $p(z|x)$ to get $p'$ is equivalent to the resulting perturbation in $p(z|y)$ also being $p'$. As a result, we have
    \begin{align}
        d_-F(p;h) &= \lim_{\epsilon \rightarrow 0^-} \frac{F(p') - F(p)}{\epsilon} \nonumber\\
        &= \lim_{\epsilon \rightarrow 0^-} \frac{F(p(z|y) = p') - F(p)}{\epsilon} \nonumber\\
        &= \left. \frac{d}{d\epsilon}F(p + \epsilon h) \right|_{\epsilon=0}\nonumber \\
        &= \left.  \frac{d}{d\epsilon}\int_{z,y} (p(z|y) + \epsilon h)p(y) \log \frac{p(z|y) + \epsilon h}{\int_y p(z|y)p(y)} \right|_{\epsilon=0} \nonumber\\
        &= \left. \int_{z,y} hp(y)\log \frac{p(z|y) + \epsilon h}{\int_y p(z|y)p(y)} + (p(z|y) + \epsilon h)p(y)\Big( \frac{h}{p(z|y) + \epsilon h} - \frac{1}{p(z)}\Big) \right|_{\epsilon=0} \nonumber\\
        &= h \neq 0\nonumber
    \end{align}
    
    Clearly, the left and right limits exist but aren't equal, hence proving the non-differentiability. 
    
\end{proof}

\section{Tractable Training Objectives}
\label{app.trainobj}
Four terms are involved in \algo, including maximizing the mutual information term $I(Z^1;X^2)$ and minimizing the conditional mutual information term $I(Z^1;X^1|X^2)$ for the shared representations in step 1, as well as maximizing the joint mutual information term $I(Z^1,\hat{Z}_c^{2*};X^1)$ and minimizing the mutual information term $I(Z^1;\hat{Z}_c^{1*})$ for the modality-specific representations in step 2.
We introduce detailed formulations of the tractable training objectives for each of the four terms in this section.

For the inferred shared representations, we model the distributions $\hat{Z}^1_c \sim p(\cdot|X^1)$ and $\hat{Z}^2_c \sim p(\cdot|X^2)$ with neural network encoders. Following the common practice in \citet{radford2021learning}, we use the InfoNCE objective~\citep{oord2018representation} as an estimation of the mutual information term $I(Z^1;X^2)$ in $L_c^1$.
\begin{equation}
    L_{\text{InfoNCE}}^c=\mathbb{E}_{z^1,z^{2+},\{z_i^{2-}\}_{i=1}^N} \left[ -\log \frac{\exp({z^1}^\top z^{2+}/\tau)}{\exp({z^1}^\top z^{2+}/\tau) + \sum_{i=1}^N \exp({z^1}^\top z^{2-}_i/\tau)} \right]\nonumber
\end{equation}
where $\tau$ is the temperature hyperparameter, $z^{2+}$ is the representation of the positive sample corresponding to the joint distribution $p(X^1,X^2)$, and $\{{z_i^{2-}\}_{i=1}^N}$ are representations of $N$ negative samples from the marginal distribution $p(X^2)$.

We implement the conditional mutual information term $I(Z^1;X^1|X^2)$ in $L_c^1$ using an upper bound developed in \citet{federici2019learning}, i.e. $I(Z^1;X^1|X^2) \leq D_{\text{KL}}(p(Z^1|X^1)||p(Z^2|X^2))$. While the conditional distributions of representations are modeled as the Gaussian distribution in \citet{federici2019learning}, we instead use the von Mises-Fisher (vMF) distribution for $p(Z^1|X^1)$ and $p(Z^2|X^2)$ to better align with the InfoNCE objective where the representations lie on the sphere space. Specifically, $\hat{Z}^1_c \sim \text{vMF}(\mu(X^1),\kappa), \hat{Z}^2_c \sim \text{vMF}(\mu(X^2),\kappa)$ where $\kappa$ is a hyperparameter controlling for the uncertainty of the representations. Leveraging the formulation of the KL divergence between two vMF distributions, the training objective of $L_c^1$ is to maximize:
\begin{equation}
    L_c^1 = -L_{\text{InfoNCE}}^c + \beta \cdot \mathbb{E}_{x^1,x^2} \left[\mu(x^1)^\top \mu(x^2)\right]\nonumber
\end{equation}

Note that this objective establish connections with the alignment versus uniformity framework discussed in \citet{wang2020understanding}, where the conditional information bottleneck constraint corresponds to a higher weight on the alignment term, in contrast to the uniformity term.

The inferred modality-specific representations are encoded as functions
$\hat{Z}^1_s \sim p(\cdot|X^1,\hat{Z}^1_c)$ and $\hat{Z}^2_s\sim p(\cdot|X^2,\hat{Z}^2_c)$ with deterministic encoders,
that takes both data observations and the shared representations learned in step 1 as input to account for the dependence structure illustrated in Figure \ref{fig:gmodel}. The term $I(Z^1,\hat{Z}_c^{2*};X^1)$ in $L_s^1$ is optimized with the InfoNCE loss, where random augmentations of the data $X^1$ form the two views. Denote the concatenation of $Z^1$ and its corresponding $\hat{Z}_c^{2*}$ as $\Tilde{Z}^1$, the InfoNCE objective has the following formula:
\begin{equation}
    L_{\text{InfoNCE}}^s = \mathbb{E}_{\Tilde{z}^1,\Tilde{z}^{1+},\{\Tilde{z}_i^{1-}\}_{i=1}^N} \left[ -\log \frac{\exp(\Tilde{z}^{1\top} \Tilde{z}^{1+}/\tau)}{\exp(\Tilde{z}^{1\top} \Tilde{z}^{1+}/\tau) + \sum_{i=1}^N \exp(\Tilde{z}^{1\top} \Tilde{z}^{1-}_i/\tau)} \right]
\end{equation}
where $\Tilde{z}^{1+}$ is the representation of the positive sample corresponding to the augmented view of $X^1$, and $\{{z_i^{1-}\}_{i=1}^N}$ are representations of $N$ negative samples from the marginal distribution $p(X^1)$.

For the mutual information term between representations, $I(Z^1;\hat{Z}_c^{1*})$ in $L_s^1$, we implement it as an orthogonal loss to encourage the marginal independence between the shared and modality-specific representations, where the marginal distribution is approximated across a training batch.
\begin{equation}
    L_\text{orthogonal} = ||{[Z^1_i]_{i=1}^B}^\top \cdot [\hat{Z}^{1*}_{ci}]_{i=1}^B||_{F}
\end{equation}
where $B$ is the batch size, $[Z^1_i]_{i=1}^B$ and $[\hat{Z}^{1*}_{ci}]_{i=1}^B$ are the concatenations of all the representations in a mini-batch, and $||\cdot||_F$ is the Frobenius norm of the pairwise cosine similarities between each latent dimensions.
The training objective of $L_s^1$ in step 2 is to maximize:
\begin{equation}
    L_s^1 = -L_{\text{InfoNCE}}^s - \lambda \cdot L_\text{orthogonal}
\end{equation}

\section{Experimental Details and Additional Results}
\label{app.exp}
To evaluate the efficacy of our proposed \algo, we conduct a simulation study and two real-world multimodal experiments to address the following key questions:

\begin{itemize}[leftmargin=*]
    \item How do $\beta$ and $\lambda$ affect \algo's learned representations compared to baselines in a controlled simulation setting with known ground truth?
    \item Does \algo achieve effective coverage of multimodal information in downstream prediction tasks from MultiBench~\citep{liang2021multibench}, using a combination of shared and modality-specific representations?
    \item How do \algo's representations perform in high-content drug screening datasets \citep{subramanian2017next, cuccarese2020functional}, assessed by molecule-phenotype retrieval (for the shared) and a disentanglement measurement (for modality-specific)?
\end{itemize}

Each experiment was conducted on 1 NVIDIA RTX A5000 GPU, each with 24GB of accelerator RAM. All experiments were implemented using the PyTorch deep learning framework.

\subsection{Simulation Study.}
\label{app.results.syn}
\textbf{Experimental details.} We generate synthetic data $X^1$ and $X^2$ based on the graphical model in Figure \ref{fig:gmodel}, with the dimensionality of 100 and dataset size being 90,000. To be specific, we sample 50-dimensional true latents {\small$Z_s^1$}, {\small$Z_s^2$}, and {\small$Z_c$} independently from {\small$\mathcal{N}(\mathbf{0}_{50}, 0.5 \times \mathbf{I}_{50})$}. Then we sample the transformation weights {\small$T_1$} and {\small$T_2$} from uniform distribution $\text{Uniform}(-1,1)$, and generate {\small$X^1 = T_1 \cdot [Z_s^1, Z_c]$} and {\small$X^2 = T_2 \cdot [Z_s^2, Z_c]$}. We randomly split $80\%$ data into the training set and the rest into the test set. To simulate unattainable MNI, we add Gaussian noise and random dropout during training to ensure the distribution has full support. We use a 3-layer multi-layer perceptron (MLP) with a hidden dimension of 512 as encoders for all methods. For DMVAE, we employ MLPs with the same architecture as decoders. We report the average linear probing accuracy on the test set over 3 random seeds.

We run all the methods on the synthetic data with combinations of different hyperparameter values. For \algo, we use $\beta \in \{0.0, 0.001, 0.01, 0.1, 0.5, 1.0, 5.0, 10.0, 50.0, 100.0, 300.0, \\500.0, 1000.0\}$ and $\lambda \in \{0.0, 0.001, 0.01, 0.1, 1.0, 10.0, 100.0\}$. 

For JointOpt, hyperparameter $a$ controls the joint mutual information terms {\small$I(\hat{Z}_s^1,\hat{Z}_c^2;X^1)$} and {\small$I(\hat{Z}_s^2,\hat{Z}_c^1;X^2)$}, and $\lambda$ adjusts the mutual information term between representations, i.e. 
{\small\begin{align}
    \hat{Z}_c^{1*}, \hat{Z}_c^{2*}, \hat{Z}_s^{1*}, \hat{Z}_s^{2*}&=
    \argmax_{\hat{Z}_c^{1}, \hat{Z}_c^{2}, \hat{Z}_s^{1}, \hat{Z}_s^{2}} I(\hat{Z}_c^1;X^2)+I(\hat{Z}_c^2;X^1)+ a \cdot (I(\hat{Z}_s^1,\hat{Z}_c^2;X^1)+I(\hat{Z}_s^2,\hat{Z}_c^1;X^2)) \nonumber\\&- \lambda \cdot (I(Z_c^1;Z_s^1)+I(Z_c^2;Z_s^2))\nonumber
\end{align}}

We use $a \in \{0.01, 0.1, 1.0, 10.0, 100.0, 1000.0\}$ and $\lambda \in \{0.0, 0.001, 0.01, 0.1, 1.0, 10.0, 100.0\}$. 

For FOCAL, we tune the hyperparameters {\small$a$} and {\small$\lambda$}, defined similarly to JointOpt, where $a$ controls the terms {\small$I(\hat{Z}_s^1;X^1)$} and {\small$I(\hat{Z}_s^2;X^2)$} and {\small$\lambda$} adjusts the orthogonal loss between shared and specific representations. We use the same set of {\small$a$} and {\small$\lambda$} as that in JointOpt.

For DMVAE, we tune {\small$\lambda$} which denotes the weight of the KL divergence term in contrast to the reconstruction loss. We use $\lambda \in \{10^{-7}, 10^{-6}, 10^{-5}, 10^{-4}, 10^{-3}, 10^{-2}\}$.

\rebuttal{For SimMMDG, we tune the hyperparameters {\small$a$} and {\small$\lambda$}, where $a$ controls the weight of the cross-modal translation loss and {\small$\lambda$} adjusts the distance loss between shared and specific representations. We use the same set of {\small$a$} and {\small$\lambda$} as that in JointOpt.}

\textbf{Additional results.} We provide a comparison of the performance of the shared representation with baseline methods in Figure~\ref{fig:app.shared}. Denote $\hat{Z}_c$ as the concatenation of the learned shared representations of $X^1$ and $X^2$, i.e. $\hat{Z}_c=[\hat{Z}_c^1,\hat{Z}_c^2]$. 
The ideal \(\hat{Z}_c\) should maximize the shared information between $X^1$ and $X^2$, shown by high accuracy on \(Y_c\), while minimizing unique information of \(X^1\) and $X^2$, indicated by low accuracy on \(Y_s^1\) and $Y_s^2$. Therefore, a top-left point is preferred in Figure \ref{fig:app.shared}. 

As shown in Figure \ref{fig:app.shared}, \algo consistently outperforms other methods across various hyperparameter settings, showcasing its ability to effectively capture shared information. The only exception occurs in Figure~\ref{fig:app.shared.1} when $\beta$ is set very high and the accuracy on $Y_s^1$ to drop to around 0.50 (equivalent to random guessing, indicating no information of $\hat{Z}_c$ on modality-specific features). In this scenario, DMVAE performs better. This happens because high values of $\beta$ cause the decoder-free contrastive objectives to collapse, with most representations converging to nearly the same point, a commonly-known issue in contrastive self-supervised learning~\citep{oord2018representation}.

\begin{figure*}[ht]
    \centering
    \begin{subfigure}[b]{0.49\textwidth}
        \includegraphics[width=\textwidth]{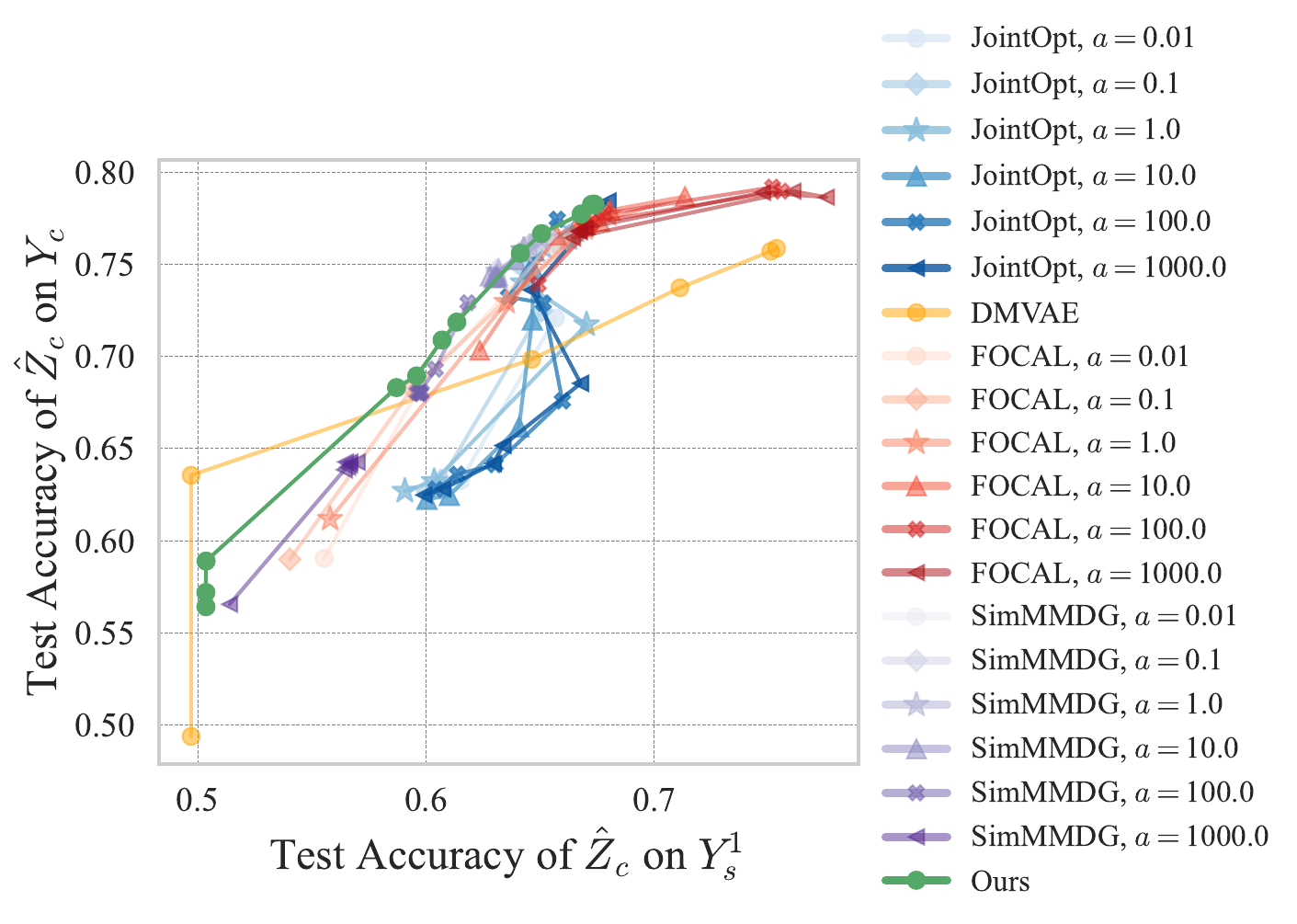}
        \caption{Test accuracy of $\hat{Z}_c$ on $Y_c$ versus $Y_s^1$.}
        \label{fig:app.shared.1}
    \end{subfigure}
    \hfill 
    \begin{subfigure}[b]{0.49\textwidth}
        \includegraphics[width=\textwidth]{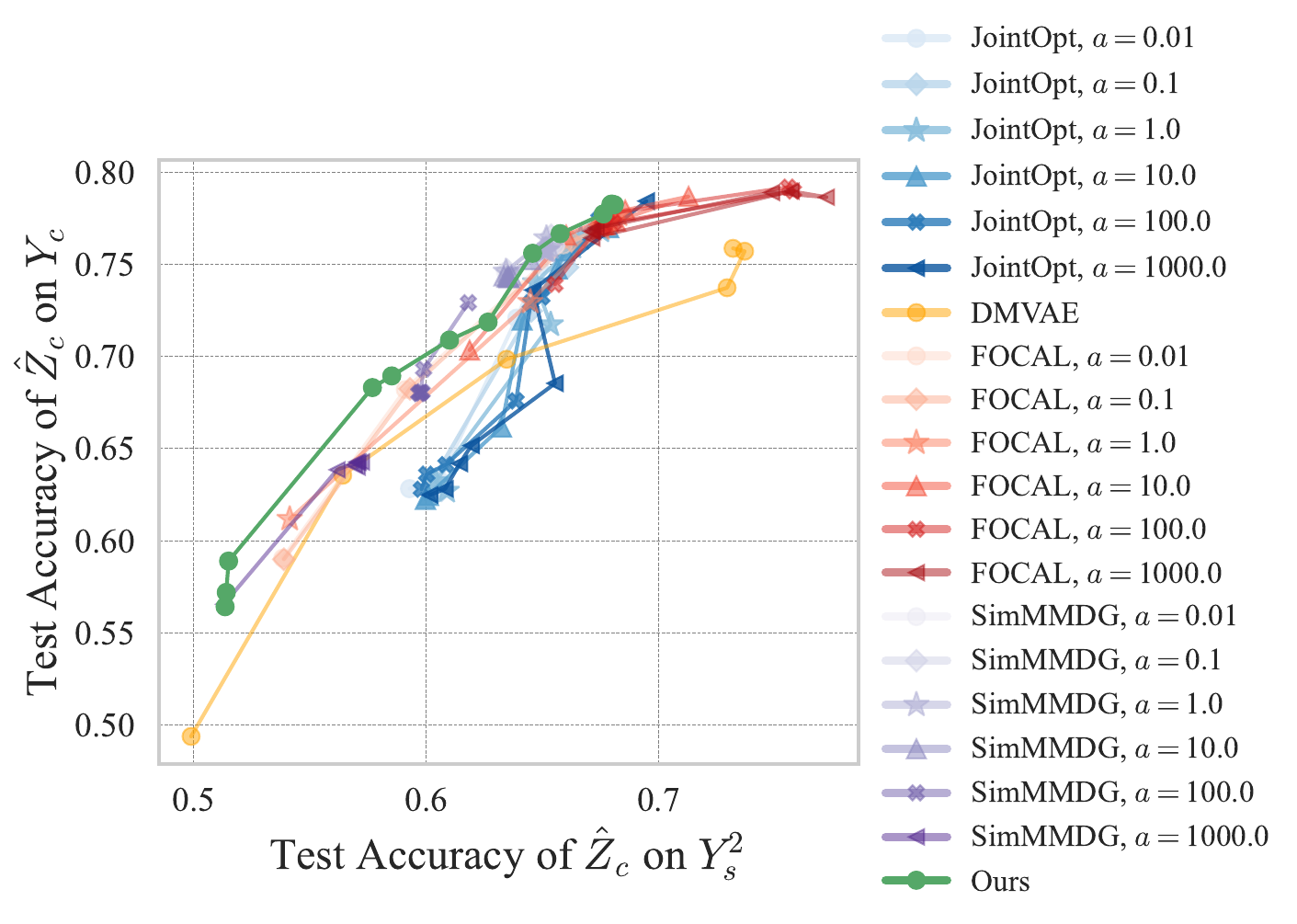}
        \centering
        \caption{Test accuracy of $\hat{Z}_c$ on $Y_c$ versus $Y_s^2$.}
        \label{fig:app.shared.2}
    \end{subfigure}
    \caption{\rebuttal{Performance of shared representation $\hat{Z}_c$ for different models.}}
    \label{fig:app.shared}
\end{figure*}

For the modality-specific representations, as a supplement to Figure~\ref{fig:3}, we provide the complete results for both representations $\hat{Z}_s^1$ and $\hat{Z}_s^2$ on a full set of hyperparameters in Figure~\ref{fig:app.specific}. \algo outperforms all other methods across various hyperparameter settings.

\begin{figure*}[ht]
    \centering
    \begin{subfigure}[b]{0.45\textwidth}
        \includegraphics[width=\textwidth]{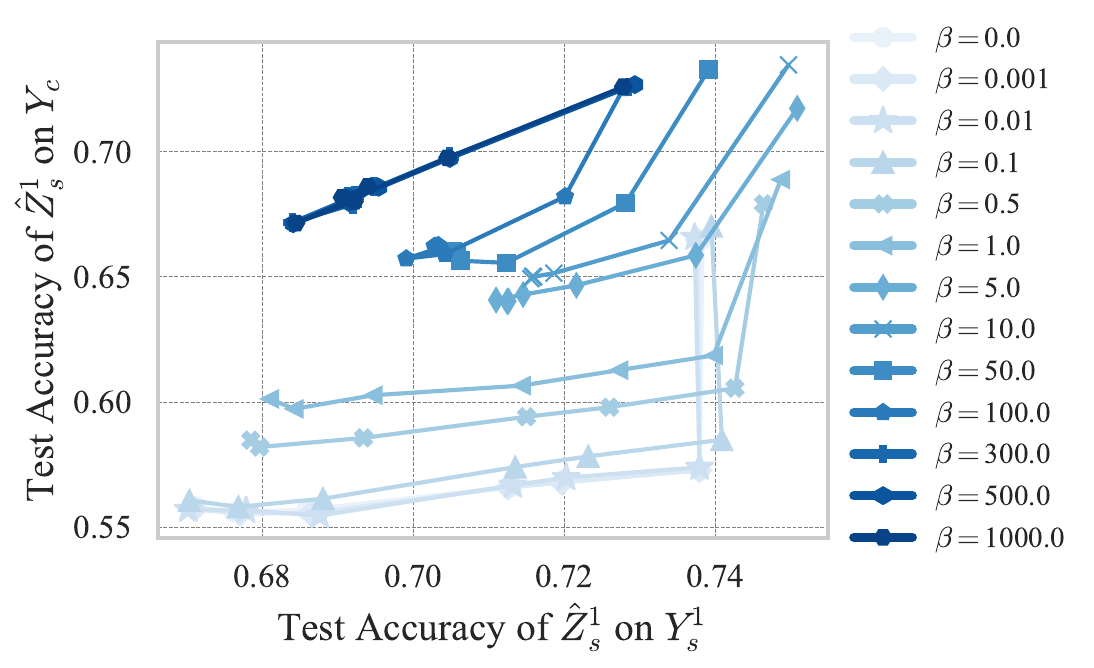}
        \caption{Performance of $\hat{Z}_s^1$ for \algo across varying values of $\beta$.}
        \label{fig:app.specific.1}
    \end{subfigure}
    \hfill 
    \begin{subfigure}[b]{0.54\textwidth}
        \includegraphics[width=\textwidth]{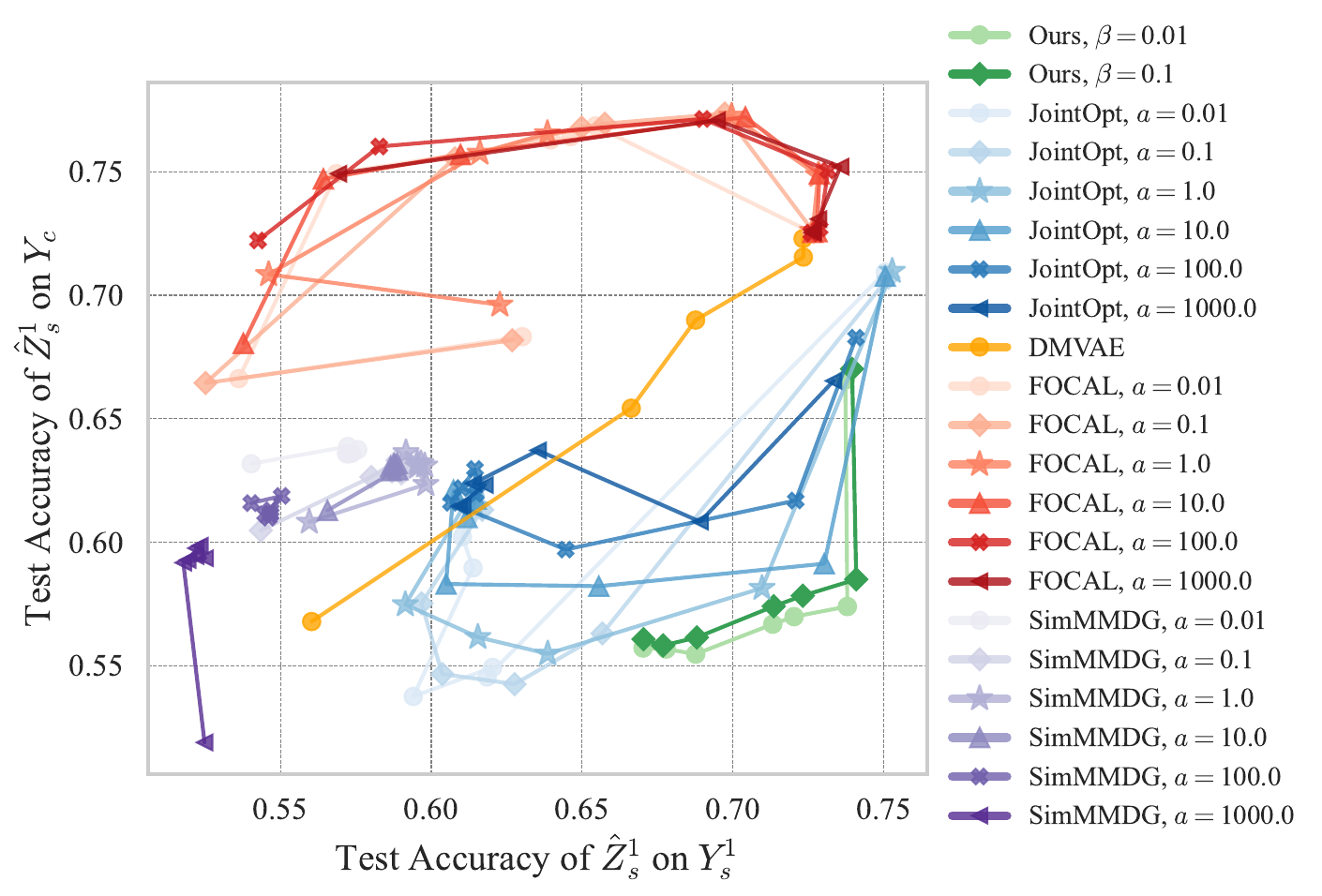}
        \centering
        \caption{\rebuttal{$\hat{Z}_s^1$ performance for \algo and baselines.}}
        \label{fig:app.specific.2}
    \end{subfigure}
        \begin{subfigure}[b]{0.45\textwidth}
        \includegraphics[width=\textwidth]{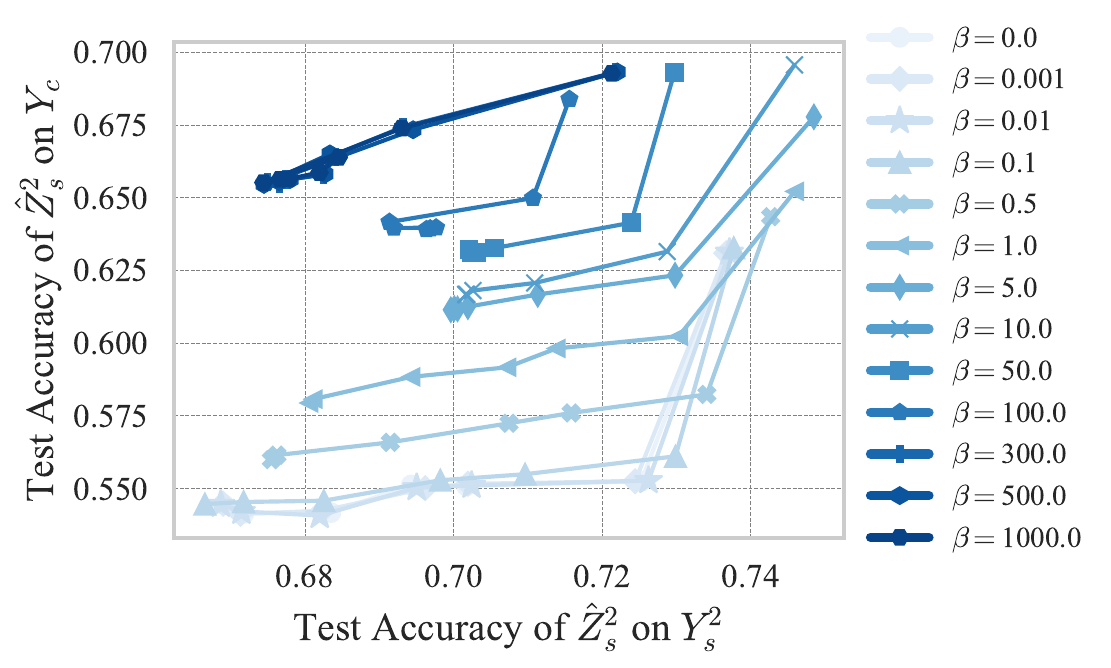}
        \caption{Performance of $\hat{Z}_s^2$ for \algo across varying values of $\beta$.}
        \label{fig:app.specific.3}
    \end{subfigure}
    \hfill 
    \begin{subfigure}[b]{0.54\textwidth}
        \includegraphics[width=\textwidth]{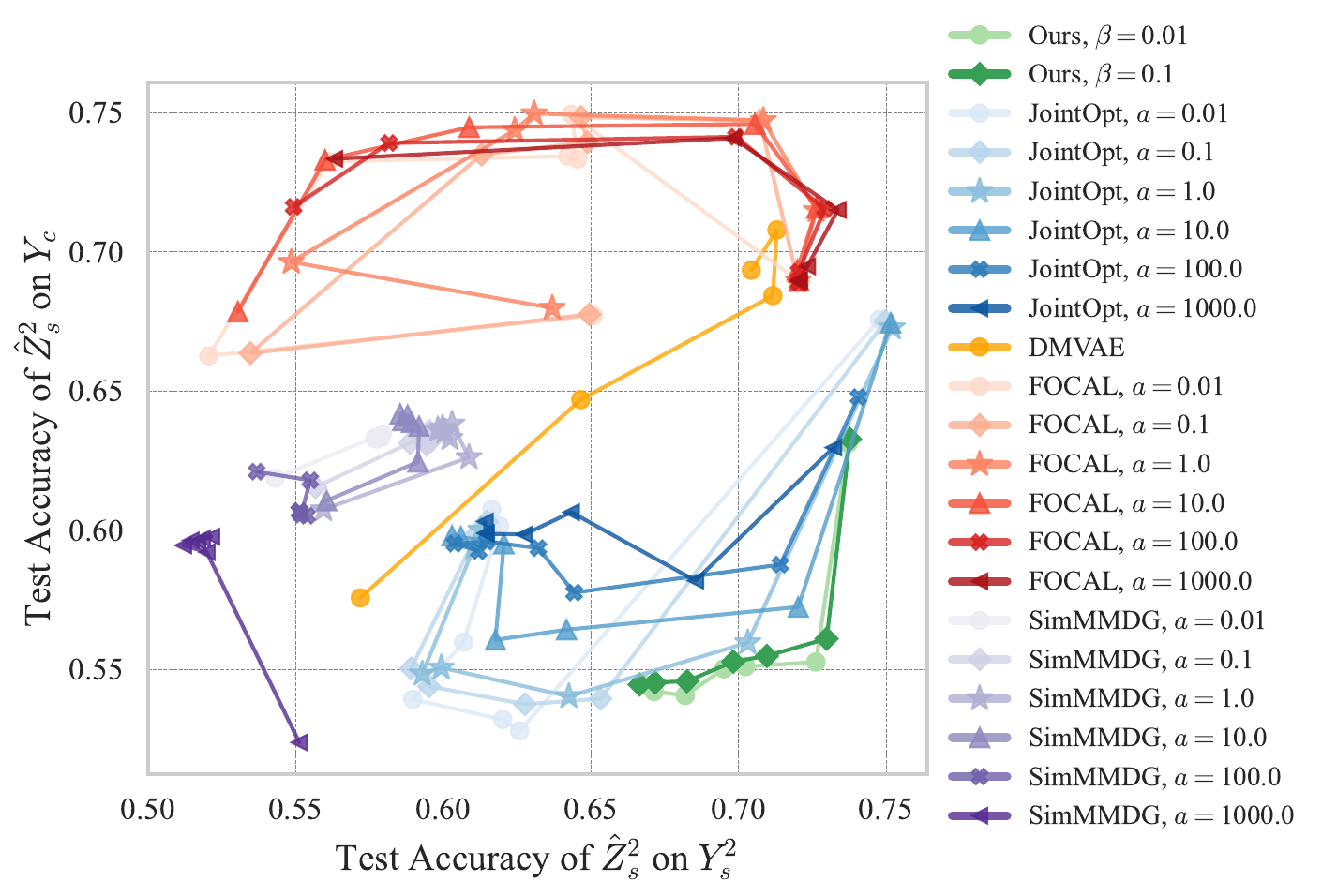}
        \centering
        \caption{\rebuttal{$\hat{Z}_s^2$ performance for \algo and baselines.}}
        \label{fig:app.specific.4}
    \end{subfigure}
    
    \caption{Performance of modality-specific representation $\hat{Z}_s^1$ and $\hat{Z}_s^2$ for different models.}
    \label{fig:app.specific}
\end{figure*}

\textbf{Analysis for different levels of entanglement.} We further examine the trade-off between expressivity and redundancy for the learned shared representations on synthetic data with varying levels of multimodal entanglement. In this scenario, some dimensions align across modalities with MNI attainable, while others remain entangled with MNI unattainable. Specifically, we split the 50-dimensional true shared latent $Z_c$ into two parts: the first 35 dimensions, denoted as $Z_c^{\text{mix}}$, and the last 15 dimensions, denoted as $Z_c^{\text{pure}}$. We then generate 100-dimensional observations $X^1$ and $X^2$, where the first 85 dimensions are generated under the same procedure as before, i.e. $X^1_{\text{mix}}=T_1 \cdot [Z_s^1,Z_c^{\text{mix}}]$ and $X^2_{\text{mix}}=T_2 \cdot [Z_s^2,Z_c^{\text{mix}}]$ followed by adding Gaussian noise and random dropout, while the last 15 dimensions are directly $Z_c^{\text{pure}}$.

We present the results on the synthetic data with a mixed level of entanglement in Figure~\ref{fig:app.mixing}.
Figure~\ref{fig:app.mixing.1} shows the test accuracy of $\hat{Z}_c^1$ on $Y_c$, $Y_s^1$, and $Y_s^2$ on the left axis, alongside the MLP weight ratio on $X^1$ and $X^2$ on the right axis, for varying values of $\beta$. 
To compute the MLP weight ratio, we first extract the diagonal of the inner product of MLP encoder's first layer weight matrix, then calculate the ratio between the average of the last 15 ``pure'' dimensions and the first 85 ``mixed'' dimensions. This ratio indicates how much attention the encoder gives to the ``pure'' versus ``mixed'' dimensions, with higher values signifying greater focus on the ``pure'' dimensions. Figure~\ref{fig:app.mixing.2} shows the corresponding test accuracy on $Y_c$ versus $Y_s^2$ in line plot. 

Using such data with mixed entanglement levels, \algo demonstrates a clear pattern where the learned information plateaus at certain $\beta$ values. As illustrated in Figure~\ref{fig:app.mixing}, the MLP weight ratio initially rises sharply from around 20 to nearly 80, then drops to 1 when $\beta$ becomes very large, indicating the collapse of the learned representations. With a large value of $\beta$ (while before the model collapses, i.e. $\beta \approx 10$), the encoders focus mainly on the ``pure'' dimensions. This is because a stronger information bottleneck constraint discourages the extraction of shared components from the ``mixed'' dimensions, which inevitably include modality-specific information due to unattainable MNI, and favors the ``pure'' shared components with no extra cost.

\begin{figure*}[ht]
    \centering
    \begin{subfigure}[b]{0.68\textwidth}
        \includegraphics[width=\textwidth]{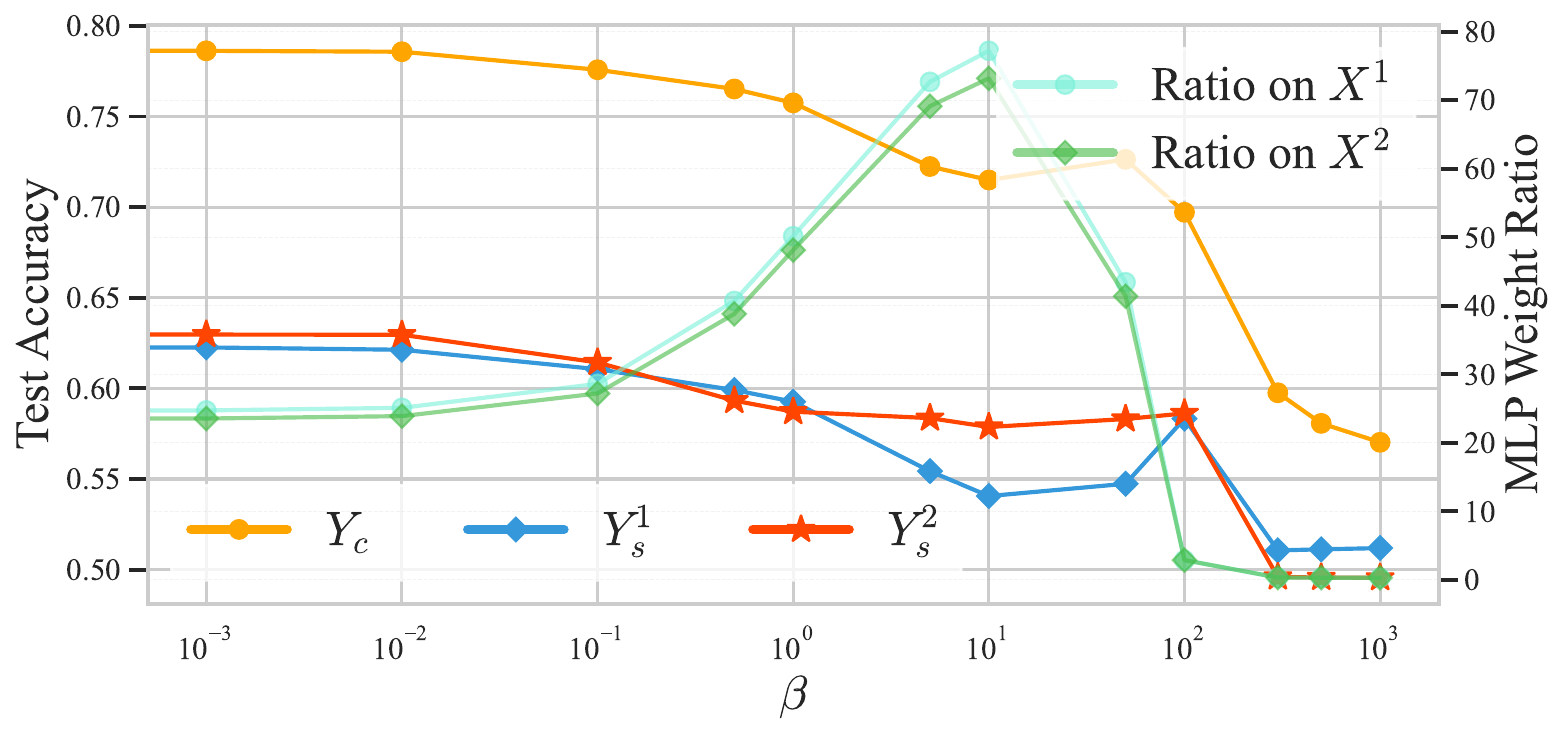}
        \caption{Test accuracy of $\hat{Z}_c^1$ and the MLP weight ratio with varying $\beta$.}
        \label{fig:app.mixing.1}
    \end{subfigure}
    \hfill 
    \begin{subfigure}[b]{0.31\textwidth}
        \includegraphics[width=0.95\textwidth]{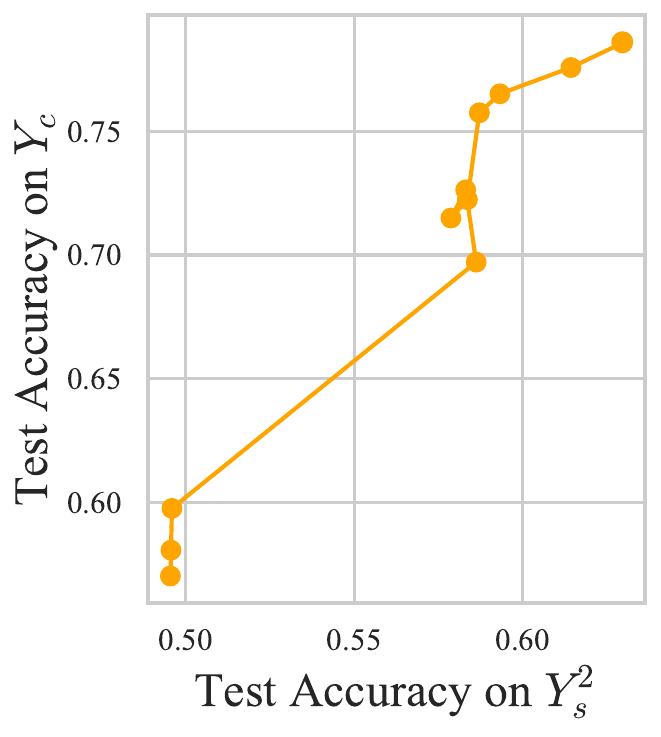}
        \centering
        \caption{Test accuracy on $Y_c$ versus $Y_s^2$.}
        \label{fig:app.mixing.2}
    \end{subfigure}
    \caption{Performance of $\hat{Z}_c$ on synthetic data with a mixed level of entanglement.}
    \label{fig:app.mixing}
\end{figure*}

\rebuttal{In addition, we evaluated our method on synthetic data with varying noise levels (i.e. variance of the Gaussian noise and rate of random dropout). As shown in Figure~\ref{fig:app.alternoise}, as the noise level decreases, the learned shared representation $\hat{Z}^c$ contains less modality-specific information under the same $\beta$, and the test accuracy on $Y_1$ and $Y_2$ decrease to close to 0.5 even with small $\beta$. Meanwhile, the frontier of test accuracy on $Y_c$ versus $Y_1$ shrinks towards the top left corner, indicating a decreasing expressivity-redundancy tradeoff and easier disentanglement. These trends align well with our theoretical analysis.
}

\begin{figure*}[ht]
    \centering
    \begin{subfigure}[b]{0.49\textwidth}
        \includegraphics[width=\textwidth]{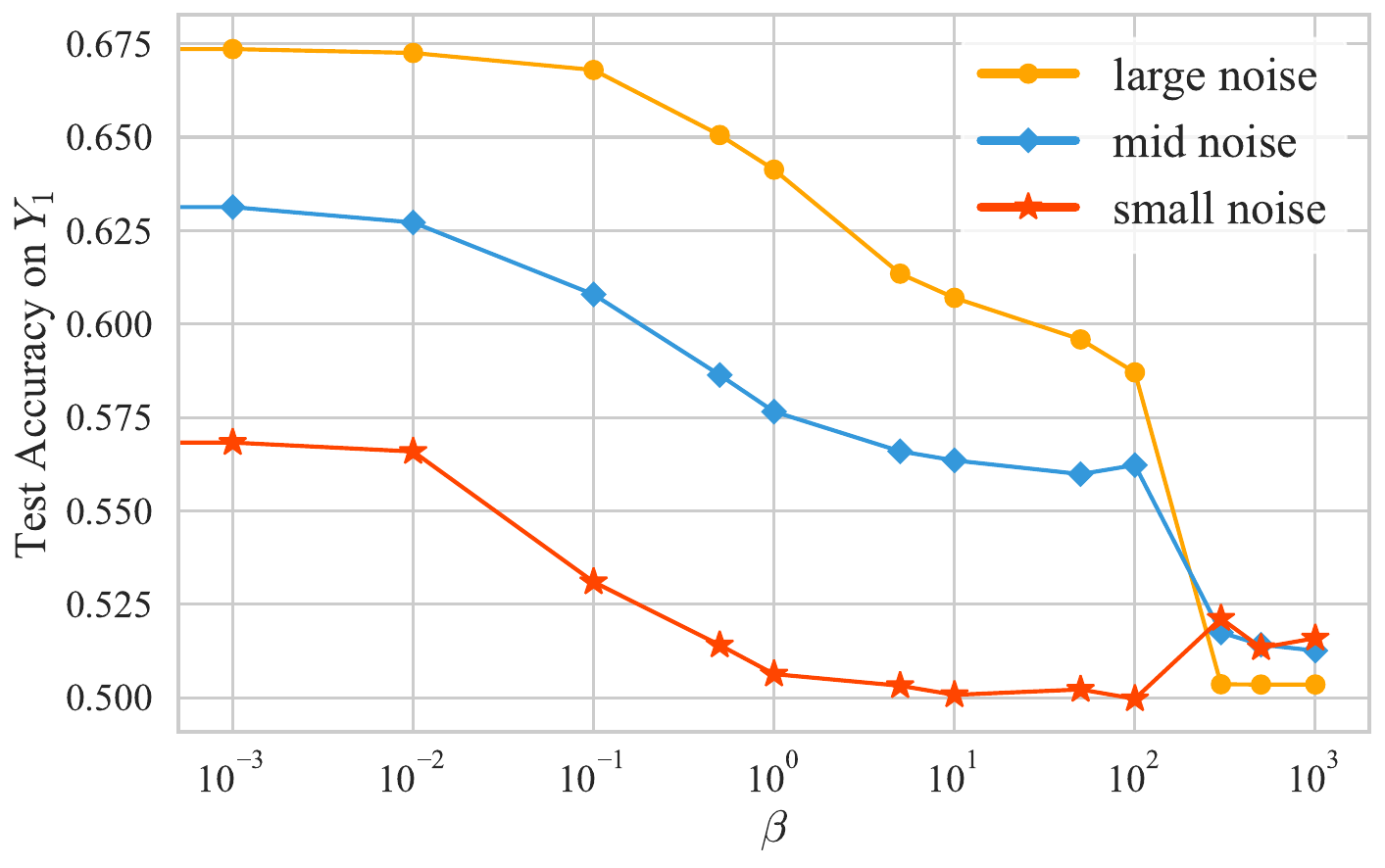}
        \caption{\rebuttal{Performance of $\hat{Z}_c$ on $Y_1$.}}
        \label{fig:app.alternoise.1}
    \end{subfigure}
    \hfill 
    \begin{subfigure}[b]{0.49\textwidth}
        \includegraphics[width=\textwidth]{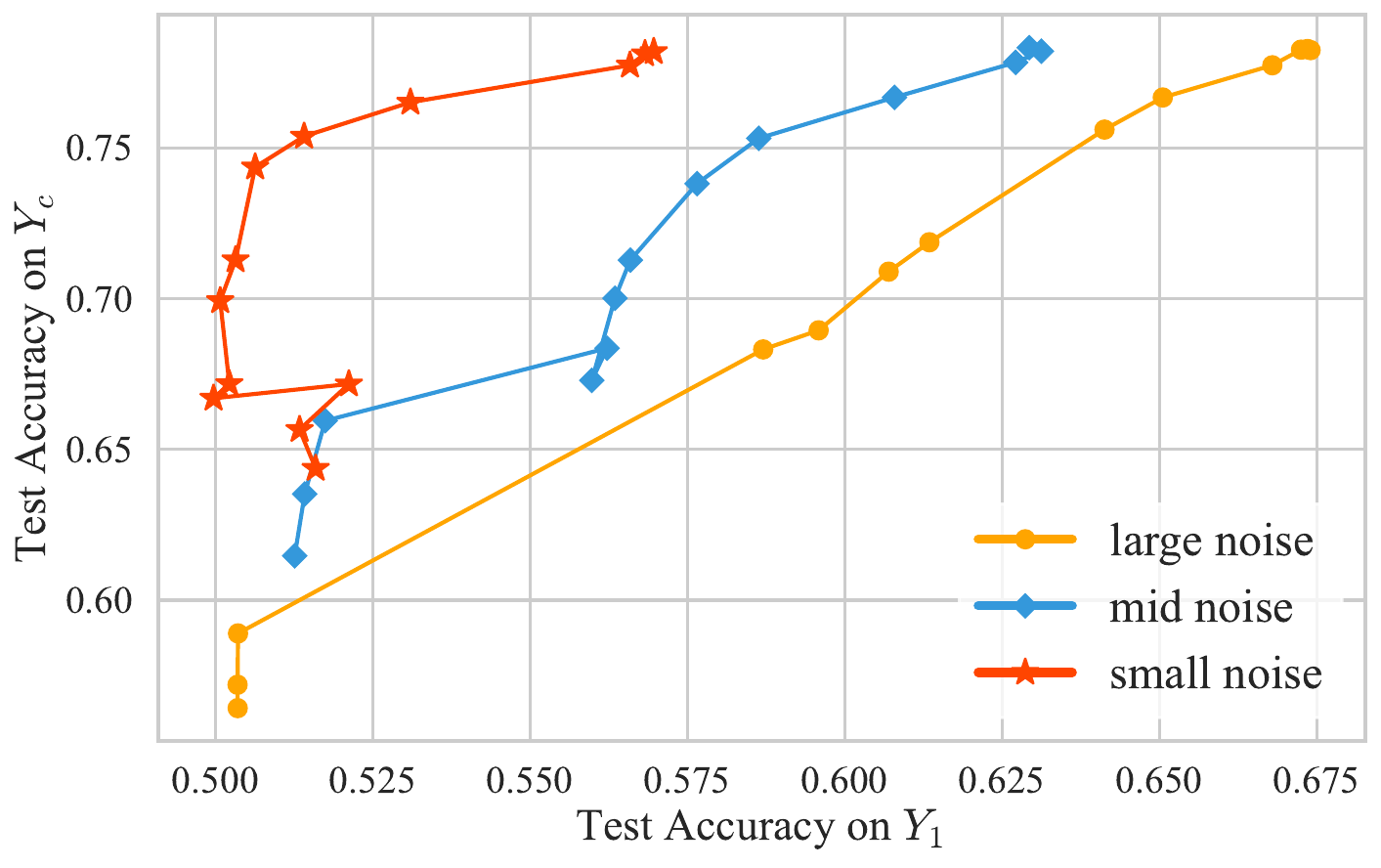}
        \centering
        \caption{\rebuttal{Test accuracy on $Y_c$ versus $Y_1$.}}
        \label{fig:app.alternoise.2}
    \end{subfigure}
        \begin{subfigure}[b]{0.49\textwidth}
        \includegraphics[width=\textwidth]{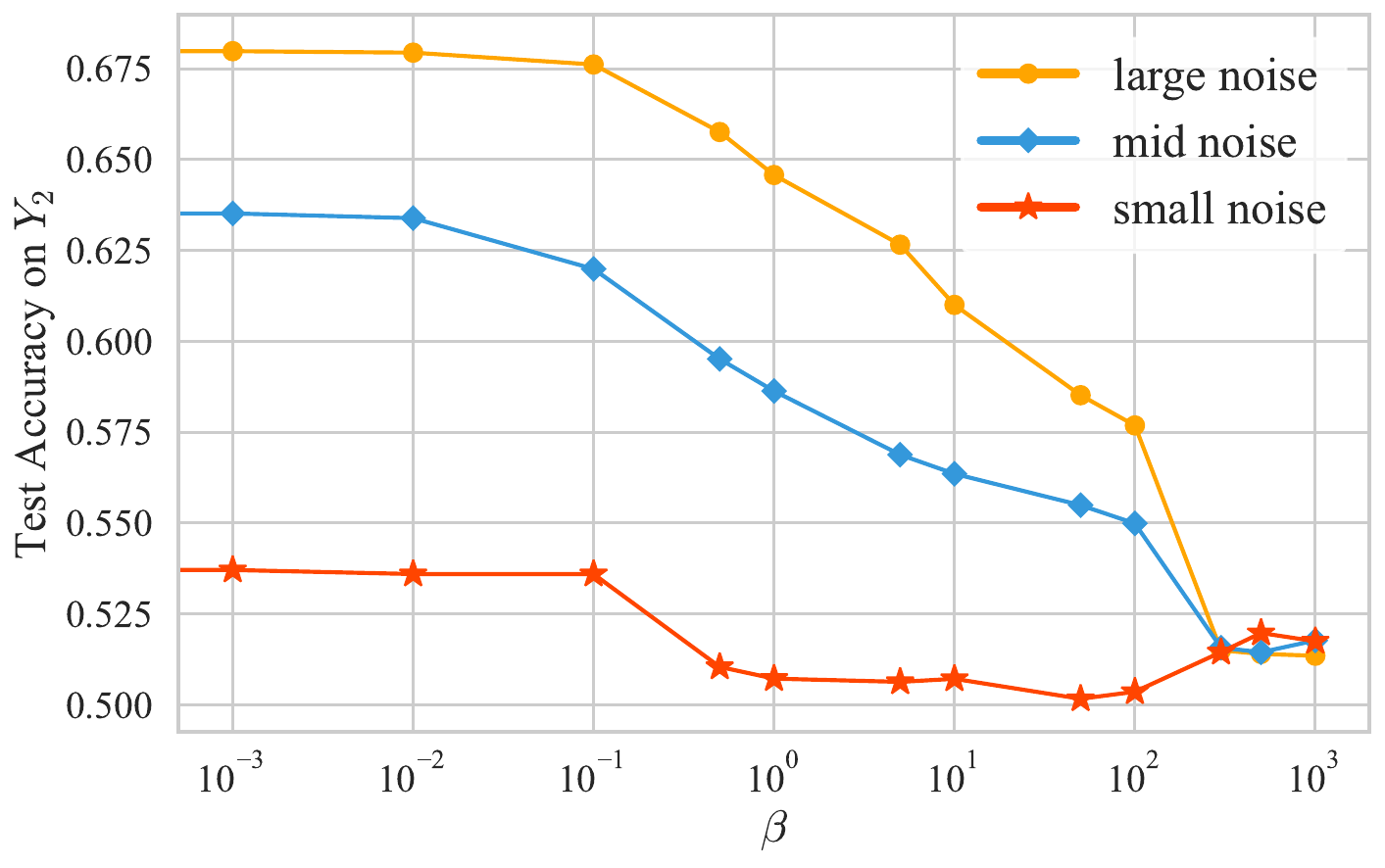}
        \caption{\rebuttal{Performance of $\hat{Z}_c$ on $Y_2$.}}
        \label{fig:app.alternoise.3}
    \end{subfigure}
    \hfill 
    \begin{subfigure}[b]{0.49\textwidth}
        \includegraphics[width=\textwidth]{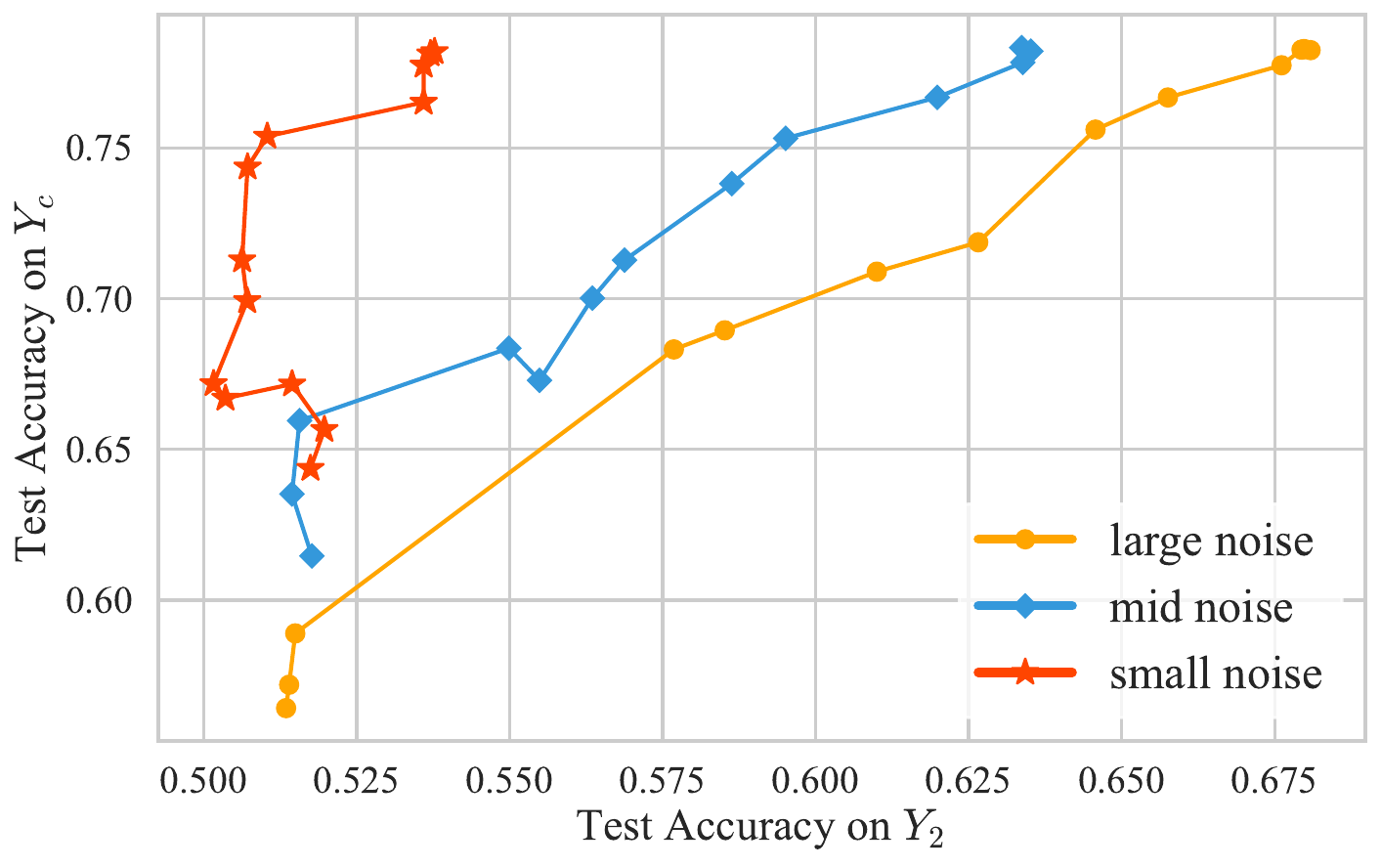}
        \centering
        \caption{\rebuttal{Test accuracy on $Y_c$ versus $Y_2$}}
        \label{fig:app.alternoise.4}
    \end{subfigure}
    
    \caption{\rebuttal{Performance of shared representation $\hat{Z}_c$ learned with different noise level. As the noise level decreases, $\hat{Z}^c$ contains less modality-specific information, and the frontier of test accuracy on $Y_c$ versus $Y_1$ shrinks toward the top left, reflecting reduced expressivity-redundancy tradeoff.}}
    \label{fig:app.alternoise}
\end{figure*}

\subsection{MultiBench.}
\label{app.results.multibench}
\rebuttal{\textbf{Dataset description.} We utilize the real-world multimodal benchmark from MultiBench~\citep{liang2021multibench}. We provide a brief description of each dataset as below.
\begin{itemize}[leftmargin=*]
    \item \textbf{MIMIC} is a large-scale dataset in healthcare, which contains over 40,000 ICU patient records. The two modalities are the time series patient records measured by hours and tabular static data (age, gender, etc.). The binary task label is on whether the patient fits any ICD-9 code in group 7.
    \item \textbf{MOSEI} is a sentence-level multimodal sentiment and emotion benchmark with 23,000 monologue videos. The two modalities are the vision and text modalities, and the label is on whether the sentiment is positive or negative corresponding to each video.
    \item \textbf{MOSI} is a similar multimodal sentiment analysis dataset based on 2,199 YouTube video clips, with vision and text modalities and the sentiment label.
    \item \textbf{UR-FUNNY} is a humor detection dataset in human speech, which consists of samples from TED talk videos, with vision and text modalities and a binary label on whether there is humor or not.
    \item \textbf{MUSTARD} is a corpus of 690 videos from popular TV shows for sarcasm detection. The vision and text modalities are utilized to classify a binary label of sarcasm.
\end{itemize}
}

\textbf{Experimental details.} We follow the same dataset splitting, and utilize the same encoder architecture and pre-extracted features as \cite{liang2024factorized}.
All models use representations (or concatenation of them, for FactorCL-proj, FOCAL, JointOpt and \algo) with the same dimensionality. 
We set the latent dimension to 300 across all datasets, except for MUSTARD, where it is set to 100 due to the smaller dataset size, and for MIMIC, where we use 360 to align with the output dimension of the GRU encoders. For FactorCL, we use their default hyperparameter settings. For other methods, hyperparameters are tuned based on validation set performance. For \algo, we use $\beta = 1.0$ and $\lambda = 10^{-3}$ for all datasets, except for MOSI where $\beta = 0.01$. For FOCAL and JointOpt, we use $a = 1$ and $\lambda = 10^{-3}$ across all datasets. We report the mean and standard deviation of the linear probing accuracy on prediction labels from the test set over 3 random seeds.

\begin{table*}[!htb]
    \centering
    \caption{Prediction accuracy (\%) of the representations learned by different methods on MultiBench datasets and standard deviations over 3 random seeds.}
    \label{table.app.multibench}
    \resizebox{0.9\textwidth}{!}{%
\begin{tabular}{llllll}
\toprule
Dataset & MIMIC & MOSEI & MOSI & UR-FUNNY & MUSTARD \\ \midrule 
CLIP & 64.97 (0.60) & 76.87 (0.45) & 64.24 (0.88) & 62.73 (0.92) & 56.04 (4.19) \\
FactorCL-emb    & 65.25 (0.45) & 71.80 (0.64) & 62.97 (0.81) & 63.29 (2.07) & 56.76 (4.66) \\
FactorCL-proj   & 59.43 (1.70) & 74.61 (1.65) & 56.02 (1.26) & 61.25 (0.47) & 55.80 (2.18) \\
FOCAL (shared) & 62.69 (1.46) & 75.93 (0.23) & 62.10 (0.44) & 62.38 (0.58) & 54.83 (3.02) \\
FOCAL (specific) & 62.13 (1.49) & 75.41 (0.54) & 61.61 (1.27) & 63.17 (0.96) & 58.21 (2.21) \\
FOCAL (both) & 64.42 (0.34) & 76.77 (0.51) & 63.65 (1.09) & 62.98 (1.52) & 54.35 (0.00) \\
JointOpt (shared) & 63.01 (0.59) & 76.69 (0.28) & 65.02 (1.96) & 62.51 (1.02) & 54.83 (4.82) \\
JointOpt (specific) & 65.81 (0.49) & 74.40 (0.94) & 53.89 (0.80) & 62.13 (0.69) & 57.73 (4.12) \\
JointOpt (both) & 66.11 (0.64) & 76.71 (0.14) & 64.24 (1.75) & 63.58 (1.45) & 56.52 (2.61) \\ \midrule
\algo (shared)   & 63.16 (0.48) & 76.94 (0.22) & \textbf{65.16} (0.81) & 64.14 (1.53) & 54.11 (1.51) \\
\algo (specific) & 65.73 (0.09) & 75.99 (0.60) & 51.70 (0.72) & 60.27 (1.28) & \textbf{61.60} (2.61) \\
\algo (both)     & \textbf{66.44} (0.31) & \textbf{77.45} (0.06) & 65.11 (0.80) & \textbf{64.24} (1.54) & 56.52 (2.18) \\ \bottomrule
\end{tabular}
}
\end{table*}

\textbf{Additional results.} As a complement to Table~\ref{table.multibench}, we present results of FOCAL and JointOpt for the learned shared and modality-specific representations, both combined (i.e. ``both'') and separately in Table~\ref{table.app.multibench}. 
Although they exhibit a similar trend as \algo, where shared representations are more important for MOSI and specific representations are crucial for MUSTARD, the distinction is less pronounced. Moreover, the overall performance is inferior to that of \algo, indicating that \algo achieves a better balance between coverage and disentanglement in the learned representations.

\rebuttal{\textbf{Ablation study.} We conduct an ablation study of \algo on the MultiBench datasets to examine the impact of the hyperparameters $\beta$ and $\lambda$ in Table~\ref{table.app.multibench_ablation}. As $\beta$ increases, the shared representations preserve only the most closely shared information and discard other features. Therefore, on datasets where the loosely shared information contributes to the label (eg., UR-FUNNY, MOSI, MIMIC), the prediction accuracy decreases with increasing $\beta$. For the MUSTARD dataset where there exist contradictions between language and video~\citep{liang2024factorized}, the accuracy of the shared representations increases with very high $\beta$ since the contradicted parts are discarded and only clear signals remain. Similarly, as $\lambda$ increases, the independence constraint between the shared and specific representations is emphasized more. Thus, the modality-specific representations contains less information, potentially related to the label, making their prediction accuracy decrease. }

\begin{table*}[!htb]
    \centering
    \caption{\rebuttal{Prediction accuracy (\%) of the representations learned by \algo with different $\beta$ and $\lambda$ and standard deviations over 3 random seeds.}}
    \label{table.app.multibench_ablation}
    \resizebox{0.65\textwidth}{!}{%
\begin{tabular}{llllll}
\toprule
Dataset & $\beta$ & $\lambda$ & Shared & Specific & Both \\ \midrule 
& 0.01 & 0.001 & 64.08 (1.48) & 62.04 (0.66) & 64.43 (1.14) \\
& 1.0 & 0.0 & 64.14 (1.53) & 61.81 (0.08) & 64.24 ( 0.96) \\
UR-FUNNY & 1.0 & 0.001 & 64.14 (1.53) & 60.27 (1.28) & 64.24 (1.54) \\
& 1.0 & 0.1 & 64.14 (1.53) & 60.46 (0.32) & 64.46 (0.74) \\ 
& 100.0 & 0.001 & 62.70 (0.70) & 58.76 (0.35) & 61.72 ( 0.83) \\\midrule
& 0.0001 & 0.001 & 65.16 (0.60) & 51.70 (0.77) & 64.97 (0.34) \\
& 0.01 & 0.0 & 65.16 (0.81) & 51.36 (1.37) & 64.97 (0.50) \\
MOSI & 0.01 & 0.001 & 65.16 (0.81) & 51.70 (0.72) & 65.11 (0.80) \\
& 0.01 & 0.1 & 65.16 (0.81) & 50.53 (1.31) & 64.72 (0.83) \\
& 1.0 & 0.001 & 63.31 (0.70) & 50.87 (1.15) & 63.95 (0.77) \\ \midrule
& 0.01 & 0.001 & 62.73 (0.26) & 66.05 (0.39) & 66.25 (0.36) \\
& 1.0 & 0.0 & 63.16 (0.48) & 65.74 (0.20) & 66.26 (0.35) \\
MIMIC & 1.0 & 0.001 & 63.16 (0.48) & 65.73 (0.09) & 66.44 (0.31) \\
& 1.0 & 0.1 & 63.16 (0.48) & 65.37 (0.48) & 66.38 (0.19) \\ 
& 100.0 & 0.001 & 61.94 (0.29) & 62.82 (0.85) & 62.70 (0.63) \\\midrule

& 0.01 & 0.001 & 76.68 (0.21) & 75.71 (0.43) & 77.00 (0.14) \\
& 1.0 & 0.0 & 76.94 (0.22) & 75.97 (0.52) & 77.54 (0.19) \\
MOSEI & 1.0 & 0.001 & 76.94 (0.22) & 75.99 (0.60) & 77.45 (0.06) \\
& 1.0 & 0.1 & 76.94 (0.22) & 75.83 (0.39) & 77.56 (0.11) \\ 
& 100.0 & 0.001 & 77.20 (0.16) & 75.35 (0.72) & 77.26 (0.27) \\\midrule

& 0.01 & 0.001 & 56.28 (1.90) & 58.70 (1.57) & 55.80 (2.05) \\
& 1.0 & 0.0 & 54.11 (1.51) & 60.14 (0.59) & 56.28 (4.16) \\
MUSTARD & 1.0 & 0.001 & 54.11 (1.51) & 61.60 (2.61) & 56.52 (2.18) \\
& 1.0 & 0.1 & 54.11 (1.51) & 61.35 (2.08) & 57.49 (1.49) \\ 
& 100.0 & 0.001 & 57.25 (3.13) & 61.84 (4.78) & 57.97 (2.05) \\
\bottomrule
\end{tabular}
}
\end{table*}

\subsection{High-Content Drug Screening}
\label{app.results.drug}
\textbf{Experimental details.} Following the setup in \citet{wang2023removing}, we utilize Mol2vec~\citep{jaeger2018mol2vec} to featurize the molecular structures into 300-dimensional feature vectors. For both molecular structures and phenotypes, we employ 3-layer MLP encoders with a hidden dimension of 2560. The dimensionality for shared and modality-specific representations is set to 32 across all methods and datasets. We tune $\beta$ based on validation set performance, with $\beta=5.0$ for RXRX19a and $\beta=1.0$ for LINCS, and set $\lambda=0.01$ for both datasets. For FOCAL and JointOpt, we set $a=1.0$ and $\lambda=0.01$. For DMVAE, we set the coefficient of the KL divergence term as $10^{-5}$. In addition, as in \citet{lee2021private}, we introduce an InfoNCE loss for the shared representations to DMVAE in the high-content drug screening experiments to enhance its retrieving accuracy. We report the mean and standard deviation of all results on the test set over 3 random seeds.

\textbf{Counterfactual generation.} To further evaluate the learned modality-specific representations, we use different combinations of the shared and modality-specific representations to generate counterfactual samples, i.e. predicting phenotype of a drug on a different cell with its molecule shared latent but the phenotype specific latent of a different cell perturbed by other drugs. Since the factual observations of counterfactual generation are unavailable, we measure the performance distributional-wise, introducing the difference in Frechet distance (Diff-FD) as a metric, i.e. Diff-FD-c measuring the gain of using the correct batch of molecules, thus the information of the shared latent; Diff-FD-s measures the gain of using the correct batch of cells, thus the information of the specific latent.

To be specific, we train decoders with the factual combination of the shared latents from molecular structures and the modality-specific latents from phenotypes on the training set. We use ``val recon'' to denote the samples generated using the representations attained from validation set molecules and validation set phenotypes as input. Similarly, ``test recon'' refers to the samples generated using test set molecules paired with test set phenotypes, while ``counterfactual'' represents those generated using test set molecules paired with validation set phenotypes. Formally, Diff-FD-c is defined as FD(val recon; test recon) - FD(counterfactual; test recon), and Diff-FD-s as FD(val recon; test recon) - FD(counterfactual; val recon).
Note that a non-informative specific latent can lead to very high Diff-FD-c, and an overly informative modality-specific latent can lead to very high Diff-FD-s, thus we take both metrics into consideration together. 

As shown in Table~\ref{table.disen}, \algo outperforms JointOpt in both metrics and surpasses FOCAL in Diff-FD-s. While FOCAL shows a high Diff-FD-s, it learns overly informative modality-specific latents, which contains significant shared information, as highlighted in the simulation study results (i.e. the modality-specific representations of FOCAL have high accuracy on $Y_c$ in Figure~\ref{fig:app.specific}), leading to a low value of Diff-FD-c. In contrast, \algo demonstrate better effectiveness in capturing both shared and modality-specific information, while maximizing the separation between them, as indicated by its high values in both metrics.

\begin{table*}[!htb]
\centering
    \caption{Results on counterfactual generation.}
    \label{table.disen}
    \resizebox{0.7\textwidth}{!}{
\begin{tabular}{lllllllll} \toprule
Dataset   & \multicolumn{2}{c}{RXRX19a}                             & \multicolumn{2}{c}{LINCS}                                 \\
Metric    & Diff-FD-c   & Diff-FD-s & Diff-FD-c    & Diff-FD-s    \\ \midrule
FOCAL   & 15.30(0.85) & \textbf{22.89}(0.28) & 0.246(0.039) & \textbf{0.765}(0.027) \\
JointOpt & 20.34(0.47) & 9.40(0.39)  & 0.248(0.029) & 0.333(0.050) \\
\algo   & \textbf{20.92}(0.71) & 11.36(0.36) & \textbf{0.277}(0.031) & 0.401(0.085) \\ \bottomrule
\end{tabular}
}
\end{table*}

\section{\rebuttal{Algorithm Pseudocode}}
\label{sec.app.pseudo}

\begin{algorithm}[htb!]
\small
\caption[]{PyTorch-based pseudocode for \algo}
\label{alg:main_algorithm}
\begin{algorithmic}[1]
\State \textbf{Notations: } $f_c^1$, $f_c^2$ represents the backbone encoder network for the shared representations. $f_s^1$, $f_s^2$ represents the backbone encoder network for the modality-specific representations. $\beta$ and $\lambda$ are the Lagrangian weights for the shared and modality-specific objectives respectively. $\texttt{vMF}$ represents von Mises-Fisher sampling.
\State \Comment{Step 1: Learn shared representations}
\For {minibatch $(x^1, x^2)$ in $\texttt{dataloader}$}

    \State $\hat{z}_c^1 \sim \texttt{vMF}(f_c^1(x^1), \kappa)$, $\hat{z}_c^2 \sim \texttt{vMF}(f_c^2(x^2), \kappa)$ 
    \State \Comment{Compute InfoNCE loss for shared representations}
    \State $\mathcal{L}_{\text{InfoNCE}}^c = \texttt{infonce\_loss}(\hat{z}_c^1, \hat{z}_c^2)$
    \State \Comment{Step 1: Objective for shared representations}
    \State $\mathcal{L}_{c} =\mathcal{L}_{\text{InfoNCE}}^c - \beta \cdot \mathbb{E}[f_c^1(x^1)^\top f_c^2(x^2)]$
    \State $\mathcal{L}_{c}$.backward() 
    \State step\_1\_optimizer.step()
    \EndFor
\State \Comment{Step 2: Learn modality-specific representations}
\For {minibatch $(x^1, x^2)$ in $\texttt{dataloader}$}
    \State Generate augmented views of the data $(x_a^1, x_a^2)$ and $(x_b^1, x_b^2)$
    \State \Comment{Calculate the learned shared representations}
    \State $\hat{z}_{c,a}^1 \leftarrow f_c^1(x_a^1)$, $\hat{z}_{c,a}^2 \leftarrow f_c^2(x_a^2)$, $\hat{z}_{c,b}^1 \leftarrow f_c^1(x_b^1)$, $\hat{z}_{c,b}^2 \leftarrow f_c^2(x_b^2)$ 
    \State \Comment{Encode modality-specific representations conditioned on shared components}
    \State $\hat{z}_{s,a}^1 \gets f_s^1(x_a^1, \hat{z}_{c,a}^1)$, $\hat{z}_{s,a}^2 \gets f_s^2(x_a^2, \hat{z}_{c,a}^2)$, $\hat{z}_{s,b}^1 \gets f_s^1(x_b^1, \hat{z}_{c,b}^1)$, $\hat{z}_{s,b}^2 \gets f_s^2(x_b^2, \hat{z}_{c,b}^2)$
    \State \Comment{Compute modality-specific losses}
    \State $\mathcal{L}_{\text{InfoNCE}}^s = \texttt{infonce\_loss}(\texttt{concat}[\hat{z}_{s,a}^1, \hat{z}_{c,a}^2], \texttt{concat}[\hat{z}_{s,b}^1, \hat{z}_{c,b}^2]) + \texttt{infonce\_loss}(\texttt{concat}[\hat{z}_{s,a}^2, \hat{z}_{c,a}^1], \texttt{concat}[\hat{z}_{s,b}^2, \hat{z}_{c,b}^1])$
    \State $\mathcal{L}_{\text{orth}} = \frac{1}{2} \sum_{x \in \{a,b\}} \sum_{i \in \{1,2\}} \texttt{orthogonal\_loss}(\hat{z}_{s,x}^i, \hat{z}_{c,x}^i)$
    \State $\mathcal{L}_{s} = \mathcal{L}_{\text{InfoNCE}}^s + \lambda \cdot \mathcal{L}_{\text{orth}}$
    \State $\mathcal{L}_{s}$.backward() 
    \State step\_2\_optimizer.step()
\EndFor
\end{algorithmic}
\end{algorithm}

\section{\rebuttal{Notations}}
\label{sec.app.notation}

\begin{table*}[!htb]
    \centering
    \caption{Notations and mathematical definitions}
    \label{table.notation}
    \resizebox{1.\textwidth}{!}{
    \begin{tabular}{lp{13cm}} 
    Variable(s) & Definition \\ \toprule
    {\color{gray} \small \textit{Variables of Data Generation}} \\
    $\{X^1, X^2\}$ & Paired observation or data from the two modalities (such as image and text pairs) \\
    $Z_c$ & Shared latent representation capturing common information \\
    $Z^i_s$ & Modality-specific latent representations for observations $X_i$\\ \hline
    {\color{gray} \small \textit{Inferred representations}} \\
    $\hat{Z}_c^i$ & Shared latent representation inferred only from $X_i$ i.e. $\hat{Z}_c^i \sim p(\cdot | X^i)$ \\ 
    $\hat{Z}^i_s$ & Modality-specific latent representations inferred jointly from observations $X_i$ and the inferred shared representations $\hat{Z}_c^i$ i.e. $\hat{Z}_s^i \sim p(\cdot | X^i, \hat{Z}_c^i)$\\ 
    $\hat{Z}_c^{i*}$ & Optimal shared latent representation inferred only from $X_i$ \\
    $\hat{Z}_s^{i*}$ & Optimal modality-specific latent representation inferred jointly from observations $X_i$ and the optimal shared representations $\hat{Z}_c^{i*}$ \\
    \hline
    {\color{gray} \small \textit{Optimization Parameters}} \\
    $\delta_c$ & Small tolerance to account for non-attainability of MNI for the shared representation\\
    $\delta_s$ & Small tolerance for coverage-disentanglement trade-off for modality-specific latent?\\
    $\beta$ & Lagrangian coefficient controlling trade-offs in optimization for the shared representation \\
    $\lambda$ & Lagrangian coefficient controlling trade-offs in optimization for the modality-specific representation \\
    $L_c^i$ & Objective to be maximized for learning the shared representation using $X_i$ \\ 
    $L_s^i$ & Objective to be maximized for the learning the modality-specific representation for modality $i$ \\ 
    \hline
    {\color{gray} \small \textit{Standard Notations and Definitions}} \\
    MNI & Minimum Necessary Information\\
    $I(\cdot;\cdot)$ & Mutual information between variables \\ 
    $\mathcal{N}(\mu, \Sigma)$ & Multivariate normal distribution \\ 
    vMF$(\mu, \kappa)$ & Von Mises-Fisher distribution used for modeling representations \\ 
    \bottomrule
    \end{tabular}
    }
    \end{table*}

\end{document}